\def\eqref#1{equation~\ref{#1}}
\def\1{\bm{1}}
\DeclareMathAlphabet{\mathsfit}{\encodingdefault}{\sfdefault}{m}{sl}
\SetMathAlphabet{\mathsfit}{bold}{\encodingdefault}{\sfdefault}{bx}{n}
\newcommand{\R}{\mathbb{R}}
\def\ALGNAME{GeoLoRA}
\def\ALGNAMELONG{Geometric Low-Rank Adaptation}
\newtheorem{proposition}{Proposition}
\newcommand{\ssnote}[1]{\textcolor{teal}{[{SS}] $\diamondsuit$ #1}}
\title{GeoLoRA: Geometric integration for parameter efficient fine-tuning\footnotemark[1]}
\newcommand{\cmark}{\ding{51}}%
\newcommand{\xmark}{\ding{55}}%
\newcommand{\set}[1]{\left\lbrace#1\right\rbrace}
\newcommand{\augU}{{\widehat{U}}}
\newcommand{\augS}{{\widehat{S}}}
\newcommand{\newU}{{\widetilde{U}}}
\newcommand{\newV}{{\widetilde{V}}}
\newcommand{\augV}{{\widehat{V}}}
\newcommand{\augWr}{{\widehat{W}^r}}
\newcommand{\norm}[1]{\left\lVert#1\right\rVert}
\newcommand{\rom}[1]{\uppercase\expandafter{\romannumeral #1\relax}}
\newcommand{\landauO}{\mathcal{O}}
\newtheorem{theorem}{Theorem}
\newtheorem{lemma}{Lemma}
\newtheorem{assumption}{Assumption}
\newcommand{\inner}[1]{\left< #1 \right>}
\newcommand{\fx}{\mathbf{x}}
\newcommand{\fz}{\mathbf{z}}
\author{%
  Steffen Schotth\"ofer \\
     Computer Science and Mathematics Division,\\
  Oak Ridge National Laboratory, \\
  Oak Ridge, TN, USA\\
  \texttt{schotthoefers@ornl.gov} \\
  \And 
   Emanuele Zangrando \\
   School of Mathematics, \\
  Gran Sasso Science Institute, \\
  L'Aquila, Italy \\
   \texttt{emanuele.zangrando@gssi.it
} \\
  \And
  Gianluca Ceruti\\
  Department of Mathematics, \\
  University of Innsbruck, \\
  Innsbruck,  Austria \\
   \texttt{gianluca.ceruti@uibk.ac.at
} 
      \And
  Francesco Tudisco \\
School of Mathematics and Maxwell Institute, \\
University of Edinburgh, 
Edinburgh, UK; \\
School of Mathematics, \\
Gran Sasso Science Institute, 
  L'Aquila, Italy \\
  \texttt{f.tudisco@ed.ac.uk}     
  \And
  Jonas Kusch \\
  Department of Data Science, \\
Norwegian University of Life Sciences, \\
  Ås, Norway\\
  \texttt{jonas.kusch@nmbu.no
} \\}
\begin{document}

\maketitle

\begin{abstract}
Low-Rank Adaptation (LoRA) has become a widely used method for parameter-efficient fine-tuning of large-scale, pre-trained neural networks. However, LoRA and its extensions face several challenges, including the need for rank adaptivity, robustness, and computational efficiency during the fine-tuning process. We introduce \ALGNAME{}, a novel approach that addresses these limitations by leveraging dynamical low-rank approximation theory. \ALGNAME{} requires only a single backpropagation pass over the small-rank adapters, significantly reducing computational cost as compared to similar dynamical low-rank training methods and making it faster than popular baselines such as AdaLoRA. This allows \ALGNAME{} to efficiently adapt the allocated parameter budget across the model, achieving smaller low-rank adapters compared to heuristic methods like AdaLoRA and LoRA, while maintaining critical convergence, descent, and error-bound theoretical guarantees. The resulting method is not only more efficient but also more robust to varying hyperparameter settings. We demonstrate the effectiveness of \ALGNAME{} on several state-of-the-art benchmarks, showing that it outperforms existing methods in both accuracy and computational efficiency.
\footnotetext[1]{{
This manuscript has been authored by UT-Battelle, LLC under Contract No. DE-AC05-00OR22725 with the U.S. Department of Energy. The United States Government retains and the publisher, by accepting the article for publication, acknowledges that the United States Government retains a non-exclusive, paid-up, irrevocable, world-wide license to publish or reproduce the published form of this manuscript, or allow others to do so, for United States Government purposes. The Department of Energy will provide public access to these results of federally sponsored research in accordance with the DOE Public Access Plan(\url{http://energy.gov/downloads/doe-public-access-plan}).}
\renewcommand{\thefootnote}{\arabic{footnote}}}
\end{abstract}

\section{Introduction}
Large-scale pre-trained and fine-tuned models have significantly advanced the performance of deep learning models in assisting various natural language processing and computer vision tasks. However, their deployment often incurs substantial computational and memory costs due to the enormous number of trainable parameters. To address this, parameter-efficient fine-tuning (PEFT) methods have been developed, which modify a subset of model parameters while keeping the rest frozen. Among these, low-rank adaptation (LoRA) \citep{hu2021lora} has emerged as a prominent approach, allowing efficient fine-tuning by injecting low-rank updates into pre-trained model weights. Despite its efficiency, LoRA faces limitations in adaptively distributing the parameter budget across weight matrices, and its performance is sensitive to the choice of hyperparameters \citep{zhang2023adalora}.

Recent works, such as AdaLoRA \citep{zhang2023adalora},  DyLoRA \citep{valipour2023dylora}, and ReLoRA \citep{lialin2023relora}, have attempted to improve LoRA by dynamically adjusting the rank of the low-rank adapters during training. While these methods enhance parameter efficiency, they are constructed as simultaneous descent methods and therefore do not guarantee convergence to optimal low-rank adapters. Methods that guarantee convergence to optimal adapters exist \citep{Schothoefer_2022,schotthöfer2024federateddynamicallowranktraining,zangrando2023rank}. However, these require several gradient tapes per iteration and, therefore, have an intrinsically higher run time per training step.

In this paper, we introduce \ALGNAME{} (\ALGNAMELONG), a novel dynamical low-rank training method for parameter-efficient fine-tuning. \ALGNAME{} leverages the dynamical low-rank approximation theory from matrix differential equations \citep{koch2007dynamical, ceruti2021rank,ceruti2023parallelrankadaptiveintegratordynamical} and exploits the intrinsic low-rank geometry of the weight matrices to allocate the parameter budget across the model adaptively. 
This dynamic allocation is facilitated by a novel training strategy that updates the low-rank factors in parallel, contrasting with other recent methods based on dynamical low-rank approximation theory \citep{Schothoefer_2022,schotthöfer2024federateddynamicallowranktraining,zangrando2023rank}, which require individual gradient tapes computed sequentially per each low-rank factor. 
Instead, \ALGNAME{} requires a single backprop pass over the small-rank adapters, limiting its computational cost and making it faster than popular baselines such as AdaLoRA \citep{zhang2023adalora}. Moreover, \ALGNAME{} maintains the exact orthonormality of the low-rank factors, avoiding the ill-conditioning issues associated with well-known high-curvature challenges arising in low-rank optimization \citep{Schothoefer_2022}.

Through extensive experiments on the GLUE benchmark, Vision Transformers, and Stable Diffusion, we show that \ALGNAME{} outperforms existing PEFT methods both in terms of accuracy and computational efficiency.

Along with the experimental evaluation, we provide a thorough convergence analysis, showing convergence to stationary points under standard assumptions, and a detailed error-bound analysis, demonstrating that \ALGNAME{}'s low-rank adaptation remains close to its full-rank counterpart throughout the training process. This robustness is critical in ensuring that the fine-tuning process does not diverge, even under challenging conditions. 

Overall, the main contributions of this work are as follows:
\begin{itemize}[leftmargin=*,noitemsep,topsep=0pt]
    \item We propose \ALGNAME{}, a dynamical low-rank training method for low-rank adapters that leverages low-rank geometry and matrix differential equations to achieve adaptive parameter allocation.
    \item \ALGNAME{} only requires a single gradient tape and one small-size SVD per training step, making it competitive with existing baselines such as AdaLoRA. 
    \item We provide a convergence analysis and error bound guarantees for \ALGNAME{}, ensuring robust training behaviour and convergence to a stationary point.
    \item Extensive experimental results demonstrate the superior performance of \ALGNAME{} over existing methods, with improved accuracy and training speed.
\end{itemize}

\section{Related Work}
The growing size of neural networks has led to significant computational and memory challenges during both training and deployment. Several strategies have been proposed to mitigate these issues, including sparsification \citep{guo2016dynamic, molchanov2017pruning, he2017channel} and quantization \citep{wu2016quantized, courbariaux2016binarized}. Among these, layer factorization has gained traction as an effective approach to reducing memory requirements. Layer factorization techniques have been applied successfully in both pre-training \citep{wang2021pufferfish, khodak2021initialization, Schothoefer_2022, schotthöfer2024federateddynamicallowranktraining, zangrando2023rank, zhao2024galore} and fine-tuning scenarios \citep{hu2021lora, valipour2023dylora, zhang2023adalora, hayou2024lora, zhao2024galore, lialin2023relora}, demonstrating their versatility across various tasks. 

Low-rank adapters such as LoRA \citep{hu2021lora} have become a standard approach for PEFT by applying low-rank corrections to pre-trained models. LoRA introduces a low-rank decomposition to the weight matrices of the model, significantly reducing the number of trainable parameters while preserving performance. Despite its efficiency, LoRA's effectiveness heavily relies on the selection of hyperparameters such as learning rates and parameter budgets \citep{zhang2023adalora, hayou2024lora}. These limitations have spurred the development of rank-adaptive methods.
AdaLoRA \citep{zhang2023adalora} is a popular extension of LoRA, which dynamically adjusts the rank of the low-rank adapters during training. By incorporating an orthogonality regularizer and SVD-like adaptation, AdaLoRA aims to address the challenges of rank selection and adaptation. It outperforms static low-rank methods by automatically allocating parameter budgets based on the importance of each matrix component. DyLoRA \citep{valipour2023dylora} provides an alternative approach that hierarchically adjusts the rank during training, demonstrating that higher-rank adapters can lead to better performance than very low-rank ones. DoRA \citep{Mao2024DoRAEP} proposes to sample a set of rank-1 updates for each LoRA layer and to combine them into a rank-$r$ update. Optimal rank-1 components are chosen during fine-tuning using an importance score based on the norm of the LoRA layer.

Beyond fine-tuning, low-rank methods have been successfully applied during the training and pre-training phases of neural networks. Techniques such as Pufferfish \citep{wang2021pufferfish}, intrinsic dimension reduction \citep{aghajanyan2020intrinsic}, and DLRT \citep{Schothoefer_2022} suggest that large deep learning models have an inherently low intrinsic dimensionality, making them amenable to low-rank approximations. These methods propose reducing the number of parameters during training, potentially improving both efficiency and generalization.
Recent works in dynamical low-rank training have explored the use of geometric properties of the low-rank parameter space to improve training stability and convergence. For example, the geometry-aware training approach for tensor layers in Tucker format \citep{zangrando2023rank} dynamically adapts the rank of the factorized layers, ensuring robust convergence even when the initial rank estimation is inaccurate. This method leverages the Riemannian geometry of the parameter space to avoid the ill-conditioning commonly encountered in low-rank training. 
ReLoRA \citep{lialin2023relora} introduces a parameter-efficient training method by using multiple low-rank updates to effectively train high-rank networks. This method allows training larger models with significant memory savings and training speed improvements compared to conventional methods. GaLore \citep{zhao2024galore} introduces a memory-efficient training strategy by projecting gradients onto a low-rank subspace. This approach achieves significant memory savings while maintaining performance.

\section{Low-rank optimization: what can go wrong}\label{sec:what_can_go_wrong}
This section aims to discuss the nature of the critical points and optimization trajectories obtained when using gradient-based strategies for low-rank parameters, and why a straightforward application of gradient-based steps to factorized adapters may lead to suboptimal results.

Consider a neural network layer of the form
\begin{align}\label{eq_usv_adapter}
    \fz = \sigma(W_{\mathrm{pt}}\fx + USV^\top\fx),
\end{align}
where $\sigma$ is an arbitrary activation function, $W_{\mathrm{pt}} \in \mathbb{R}^{n \times n}$ are the frozen pre-trained weights, and $U, V \in \mathbb{R}^{n \times r}$, $S \in \mathbb{R}^{r \times r}$ are the rank-$r$ adapter weights, with input $\fx$. For simplicity, we omit the bias term. Low-rank adapters of the form $W = USV^\top\in\mathbb{R}^{n\times n}$ have gained popularity in recent approaches such as \citep{zhang2023adalora}, although our discussion extends to other equivalent formulations like $W = AB$ \citep{hu2021lora}.
The objective of the training process is to minimize a loss function $\mathcal{L}(W)$ to find an optimal adapter weight $W_{\star}$. For full-rank matrices ($r = n$), optimality requires that $\nabla_W \mathcal{L}(W_{\star}) = 0$. However, when $r < n$, this condition is generally unattainable due to the reduced parameter space. In this scenario, we seek a matrix $W_{\star}$ that is locally optimal within the low-rank parameter space, meaning no further reduction in the loss function $\mathcal{L}$ is possible in the neighborhood of $W_{\star}$.
A necessary condition for local optimality can be expressed as $P(W_{\star}) \nabla \mathcal{L}(W_{\star}) = 0$, see e.g., \citep[Theorem~3.4]{sato2021riemannian}. For orthonormal $U$ and $V$, the projection operator $P(USV^{\top})Z := UU^{\top}Z(I - VV^{\top}) + ZVV^{\top}$ represents the \emph{orthonormal} projection of $Z$ onto the tangent space at $USV^{\top}$. If $W_\star$ is not a saddle point, then this condition ensures that no search direction within the tangent space of $W_{\star}$ can further decrease the loss. See also \Cref{app:local_optimality}. Note that this only guarantees local optimality, a limitation shared by all gradient-based optimizers.

Current training methods for low-rank adapters aim to optimize the low-rank factors with a single backpropagation pass to compute all the required gradients simultaneously. This boils down to integrating the following gradient flow equations for each individual factor
\begin{align}\label{eq_fine_tune_grad_flow}
\begin{aligned}
    \dot{U} =& -\nabla_U\mathcal{L} = -(\nabla_W\mathcal{L})VS\,, \\
    \dot{V} =& -\nabla_V\mathcal{L} = -(\nabla_W\mathcal{L})^{\top}US^{\top}\,, \\
    \dot{S} =& -\nabla_S\mathcal{L} = -U^{\top}\nabla_W\mathcal{L}V\,,
\end{aligned}
\end{align}
where we use the chain rule and the decomposition $W = USV^\top$ to derive the expressions for $\nabla_{U,S,V}\mathcal{L}$. Here, we have omitted the dependence on the time variable $t$, i.e., $U,S,V = U(t),S(t),V(t)$ for improved readability, and we use dots to denote time derivatives. An explicit time discretization with a time step size equal to the learning rate $\lambda$ leads to the simultaneous gradient descent updates commonly employed in conventional training methods for LoRA. At first glance, this procedure appears effective, as a single update step will decrease the loss \emph{if we freeze all but one of the low-rank factors}. However, in practice, LoRA training modifies all low-rank factors \emph{simultaneously}, raising the question of how this affects the overall optimization trajectory.

To address this, consider the evolution equation for $W = USV^\top$, derived directly using the chain rule and \cref{eq_fine_tune_grad_flow}
\begin{align}\label{eq:gradflow_simgd}
    \dot{W} =\,& \dot{U}SV^{\top}+U\dot SV^{\top}+US\dot V^{\top}\nonumber\\
    \stackrel{(\ref{eq_fine_tune_grad_flow})}{=}\,& - \nabla_W\mathcal{L}VS^2V^{\top} + UU^{\top}\nabla_W\mathcal{L}VV^{\top} -U(S^{\top})^2U^{\top}\nabla_W\mathcal{L} =: -\widehat{P}(W)\nabla_W\mathcal{L}\,.
\end{align}
The operator $\widehat P(USV^{\top})Z:=  ZVS^2V^{\top} -UU^{\top}ZVV^{\top} +U(S^{\top})^2U^{\top}Z$ again represents a linear mapping onto the tangent space at $W = USV^\top$. Note that this projection depends on the individual low-rank factors $U$, $S$, and $V$, but we use the notation $\widehat{P}(W)$ for brevity.
Simultaneous descent methods approximate the gradient flow of \cref{eq:gradflow_simgd}, which ideally converges to a solution $W_{\star}$ such that $\widehat{P}(W_{\star})\nabla\mathcal{L}(W_{\star}) = 0$. However, $\widehat{P}$ is orthogonal only when $U$ and $V$ are orthonormal and $S = I$, where $I$ denotes the identity matrix. If these conditions are not met, the resulting optimization process may not find an optimal weight within the low-rank parameter space. This is because $\widehat P(W_\star)\nabla \mathcal L(W_\star) = 0$ does not imply $P(W_\star)\nabla \mathcal L(W_\star) = 0$, thus there could still be some decrease direction along the tangent space as depicted in \Cref{fig:CompareSGDandRGD}.

To construct methods that converge to an optimal low-rank solution, an alternative approach is to evolve the adapter $W$ along the projected gradient flow $\dot{W}(t) = -P(W(t))\nabla \mathcal{L}(W(t))$. In this case, the corresponding evolution equations for the low-rank factors take the form
\begin{align}\label{eq_fine_tune_grad_flow_dlrt}
\begin{aligned}
    \dot{U} &= -(I - UU^{\top})\nabla_W\mathcal{L}VS^{-1}\,, \\
    \dot{V} &= -(I - VV^{\top})\nabla_W\mathcal{L}^{\top}US^{-\top}\,, \\
    \dot{S} &= -U^{\top}\nabla_W\mathcal{L}V\,,
\end{aligned}
\end{align}
assuming that $U$ and $V$ are orthonormal \citep{koch2007dynamical}. 

\begin{figure}
\centering
\begin{subfigure}{0.48\textwidth}
\centering
\includegraphics[width = \textwidth]{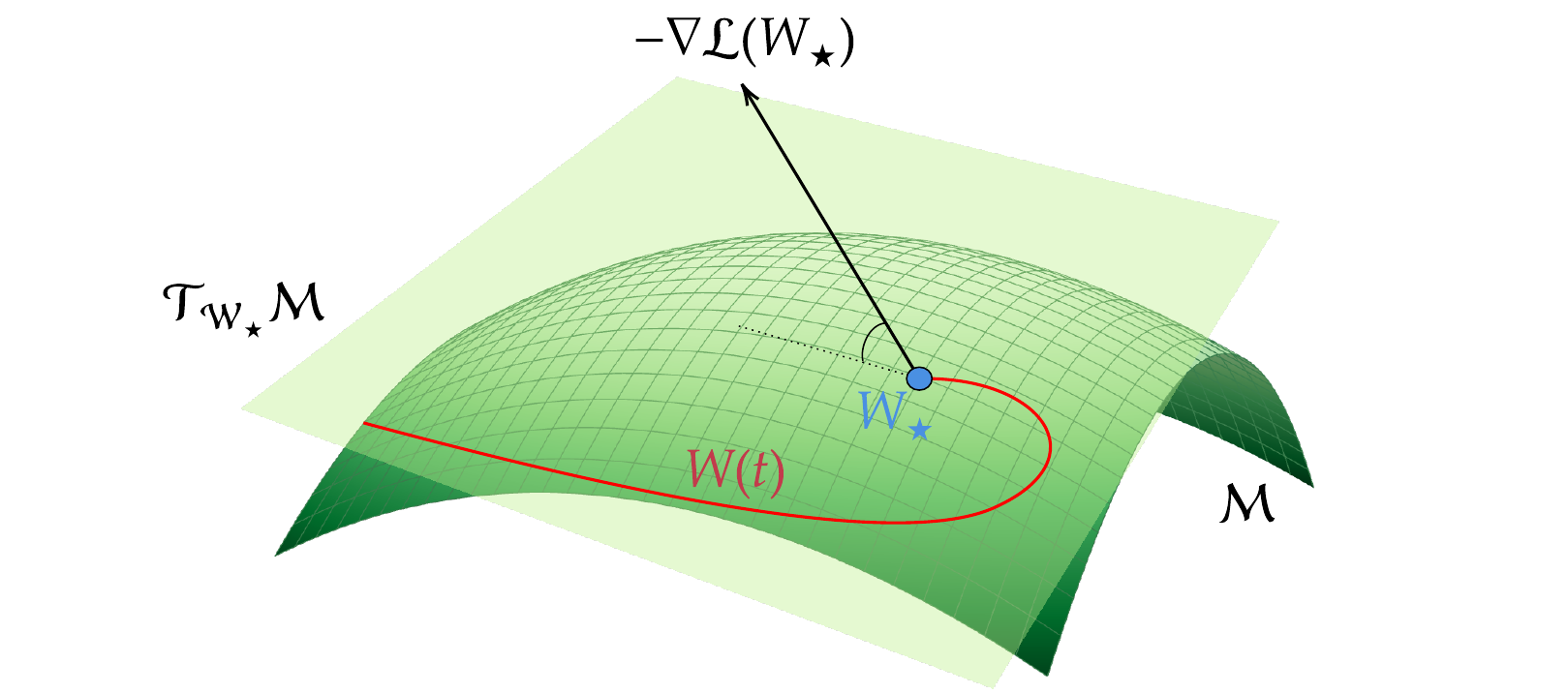}
\caption{Gradient flow of simultaneous gradient descent. }
\label{fig:left}
\end{subfigure}
\begin{subfigure}{0.48\textwidth}
\centering
\includegraphics[width = \textwidth]{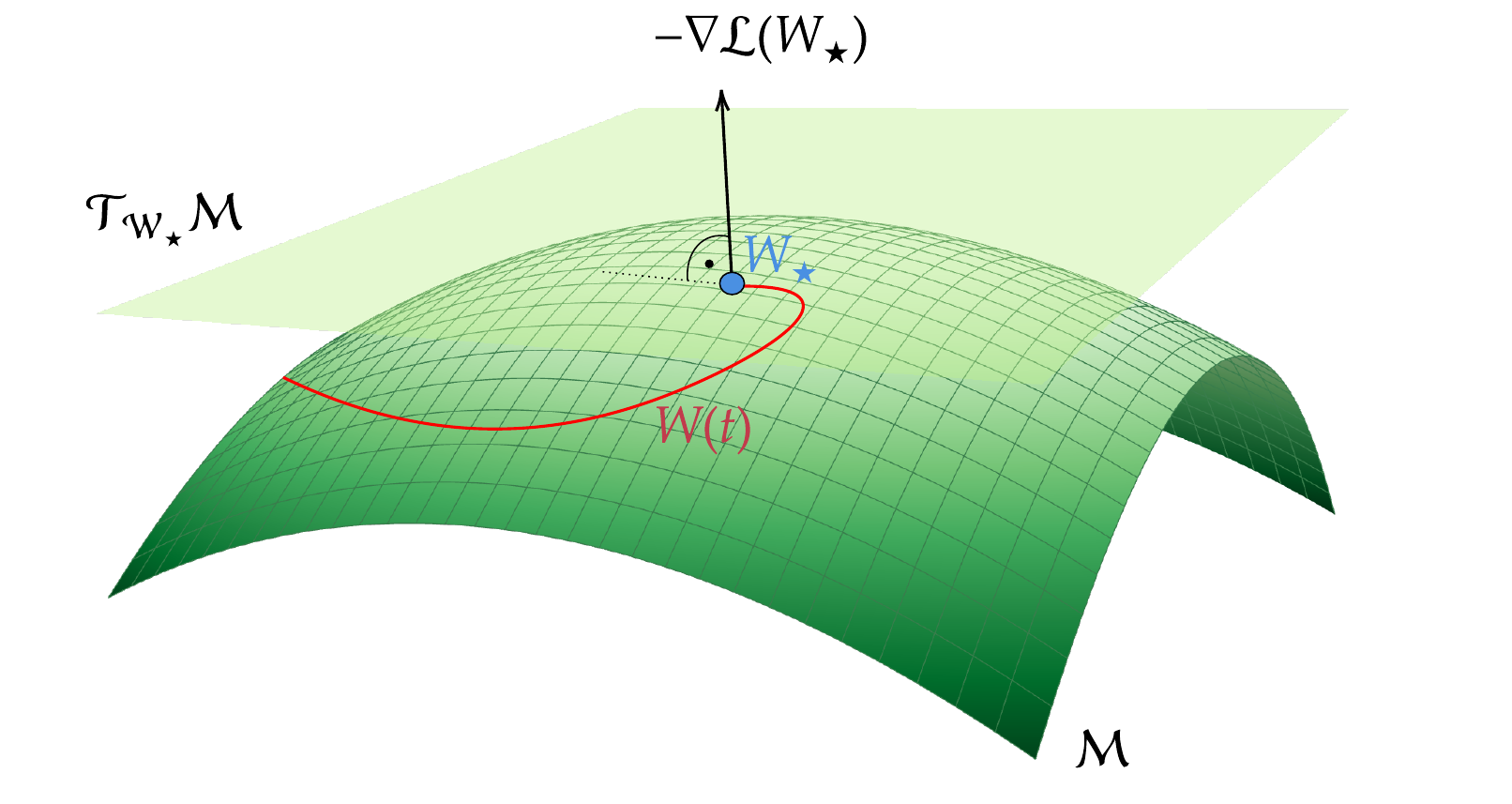}
\caption{Riemannian gradient flow. }
\label{fig:right}
\end{subfigure}
\caption{Illustration of simultaneous vs. Riemannian gradient flow. The projector of the simultaneous gradient flow converges to a point $W_{\star}$ such that $\widehat P(W_{\star})\nabla\mathcal{L} = 0$. Since $\widehat P$ is not an orthogonal projection, the gradient is not orthogonal to the tangent plane, i.e., $W_{\star}$ is suboptimal. For Riemannian gradient flows, the adapter converges to a point $W_{\star}$ such that $P(W_{\star})\nabla\mathcal{L} = 0$. Since $P$ is the orthogonal projection on the tangent space, $W_{\star}$ is a local optimum, i.e., no directions exist in the tangent space $\mathcal{T}_{W_{\star}}\mathcal{M}$, which further decrease the loss. Here, $\mathcal{M}$ denotes the space of low-rank adapters, and $\mathcal{T}_{W_{\star}}\mathcal{M}$ represents the tangent space at the optimal adapter weight $W_{\star}$.}
\label{fig:CompareSGDandRGD}
\end{figure}

While the evolution defined in \cref{eq_fine_tune_grad_flow_dlrt} guarantees convergence to an optimal low-rank adapter, the presence of the $S^{-1}$ term on the right-hand side introduces stiffness in the gradient flow. This stiffness can significantly slow down convergence, especially when the singular values in $S$ vary greatly in magnitude. Robust solutions to address the stiffness problem and ensure convergence without being hindered by the $S^{-1}$ term have been proposed \cite{Schothoefer_2022, zangrando2023rank, schotthöfer2024federateddynamicallowranktraining}. However, these methods require multiple gradient tape evaluations per training update, which makes them computationally more expensive than traditional LoRA training techniques with simultaneous updates.

To overcome these limitations, we propose \ALGNAME{}, a novel training method for low-rank adapters 
that only requires a single gradient tape evaluation per update while ensuring convergence to an optimal low-rank solution, following the projected gradient flow in \cref{eq_fine_tune_grad_flow_dlrt}. This approach retains the computational efficiency of conventional LoRA methods while achieving comparable or even superior performance. By eliminating the need for multiple gradient tape evaluations, \ALGNAME{} offers a practical and scalable solution for training low-rank adapters effectively.


Before presenting the proposed training method, we illustrate different behaviours of different low-rank adaptation strategies using a toy example. Consider the problem of matching a rank-$r$ target matrix $W_{\text{target}} \in \mathbb{R}^{n \times n}$ with a low-rank adapter $W$, formulated as:
\begin{align}\label{eq_matrix_regression}
    \min_{W} \frac{1}{2} \norm{W_{\text{target}} - W}_F^2\, .
\end{align}
We compare the convergence behavior of six different training methods for $n = 5000$, $r = 5$, and a learning rate of $\lambda = 0.1$:

\begin{minipage}[c]{0.5\linewidth}
\begin{enumerate}[leftmargin=*,noitemsep,topsep=0em]
    \item Full fine-tuning (FT) (blue),
    \item DLRT from \citep{Schothoefer_2022} (orange),
    \item The proposed \ALGNAME{} method (green),
    \item Fixed rank LoRA from \cite{hu2021lora} (red),
    \item AdaLoRA from \citep{zhang2023adalora} (brown),
    \item Fixed rank  AdaLoRA (purple).
\end{enumerate}
\end{minipage}
\hfill
\begin{minipage}[c]{0.5\linewidth}
\centering \includegraphics[width=1.0\textwidth]{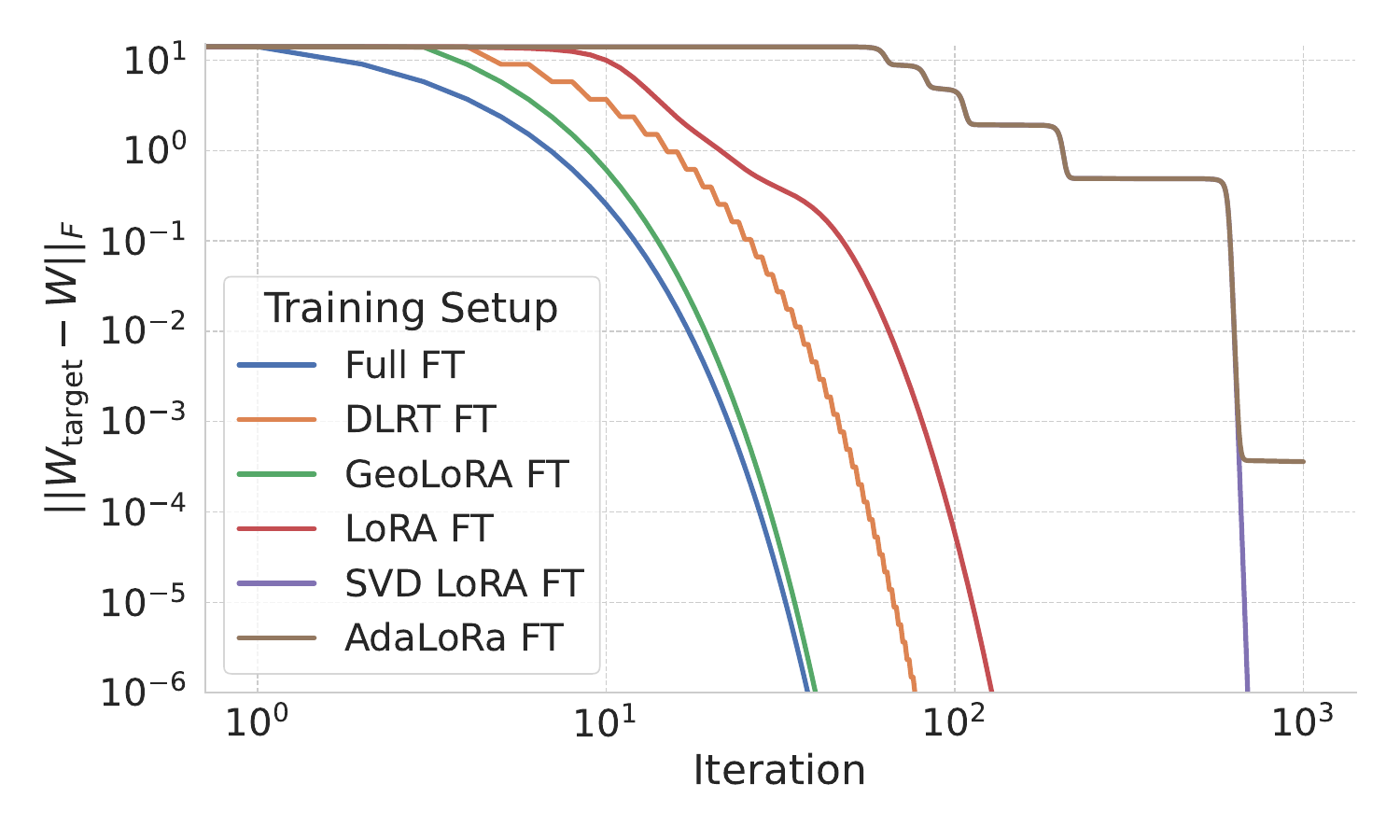}
\end{minipage}

In this experiment, the fixed rank approaches (4, 6) use a rank of 50. All adapters $W$ are initialized to zero, with $S_0 = 0$ for the SVD-based methods (2, 3, 5, 6) and $B = 0$ for LoRA-based methods (4).

The results show that the proposed \ALGNAME{} method (3) converges as quickly as full fine-tuning. In contrast, method (2) (DLRT) takes approximately twice as long due to the sequential updates of the basis and coefficient matrices\footnote{The loss plateaus appear since the loss value remains constant during a basis update and only decreases during a coefficient update. This accounts for the fact that two gradient tapes need to be computed.}. LoRA-type methods (4, 5, 6) exhibit slower convergence due to the suboptimality of the underlying gradient flow defined in \cref{eq_fine_tune_grad_flow}. AdaLoRA (5) solves the same gradient flow as method (6) but plateaus at a loss of $5 \times 10^{-4}$, corresponding to the regularization parameter for the terms $\Vert{U^\top U - I}\Vert_F^2 + \Vert{V^\top V - I}\Vert_F^2$ that enforce the orthonormality of $U$ and $V$. We had to fix the minimum rank to 5 for AdaLoRA to prevent stalling of the optimization due to rank underestimation, an issue not observed in methods (2) and (3), where rank augmentation avoided this problem. 

\section{The proposed method}\label{sec:method}

In this section, we introduce \ALGNAME{} (\ALGNAMELONG) a novel low-rank fine-tuning method that integrates \textbf{rank adaptivity}, \textbf{low-rank optimality}, and \textbf{memory and computational efficiency}. Our method builds upon the parallel geometric low-rank integrator originally designed for model order reduction in high-dimensional PDEs \citep{ceruti2023parallelrankadaptiveintegratordynamical}, and it is equipped with loss descent, approximation bounds, and convergence guarantees. 
%
Notably, it improves upon existing dynamical low-rank methods, e.g. \citep{schotthöfer2024federateddynamicallowranktraining, zangrando2023rank, Schothoefer_2022} by updating basis and coefficients \textit{in parallel} opposed to a \textit{sequential} basis update and coefficient step. Moreover, only a single backward pass per iteration step is required through a novel evaluation strategy of robust gradients, thus doubling the wall-time performance. \ALGNAME{} is, therefore, the first low-rank training method solving the optimal gradient flow \cref{eq_fine_tune_grad_flow_dlrt} with training times per iteration comparable to standard simultaneous descent approaches to low-rank adaptation such as LoRA and AdaLoRA \citep{hu2021lora,zhang2023adalora}. In particular, it improves upon these methods  
by providing robustness and convergence guarantees and demonstrating an overall improved performance and robustness to hyperparameters in numerical examples.


Starting from an initial factorization $U_0,V_0,S_0$ with initial rank $r_0$, where $S_0$ is diagonal and full-rank, \ALGNAME{} performs the following steps (also summarized in Algorithm~\ref{alg_efficient_TDLRT}):

\begin{algorithm}[t]
\DontPrintSemicolon
\SetAlgoLined
\SetKwInOut{Input}{Input}
\SetKwComment{Comment}{$\triangleright$\ }{}

\Input{Initial orthonormal bases $U,V\in\mathbb{R}^{n\times r}$ and diagonal $S\in\mathbb{R}^{r\times r}$;\;
$\tau$: singular value threshold for rank truncation;\;
$\lambda$: learning rate.
}
Evaluate $\mathcal{L}(USV^\top)$\tcc*{Forward evaluate}
$G_U\gets\nabla_{U}\mathcal{L}(USV^\top);\,G_S\gets\nabla_{S}\mathcal{L}(USV^\top);\,G_V\gets\nabla_{V}\mathcal{L}(USV^\top)$ \tcc*{Backprop}

$\left\{
\begin{array}{l}
S^\text{new} \gets \texttt{optimizer\_step}(S, G_S, \lambda)   \\
K^\text{new} \gets \texttt{optimizer\_step}(US, G_U S^{-\top}, \lambda) \\
L^\text{new} \gets \texttt{optimizer\_step}(VS^\top, G_V S^{-1}, \lambda) \quad 
\end{array}
\right.$ {\tcc*{in parallel}}

$\left\{
\begin{array}{l}
\newU\gets\texttt{  basis\_augmentation}(U, K^\textup{new}) \\
\newV\gets\texttt{  basis\_augmentation}(V, L^\textup{new})
\end{array}
\right.${\tcc*{in parallel}}
$\augS\gets 
\begin{bmatrix}
    S^\text{new}  & L^{\text{new},\top}\newV \\
     \newU^\top K^{\text{new}} & 0 \\
\end{bmatrix}\in\mathbb{R}^{2r\times 2r}
$\tcc*{Assemble new coefficient matrix} 
$U,S,V, S^{-1} \gets ${\tt truncation}$(\augS, [U \mid \newU], [V \mid \newV]  )$

\caption{Single iteration of \ALGNAME{}. \\The functions \texttt{optimizer\_step}, \texttt{basis\_augmentation}, and \texttt{truncation} are detailed in \Cref{alg_helper} in the appendix. }\label{alg_efficient_TDLRT}
\end{algorithm}

1. \textbf{Perform a (stochastic) gradient step} to compute the new variables  
$S^\text{new}\in\mathbb{R}^{r_0\times r_0}$, $L^{\text{new}}\in\mathbb{R}^{n\times r_0}$, and $K^{\text{new}}\in\mathbb{R}^{n\times r_0}$, as follows: 
\begin{align}\label{eq_kls}
\begin{aligned}
    &S^\text{new} = S_0 - \lambda \nabla_S\mathcal{L}(U_0 S_0 V_0^\top) \\
    &K^\text{new} = U_0S_0 - \lambda \nabla_U\mathcal{L}(U_0 S_0 V_0^\top)S_0^{-\top} \\
    &L^\text{new} = V_0S_0^\top - \lambda \nabla_V\mathcal{L}(U_0 S_0 V_0^\top)S_0^{-1}.
\end{aligned}
\end{align}
We will see in Theorem~\ref{theo_robust_error_bound} that using these variables mitigates the stiffness of the system in \cref{eq_fine_tune_grad_flow_dlrt} while approximating the optimal gradient flow. Note that the right-hand side gradients $\nabla_U\mathcal{L}$, $\nabla_V\mathcal{L}$, and  $\nabla_S\mathcal{L}$ can be evaluated with only one backward pass through the network using standard algorithmic differentiation techniques, halving the computational cost of existing geometric methods such as \citep{Schothoefer_2022, zangrando2023rank}. Evaluation of the inverse $S_0^{-1}$ induces no computational overhead since $S_0$ is diagonal at the start of each iteration. 

2.  \textbf{Augment the current bases} $U_0,V_0$ 
to twice their rank using the gradient dynamics of the loss, which is encoded in $K^{\text{new}}$ and $L^{\text{new}}$, i.e.
\begin{align}\label{eq_basis_aug}
    \augU = [U_0,\newU] = \textup{ortho}([U_0,K^{\text{new}}])\in\mathbb{R}^{n\times 2r_0}\quad \text{and}
\quad \augV = [V_0,\newV] = \textup{ortho}([V_0,L^{\text{new}}])\in\mathbb{R}^{n\times 2r_0}.
\end{align}
Here ``$\textup{ortho}$'' denotes a column orthonormalization procedure such as the QR-algorithm. 
This augmentation step provides the low-rank adapter with a larger search space to increase the rank of its adaptation if the initial rank-guess $r_0$ was insufficient to fully capture the problem. Doubling the rank implies that in $\log(n)$ training iterations any rank can be captured by a rank one initialization, eliminating the need for tuning $r$ as a hyperparameter, see Figure~\ref{fig_vit_rafer_init_r}. 

3. \textbf{Assemble the augmented coefficient matrix} 
\begin{align}\label{eq_S_step}
   \augS\gets 
\begin{bmatrix}
    S^\text{new}  & L^{\text{new},\top}\newV \\
     \newU^\top K^{\text{new}} & 0 \\
\end{bmatrix}\in\mathbb{R}^{2r_0\times 2r_0}
\end{align}
where we obtain the block entries $S^\text{new}$, $L^{\text{new}}$, and $K^{\text{new}}$ from \cref{eq_kls}. 

4. \textbf{Truncate redundant singular values} $s_i$ of $\augS$ and the corresponding singular vectors, i.e. basis functions of $\augU,\augV$, using the criterion
\begin{align}\label{eq_truncation_threshold}
    \sum_{i=r_1+1}^{2r} s_i^2  < \vartheta,
\end{align}
where $r_1$ is the new rank of the factorization and $\vartheta$ is a tresholding hyperparameter. The singular values $s_i$ are obtained via the SVD of $\augS = P\Sigma Q^\top\in\R^{2r_0\times 2r_0}$. Then  we determine the new factorization as $S_1 = \textup{diag}(s_1,\dots,s_{r1})\in\mathbb{R}^{r_1\times r_1}$, $U_1 = \augU P_{(1,\dots,r_1)}\in\mathbb{R}^{n\times r_1}$ and $V_1 = \augV Q_{(1,\dots,r_1)}\in\mathbb{R}^{n\times r_1}$.
{The truncation threshold} $\vartheta$ is chosen relative to the nuclear norm of the specific layer's current singular values, i.e. $\vartheta=\tau\Vert{\augS}\Vert_F^2$. Other norms, such as the $1$-norm of the singular values $s_i$, are possible as well. Thus, the truncation threshold determines how aggressively to prune each layer individually. Analogously, the following global threshold similar to the one used in e.g.\ \citep{zhang2023adalora,ghadiri2023approximately,Idelbayev_2020_CVPR} 
\begin{align}\label{eq_global_truncation_threshold}
   \sum_{\ell=1}^L \sum_{i=\ell+1}^{2r_\ell} s_{i,\ell}^2  < \frac{\tau}{1-\tau} \sum_{\ell=1}^L \sum_{i=1}^{r_{1,\ell}} s_{i,\ell}^2,
\end{align}
can be considered by summing the singular values across all the layers 
$\ell=1,\dots,L$.
To directly control the parameter budget, order $ s_{i,\ell}^2$ by descending by magnitude and selecting the largest ones first until either \cref{eq_global_truncation_threshold} is violated or the budget is depleted. 
\subsection{Parameter initialization}
\textbf{LoRA-type adapters}~\citep{hu2021lora} initilize the low rank matrices $B,A$ with zero initialization of $B$, and Gaussian initialization of $A$. This ensures that the fine-tuning indeed starts at the pretrained state of the network, i.e., $ \sigma(W_{\mathrm{pt}}\fx + \frac{\alpha}{r}A_{0}B_{0}^{\top}\fx)= \sigma(W_{\mathrm{pt}}\fx)$. 
For consistency with this initialization, the bases $U_0$ and $V_0$ can be initialized as random but orthonormal, whereas the coefficient matrix $S_0$ has zero-initialization. In this first solve of \cref{eq_kls}, we set $S_0^{-1}$ as the identity matrix. As a result, the first solve of \cref{eq_kls} is inconsistent with the optimal dynamics of \cref{eq_fine_tune_grad_flow_dlrt}. However, all following iterations evolve the low-rank trajectory according to the optimal gradient flow. Since in the first iterations of a LoRA fine-tuning, the adapter is typically close to the original solution but far from the fine-tuning optimum, this inconsistency is irrelevant to the overall convergence behavior of the method. Alternatively, the required gradients can be computed with three individual gradient tapes in the first iteration, which does not require the inversion of $S_0$.

The proposed method can readily be used for \textbf{dynamic low-rank compression} \citep{Schothoefer_2022, zangrando2023rank} of pre-trained networks, where we consider a layer $\fz = \sigma(W\fx)$,  and approximate $W\approx U_0S_0V_0^\top$. Here, the initial parameters $U_0,S_0,V_0$ are obtained by a truncated singular value composition of $W$. 

Finally, for \textbf{low-rank pre-training} of an untrained network with given architecture, i.e. predetermined layer dimensions $n$, but unknown rank $r$, the factors $U_0,V_0$ are initialized randomly, but orthonormal and $S_0$ is initialized randomly, but diagonal for easy initialization of $S_0^{-1}$. 

The basis augmentation step is able to truncate to any rank within one iteration and able to augment the basis to any rank within a logarithmic number of iterations, thus the method is robust w.r.t the choice of the initial rank. We provide an ablation study in \Cref{fig_vit_rafer_init_r}.

\subsection{Analysis}
In the following, we analyze \Cref{alg_efficient_TDLRT} under the general assumption that $\mathcal{L}$ is L-smooth with constant $L$ and bounded with constant $B$.

For brevity of exposition we denote $W^{r}_t = U_{t}S_{t}V_{t}^\top$ as the low-rank factorization at iteration $t$ evaluated with \Cref{alg_efficient_TDLRT}, whereas $W_t$ denotes the full-rank solution obtained by ``full fine-tuning'' with stochastic gradient descent. Further, we denote by $f(W^{r}_{t},\xi_t)$ the stochastic gradient of the network loss $\mathcal{L}$ w.r.t the low-rank weight $W^{r}_t$ at iteration $t$, obtained by batch-gradient descent. The i.i.d random variable $\xi_t$ models the randomness in the training data batch at iteration $t$.
Lastly, recall that $P(W^{r}_{t})Z$ denotes the orthogonal projection of the matrix $Z$ onto the tangent plane of the manifold of rank-$r$ matrices at the point $W^{r}_{t}$.

\textbf{\Cref{alg_efficient_TDLRT} is an optimizer on low-rank manifolds}: \Cref{theo_descend_direction} shows, that the proposed scheme with stochastic gradients indeed decreases the training loss in each iteration, while optimizing on a manifold, and \Cref{theo_convergence} yields stochastic convergence to a locally optimal stationary point.

\begin{theorem}[Stochastic descent estimate]\label{theo_descend_direction} \Cref{alg_efficient_TDLRT} with stochastic (mini-batch) gradients fulfills
\begin{align}\label{the0_2_result}
\begin{aligned}
   \mathbb{E}_{\xi_{t+1}}[\mathcal{L}(W^{r}_{t+1})]  \leq \mathcal{L}(W^{r}_{t}) -\lambda\left(1-\frac{L\lambda^2}{2}\right)\mathbb{E}_{\xi_1}[\Vert P(W^{r}_{t})f(W^{r}_{t},\xi_t) \Vert^2] +L\mathbb{E}_{\xi_1}[ \|W^{r}_{t+1}- \augWr_t \|]\,.
\end{aligned}
\end{align}
where $W^{r}_{{t}}$, $\augWr_{t}$, $W^{r}_{t+1}$ are the low-rank weight matrices at the start of iteration $t+1$, before, and after the truncation step, respectively.
\end{theorem}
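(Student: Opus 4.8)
The plan is to compare the value $\mathcal{L}(W^r_{t+1})$ to $\mathcal{L}(W^r_t)$ by inserting the intermediate pre-truncation iterate $\augWr_t$ as a stepping stone, splitting $\mathcal{L}(W^r_{t+1}) - \mathcal{L}(W^r_t) = \big(\mathcal{L}(W^r_{t+1}) - \mathcal{L}(\augWr_t)\big) + \big(\mathcal{L}(\augWr_t) - \mathcal{L}(W^r_t)\big)$. The first bracket is controlled purely by $L$-smoothness: $|\mathcal{L}(W^r_{t+1}) - \mathcal{L}(\augWr_t)| \le L\|W^r_{t+1} - \augWr_t\|$ (using boundedness of the gradient, or more directly the Lipschitz property of $\mathcal{L}$ itself if that is what "$L$-smooth" grants here) — this is exactly the last term in \eqref{the0_2_result}. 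So the real work is the second bracket, the descent produced by the parallel $K,L,S$ step \emph{before} truncation.

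First I would establish the key structural identity: the augmented update $\augWr_t = \augU \augS \augV^\top$ built from \eqref{eq_kls}--\eqref{eq_S_step} equals $W^r_t$ minus $\lambda$ times the orthogonally-projected gradient, i.e. $\augWr_t = W^r_t - \lambda\, P(W^r_t) f(W^r_t,\xi_t) + (\text{higher-order in }\lambda)$. This is the crux and the main obstacle. One shows it by expanding: the $K$-update gives the new $U$-range, the $L$-update the new $V$-range, and the block form of $\augS$ is exactly engineered so that, after substituting $K^{\text{new}} = U_0S_0 - \lambda(\nabla_W\mathcal{L})V_0$ and $L^{\text{new}} = V_0S_0^\top - \lambda(\nabla_W\mathcal{L})^\top U_0$ and using orthonormality of $U_0,V_0$, the product $\augU\augS\augV^\top$ collapses to $U_0S_0V_0^\top - \lambda\big[(I-U_0U_0^\top)(\nabla_W\mathcal{L})V_0V_0^\top + U_0U_0^\top(\nabla_W\mathcal{L})(I-V_0V_0^\top)^\top\ \text{terms}\big]$, whose leading part is $-\lambda P(W^r_t)\nabla_W\mathcal{L}$, plus a $\mathcal{O}(\lambda^2)$ remainder coming from the $\newU^\top K^{\text{new}}$ and $L^{\text{new},\top}\newV$ off-diagonal blocks and the orthonormalization of the augmented basis. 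The span-exactness here — that no component of $P(W^r_t)\nabla_W\mathcal{L}$ is lost because $[U_0 \mid K^{\text{new}}]$ and $[V_0 \mid L^{\text{new}}]$ span the relevant range and co-range — is what makes the argument go through and is the delicate book-keeping step.

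Granting that identity, I then apply the descent lemma for $L$-smooth functions to the step $W^r_t \mapsto \augWr_t$: $\mathcal{L}(\augWr_t) \le \mathcal{L}(W^r_t) + \langle \nabla\mathcal{L}(W^r_t), \augWr_t - W^r_t\rangle + \tfrac{L}{2}\|\augWr_t - W^r_t\|^2$. Substituting $\augWr_t - W^r_t \approx -\lambda P(W^r_t) f(W^r_t,\xi_t)$ and using the self-adjointness and idempotence of the orthogonal projector $P(W^r_t)$ — so that $\langle \nabla\mathcal{L}, P f\rangle$ pairs with $\langle P\nabla\mathcal{L}, f\rangle$ and in expectation over $\xi_t$ reduces to $\|P(W^r_t)f(W^r_t,\xi_t)\|^2$ (this is where unbiasedness of the stochastic gradient is used, though the statement is written with $\mathbb{E}_{\xi_1}$, suggesting identically-distributed batches) — yields the $-\lambda\|P f\|^2$ term and, from the quadratic term, the $+\tfrac{L\lambda^2}{2}\|Pf\|^2$ correction, giving the combined coefficient $-\lambda(1 - \tfrac{L\lambda^2}{2})$. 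Wait — I should double check the exponent: the quadratic term contributes $\tfrac{L}{2}\lambda^2\|Pf\|^2$, which against $-\lambda\|Pf\|^2$ factors as $-\lambda(1 - \tfrac{L\lambda}{2})$, not $\tfrac{L\lambda^2}{2}$; I would reconcile this with the paper's statement by tracking whether an extra factor of $\lambda$ hides in their normalization of $f$ or in the $\mathcal{O}(\lambda^2)$ remainder being absorbed into the last term. Finally I take $\mathbb{E}_{\xi_{t+1}}$ (equivalently $\mathbb{E}_{\xi_t}$) of both sides, move the $\mathcal{O}(\lambda^2)$ basis-orthonormalization remainder into the $L\,\mathbb{E}\|W^r_{t+1} - \augWr_t\|$ slot (or note it is dominated by the truncation term's structure), and arrive at \eqref{the0_2_result}. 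The main obstacle, to restate, is the exact algebraic verification that the parallel KLS construction reproduces the orthogonally-projected gradient step up to controllable higher-order terms — everything after that is the standard $L$-smooth descent computation.
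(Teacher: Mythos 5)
Your overall decomposition (descent from $W^r_t$ to $\augWr_t$, then control of the truncation error $\augWr_t\to W^r_{t+1}$) matches the paper's route, but there are two substantive issues with the middle step.

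\paragraph{The structural identity is exact, not asymptotic.} You hedge and claim $\augWr_t = W^r_t - \lambda P(W^r_t)f(W^r_t,\xi_t) + \mathcal O(\lambda^2)$, attributing a remainder to the off-diagonal blocks $\newU^\top K^{\text{new}}$, $L^{\text{new},\top}\newV$ and to the orthonormalization. In fact the identity $\augWr_t = W^r_t + \lambda\,P(W^r_t)f(W^r_t,\xi_t)$ holds \emph{exactly}, and this is what makes the whole argument clean. Expanding $\augU\augS\augV^\top$ term by term, the cross blocks give $\newU\newU^\top K^{\text{new}}V_0^\top = \lambda\,\newU\newU^\top f V_0V_0^\top$ (the $U_0S_0$ part is killed by $\newU^\top U_0 = 0$) and similarly $U_0 L^{\text{new},\top}\newV\newV^\top = \lambda\,U_0U_0^\top f\newV\newV^\top$. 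Adding the $S^{\text{new}}$ block and combining, the sum is $\lambda\big(U_0U_0^\top f V_0V_0^\top + \newU\newU^\top fV_0V_0^\top + U_0U_0^\top f\newV\newV^\top\big)$. Now the point you flagged as ``delicate book-keeping'' is a genuine range-exactness fact, not an approximation: since $\text{range}(\augU)\supseteq\text{range}(U_0)+\text{range}(K^{\text{new}}) = \text{range}(U_0)+\text{range}(fV_0)$ (because $K^{\text{new}}=U_0S_0+\lambda fV_0$), one has $\augU\augU^\top fV_0V_0^\top = fV_0V_0^\top$ exactly, and symmetrically $U_0U_0^\top f\augV\augV^\top = U_0U_0^\top f$. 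Regrouping then gives exactly $\lambda(U_0U_0^\top f + fV_0V_0^\top - U_0U_0^\top fV_0V_0^\top) = \lambda P(W^r_t)f$. There is no $\mathcal O(\lambda^2)$ correction to absorb anywhere, and indeed if there were, the stated bound would require additional terms you cannot simply ``move into the truncation slot.''

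\paragraph{Your exponent worry is correct.} You suspected that $-\lambda\|Pf\|^2 + \frac{L\lambda^2}{2}\|Pf\|^2$ factors as $-\lambda(1-\frac{L\lambda}{2})\|Pf\|^2$ rather than $-\lambda(1-\frac{L\lambda^2}{2})\|Pf\|^2$. You are right; the $\lambda^2$ in the theorem statement is a typo, confirmed both by the main-text remark that the step-size criterion is $\lambda\le 2/L$ (which comes from $1-\frac{L\lambda}{2}\ge 0$) and by the proof of the subsequent convergence theorem, which uses the coefficient $\lambda_t(1-\frac{L\lambda_t}{2})$. Do not try to ``reconcile'' it by hiding a factor of $\lambda$ elsewhere.

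\paragraph{No unbiasedness is used.} The inner product step $\langle f, Pf\rangle = \langle Pf, Pf\rangle = \|Pf\|^2$ is a pathwise algebraic identity (self-adjointness and idempotence of the orthogonal projector). The descent lemma is applied to the stochastic objective at fixed $\xi_t$, and the expectation $\mathbb E_{\xi_t}[\cdot]$ is simply taken of the resulting pathwise inequality at the very end. Invoking unbiasedness of the stochastic gradient to ``reduce to $\|Pf\|^2$'' is not needed and, as the paper is structured, is not even the right place to look for it.
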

The proof is provided in \Cref{app_descend_direction}. The above theorem yields a loss descent guarantee up to the two last terms on the right-hand side. The first term of the right hand side induces the step size criterion  $\lambda\leq \frac{2}{L}$, which resembles the step size criterion of full gradient descent, where the two right hand side terms read $ -\lambda(1-\frac{L{\lambda}}{2})\Vert f(W_{t}) \Vert^2$. This shows that the low-rank optimizer allows similar learning rates as a full fine-tuning setup, eliminating the need for the $\frac{\alpha}{r}$ scaling parameter of LoRA. The last term models the error introduced by the truncation step and is bounded by the user-determined cutoff threshold $\vartheta$, as $\mathbb{E}_{\xi_1}[ \|W^{r}_{t+1}- \augWr_t \|]\approx \vartheta$. As the solution stabilizes in rank, the error term vanishes, and we obtain the following main convergence result: 
\begin{theorem}[Convergence]\label{theo_convergence}
    Let $\mathcal{L}\geq 0$ and $W^{r}_{1},\dots,W^{r}_T$ be the solutions generated by \Cref{alg_efficient_TDLRT} over $T$ steps.  Let the learning rate sequence $\{\lambda_t\}$ satisfy the Robbins-Monro conditions
    \[
    \textstyle{\sum_t \lambda_t =+\infty \qquad \sum_t \lambda_t^2 < +\infty \, ,}
    \]
    and each step $\lambda_t$ the step size restriction $\lambda_t\leq \frac{2}{L}$.
Further assume $\sum_{t=1}^{T-1}\mathbb{E}[\|W^{r}_{t+1} - \augWr_{t} \|] \leq D < \infty$, i.e.  after some time, the solution $W^{r}_t$ is contained in a manifold of rank $r$. Then we have
       \begin{align*}
        \liminf_{T\rightarrow\infty} \mathbb{E}[\Vert P(W^{r}_{t})f(W^{r}_{t}) \Vert^2] = 0\,,
    \end{align*}
    where the expected value is taken over all $\xi_t$.
\end{theorem}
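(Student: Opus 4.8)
The plan is to run the classical Robbins--Siegmund / telescoping argument for stochastic gradient methods, using \Cref{theo_descend_direction} as the one-step estimate and treating the truncation term as a summable perturbation. First I would write the estimate of \Cref{theo_descend_direction} with the time-varying step size $\lambda_t$ in place of $\lambda$ and take the total expectation over the full sample history $\xi_1,\dots,\xi_t$, using that $W^{r}_t$ is measurable with respect to $\xi_1,\dots,\xi_{t-1}$ and that the augmented iterate $\augWr_t$ and the truncated iterate $W^{r}_{t+1}$ are deterministic functions of the post-gradient-step iterate, hence add no new randomness. Writing $c_t := \lambda_t\bigl(1-\tfrac{L\lambda_t}{2}\bigr)$ for the leading coefficient (the form consistent with the discussion following \Cref{theo_descend_direction} and with the stated restriction $\lambda_t\le 2/L$), this gives, for each $t$,
\[
\mathbb{E}[\mathcal{L}(W^{r}_{t+1})] \;\leq\; \mathbb{E}[\mathcal{L}(W^{r}_{t})] \;-\; c_t\,\mathbb{E}\bigl[\Vert P(W^{r}_{t})f(W^{r}_{t},\xi_t)\Vert^2\bigr] \;+\; L\,\mathbb{E}\bigl[\Vert W^{r}_{t+1}-\augWr_t\Vert\bigr],
\]
with all expectations finite because $\mathcal{L}$ is bounded.

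Next I would telescope over $t=1,\dots,T-1$: the loss terms collapse, $\mathcal{L}\geq 0$ lets me discard $\mathbb{E}[\mathcal{L}(W^{r}_T)]$, and the standing hypothesis $\sum_{t=1}^{T-1}\mathbb{E}[\Vert W^{r}_{t+1}-\augWr_t\Vert]\leq D<\infty$ bounds the accumulated truncation error, yielding the uniform-in-$T$ estimate
\[
\sum_{t=1}^{T-1} c_t\,\mathbb{E}\bigl[\Vert P(W^{r}_{t})f(W^{r}_{t},\xi_t)\Vert^2\bigr] \;\leq\; \mathcal{L}(W^{r}_1) + LD .
\]
Because $\sum_t \lambda_t^2<\infty$ forces $\lambda_t\to 0$, there is an index $t_0$ with $1-L\lambda_t/2\geq \tfrac12$, hence $c_t\geq \tfrac12\lambda_t$, for all $t\geq t_0$; the finitely many earlier summands are nonnegative, so letting $T\to\infty$ gives $\sum_{t\geq 1}\lambda_t\,\mathbb{E}[\Vert P(W^{r}_{t})f(W^{r}_{t},\xi_t)\Vert^2] < \infty$.

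Finally I would combine this with $\sum_t\lambda_t=+\infty$ to extract the $\liminf$. If $\liminf_{t\to\infty}\mathbb{E}[\Vert P(W^{r}_{t})f(W^{r}_{t},\xi_t)\Vert^2]=\varepsilon>0$, then for all sufficiently large $t$ each term exceeds $\varepsilon/2$, so $\sum_t \lambda_t\,\mathbb{E}[\Vert P(W^{r}_{t})f(W^{r}_{t},\xi_t)\Vert^2]\geq \tfrac{\varepsilon}{2}\sum_t\lambda_t=+\infty$, contradicting the previous display; therefore the $\liminf$ equals $0$, which is the assertion.

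I expect the main obstacle to be bookkeeping rather than conceptual. Two points deserve care: (i) justifying the iterated-expectation step, i.e.\ that the mini-batch variables $\xi_t$ are i.i.d.\ and that \Cref{alg_efficient_TDLRT}'s basis-augmentation and truncation maps are deterministic given the gradient step, so that the per-step bound of \Cref{theo_descend_direction} can legitimately be chained; and (ii) the essential role of the hypothesis $\sum_t\mathbb{E}[\Vert W^{r}_{t+1}-\augWr_t\Vert]\leq D<\infty$ --- this encodes that the rank eventually stabilizes so the iterate sits on a fixed rank-$r$ manifold and the truncation corrections vanish; without it the perturbation terms need not be summable and the telescoping bound is vacuous. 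The remaining steps (telescoping, the $\lambda_t\to 0$ estimate, the final contradiction) are standard.
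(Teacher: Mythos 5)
Your proof follows the same route as the paper's: take total expectation in the one-step estimate of Theorem~1 with step size $\lambda_t$, telescope, discard $\mathbb{E}[\mathcal{L}(W^r_T)]\geq 0$, and absorb the accumulated truncation error via the summability hypothesis to get $\sum_t \lambda_t\bigl(1-\tfrac{L\lambda_t}{2}\bigr)\mathbb{E}\bigl[\Vert P(W^r_t)f(W^r_t)\Vert^2\bigr]\leq \mathcal{L}(W^r_1)+LD$. Where you go beyond the paper is the final deduction: the paper simply asserts that boundedness of the right-hand side yields the $\liminf$ conclusion, whereas you explicitly use $\sum_t\lambda_t^2<\infty\Rightarrow\lambda_t\to 0$ to bound $\lambda_t(1-L\lambda_t/2)\geq\lambda_t/2$ eventually, then run the standard contradiction against $\sum_t\lambda_t=+\infty$. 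That is exactly the missing bookkeeping, so your proposal is correct and, if anything, slightly more rigorous than the proof given.
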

The proof is provided in \Cref{app_convergence}. Additionally, the solution trajectory of \Cref{alg_efficient_TDLRT} is close to the (full-rank) trajectory of the dynamical system 
\begin{align}\label{eq_cont_gradient_flow_full_rank}
    \dot W(t)=-\nabla_W \mathcal{L}(W(t)),
\end{align}
i.e., the gradient flow of full training or fine-tuning:
\begin{theorem}[Error-bound]\label{theo_robust_error_bound}
For an integer $k$, let $t=k\lambda$. Let $W(t)$ be the solution of \cref{eq_cont_gradient_flow_full_rank}, and let $W^r_t$ be the factorized low-rank solution after $k$ steps with \Cref{alg_efficient_TDLRT}.
    Assume that for any $Z$ in a neighborhood of $W(t)$, we have $\norm{(I-  P(Z))\nabla\mathcal{L}(Z)}<\varepsilon$, i.e., the gradient flow is close to $T_Z \mathcal M_r$. Then, 
\begin{equation}\label{eq:approx}
   \norm{W(t)-W^r_t}\leq  c_{1}\varepsilon + c_{2}\lambda +c_{3}\vartheta/\lambda\,.
\end{equation}
Moreover, let $W_{RF}(t)$ denote the solution of the Riemannian flow of \cref{eq_fine_tune_grad_flow_dlrt}. Then, 
\begin{equation}\label{eq:approx2}
   \norm{W_{RF}(t)-W^r_t}\leq  c_{4}\varepsilon + c_{2}\lambda +c_{3}\vartheta/\lambda
\end{equation}
where the constants $c_{1}$, $c_{2}$,  $c_{3}$,  $c_{4}$ depend only on $L$ and $B$.
\end{theorem}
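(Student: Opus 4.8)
# Proof Proposal for Theorem (Error-bound)

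The plan is to obtain both estimates \eqref{eq:approx} and \eqref{eq:approx2} via a standard numerical-analysis style argument: compare the discrete trajectory generated by Algorithm~\ref{alg_efficient_TDLRT} to a continuous reference flow by controlling (i) the \textbf{local consistency error} of one step of the integrator against the projected (Riemannian) flow \eqref{eq_fine_tune_grad_flow_dlrt}, (ii) the \textbf{modeling error} between the projected flow and the full-rank flow \eqref{eq_cont_gradient_flow_full_rank}, which is governed by the $\varepsilon$ assumption, and (iii) the accumulation of these errors via a discrete Gr\"onwall inequality using $L$-smoothness. The decomposition of the final bound into three pieces $c_1\varepsilon + c_2\lambda + c_3\vartheta/\lambda$ already tells us which term comes from where: $c_2\lambda$ is the first-order local truncation error of the explicit Euler-type step (note each step advances time by $\lambda$, so over $k=t/\lambda$ steps a per-step error $O(\lambda^2)$ accumulates to $O(\lambda)$); $c_3\vartheta/\lambda$ is the accumulation over $t/\lambda$ steps of the per-step rank-truncation error, which by \eqref{eq_truncation_threshold} is $O(\vartheta)$ in Frobenius norm; and $c_1\varepsilon$ (resp.\ $c_4\varepsilon$) is the distance between the full-rank flow and the manifold-constrained dynamics.

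First I would establish that a single step of \ALGNAME{} — the $K$/$S$/$L$ updates \eqref{eq_kls}, basis augmentation \eqref{eq_basis_aug}, assembly \eqref{eq_S_step}, and truncation \eqref{eq_truncation_threshold} — is, up to the truncation, \textbf{exactly} a first-order integrator for the projected gradient flow \eqref{eq_fine_tune_grad_flow_dlrt}. This is the crux and mirrors the robustness analysis of the parallel rank-adaptive integrator of \citet{ceruti2023parallelrankadaptiveintegratordynamical}: one shows that $\augWr_t := \augU \augS \augV^\top$ equals $W^r_t$ plus a gradient step projected onto the augmented (rank-$2r$) tangent space, and that the $S^{-1}$ factors appearing in \eqref{eq_kls} are precisely what is needed so that $K^{\text{new}} = U_0 S_0 - \lambda \nabla_W\mathcal{L}\, V_0$ (note $\nabla_U\mathcal L\, S_0^{-\top} = \nabla_W\mathcal L\, V_0 S_0 S_0^{-\top} = \nabla_W\mathcal L\, V_0$), so the stiff $S^{-1}$ in the continuous flow \eqref{eq_fine_tune_grad_flow_dlrt} is never actually inverted with a small singular value. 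Then a Taylor expansion of $\mathcal L$ and boundedness of $\nabla\mathcal L$ (from $B$ and $L$) give a local error $\|\augWr_t - \Phi_\lambda(W^r_t)\| \le C\lambda^2$ against the exact Riemannian flow map $\Phi_\lambda$, and the truncation contributes an additional $\|W^r_{t+1} - \augWr_t\| \le \vartheta$ (in the chosen norm) by construction \eqref{eq_truncation_threshold}.

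Next I would pass from the Riemannian flow to the full-rank flow. The assumption $\|(I - P(Z))\nabla\mathcal L(Z)\| < \varepsilon$ for $Z$ near $W(t)$ means the exact flows $\dot W = -\nabla\mathcal L(W)$ and $\dot W_{RF} = -P(W_{RF})\nabla\mathcal L(W_{RF})$ have right-hand sides differing by at most $\varepsilon$ near the trajectory; a Gr\"onwall estimate with rate $L$ then gives $\|W(t) - W_{RF}(t)\| \le c\,\varepsilon\,(e^{Lt}-1)/L$, i.e.\ $c_1\varepsilon$ with $c_1$ depending on $L,B,t$ (absorbed as ``depending only on $L$ and $B$'' for $t$ in a bounded horizon). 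The bound \eqref{eq:approx2} against $W_{RF}$ needs only steps (i) and (iii), so $c_4$ comes purely from the stability constant of the Riemannian flow, while \eqref{eq:approx} additionally folds in this modeling error, hence the different constant $c_1$.

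Finally I would assemble via a discrete Gr\"onwall / telescoping argument: writing $e_k = \|W(t_k) - W^r_{t_k}\|$, one-step consistency plus $L$-Lipschitz stability of the flow give $e_{k+1} \le (1 + C\lambda)e_k + C\lambda^2 + \vartheta$, and iterating $k = t/\lambda$ times yields $e_k \le e^{Ct}\big(e_0 + C\lambda + (t/\lambda)\vartheta\big)$, which is exactly the claimed form with $c_2 \sim Ce^{Ct}$, $c_3 \sim t e^{Ct}$, and $e_0=0$ (or controlled by the initialization discussion). \textbf{The main obstacle} I anticipate is step (i): rigorously verifying that the parallel $K$/$L$/$S$ updates followed by Gram--Schmidt augmentation and the $[S^{\text{new}}, L^\top\newV; \newU^\top K, 0]$ assembly reproduce a projected Euler step on the rank-$2r$ manifold to $O(\lambda^2)$ accuracy, especially handling the off-diagonal-zero block in $\augS$ and the orthonormalization error — this requires carefully tracking the geometry exactly as in the model-order-reduction literature and is where all the subtlety of the ``robust gradient'' construction lives; the Gr\"onwall accumulation and the $\varepsilon$-modeling step are comparatively routine.
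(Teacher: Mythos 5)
Your high-level plan — one-step consistency estimate, then accumulation via a Lady Windermere / discrete Gr\"onwall argument — matches the paper's structure, and you correctly spot the central algebraic identity that the $S^{-\top}$ factor in \eqref{eq_kls} cancels (since $\nabla_U\mathcal{L}\,S_0^{-\top}=\nabla_W\mathcal{L}\,V_0 S_0^{\top}S_0^{-\top}=\nabla_W\mathcal{L}\,V_0$), which is exactly \Cref{theo_gradient_trick}. The route you propose is different, though: you want to first show $\augWr_t$ is $O(\lambda^2)$ close to the exact Riemannian flow map $\Phi_\lambda(W^r_t)$, obtain \eqref{eq:approx2}, and only then add the modeling error $\|W-W_{RF}\|\lesssim\varepsilon$ to get \eqref{eq:approx}. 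The paper does the opposite: it compares $W^r_1$ directly to the \emph{full-rank} flow $W(t_1)$ by splitting
\[
\|W(t_1)-W^r_1\|\le \|W(t_1)-P_{\augU}W(t_1)P_{\augV}\| + \|P_{\augU}W(t_1)P_{\augV}-\augWr\| + \|\augWr-W^r_1\|,
\]
and then derives \eqref{eq:approx2} as a one-line corollary. The first piece is controlled by the structural fact that the augmentation is built so that $(I-P_{\augU})K^{\mathrm{new}}=0$ and $(I-P_{\augV})L^{\mathrm{new}}=0$; combined with $(I-P(W_0))f(W_0)$ being $\varepsilon$-small, this gives $O(\lambda^2+\lambda\varepsilon)$ without ever having to analyze $\Phi_\lambda$ or the rank-$r$ manifold geometry. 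The second piece is handled block-by-block through $U_0^\top W(t_1)V_0-S^{\mathrm{new}}$, $\newU^\top W(t_1)V_0 - \newU^\top K^{\mathrm{new}}$, $U_0^\top W(t_1)\newV - L^{\mathrm{new},\top}\newV$, and $\newU^\top W(t_1)\newV$ — exactly the four entries of $\augS$ — which is where the zero lower-right block matters and where your flagged ``main obstacle'' lives. So your obstacle is real, but the paper's choice to split through the augmented projection (rather than through $\Phi_\lambda$) is precisely what makes it tractable, because the algorithm's augmentation step is designed to make those off-projection pieces vanish to leading order.

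Two smaller remarks. First, your attribution of $c_4\varepsilon$ ``purely to the stability constant of the Riemannian flow'' is not quite right: even measuring against $W_{RF}$ there is an $O(\lambda\varepsilon)$ per-step consistency error (the algorithm steps tangentially to the rank-$2r$ manifold while $W_{RF}$ is confined to rank-$r$, and the $\varepsilon$ assumption is what bounds the discrepancy), so the $\varepsilon$ term in \eqref{eq:approx2} is a genuine consistency error, not merely a Gr\"onwall amplification constant. Second, as you note, the constants really depend on the time horizon $t$; both your proposal and the paper leave this dependence implicit, so that is not a point of discrepancy.
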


The proof is provided in \Cref{app_robust_error_bound}. We refer to \Cref{app_gradient_trick} for an interpretation of \Cref{alg_efficient_TDLRT} as an integrator of the gradient flow of \cref{eq_cont_gradient_flow_full_rank}.


Finally, we point out that the single-layer case discussed so far is  not restrictive, and all the theoretical results above can be directly transferred to the multilayer setting by means of the following proposition:
\begin{proposition}[Global structure preservation]\label{prop:structure_preservation}
The application of \Cref{alg_efficient_TDLRT} for multiple LoRA layers corresponds to the numerical integration of an augmented single matrix system on the adjacency matrix of the computational graph
\[
\dot{ \mathcal W} = - P(\mathcal W)\Pi \nabla \mathcal L(\mathcal W)
\]
Where $\Pi$ is a linear projection that depends only on the structure of the neural network architecture. Moreover, the application of \Cref{alg_efficient_TDLRT} to this system, leads to the global truncation strategy proposed in \Cref{sec:method}.
\end{proposition}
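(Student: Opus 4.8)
The plan is to realize the multilayer problem as a single matrix ODE via a block embedding and then check that \Cref{alg_efficient_TDLRT} commutes with this embedding. Given adapters $W_\ell = U_\ell S_\ell V_\ell^\top \in \mathbb{R}^{n_\ell\times m_\ell}$ for $\ell = 1,\dots,L$, I would assemble the single matrix $\mathcal{W}$ by placing each $W_\ell$ into the block of the adjacency matrix of the computational graph that connects the inputs of layer $\ell$ to its outputs, and zeros elsewhere. Let $\Pi$ be the orthogonal projector (in the Frobenius inner product) onto this block-sparsity pattern; it depends only on the wiring of the architecture, not on the weights. Since the forward pass only reads the entries of $\mathcal{W}$ inside this pattern, $\mathcal{L}$ viewed as a function of the single variable $\mathcal{W}$ has the same differential as the original multivariable loss, so that $\Pi\nabla_{\mathcal{W}}\mathcal{L}(\mathcal{W})$ is exactly the block matrix whose nonzero blocks are the per-layer gradients $\nabla_{W_\ell}\mathcal{L}$.

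Next I would identify the relevant manifold: let $\mathcal{M}$ be the set of matrices supported on this block pattern whose $\ell$-th block has rank $r_\ell$. It is diffeomorphic to the product $\mathcal{M}_{r_1}\times\cdots\times\mathcal{M}_{r_L}$, and its tangent space at $\mathcal{W}$ is the direct sum of the individual tangent spaces $T_{W_\ell}\mathcal{M}_{r_\ell}$ sitting in the corresponding blocks. Hence the orthogonal tangent projector acts block-wise: $P(\mathcal{W})Z$ has $\ell$-th block $P(W_\ell)Z_\ell$ and zeros elsewhere. Combining this with the previous paragraph, the system $\dot{\mathcal{W}} = -P(\mathcal{W})\Pi\nabla\mathcal{L}(\mathcal{W})$ reduces, block by block, to the Riemannian flow \cref{eq_fine_tune_grad_flow_dlrt} for each adapter, which establishes the first claim.

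For the second claim I would run \Cref{alg_efficient_TDLRT} on $\mathcal{W}$ with the block-diagonal initialization $\mathcal{U}=\mathrm{diag}(U_1,\dots,U_L)$, $\mathcal{V}=\mathrm{diag}(V_1,\dots,V_L)$, $\mathcal{S}=\mathrm{diag}(S_1,\dots,S_L)$ and verify that every step preserves the block structure and coincides with the per-layer step: the updates in \cref{eq_kls} couple $\nabla_{W_\ell}\mathcal{L}$ only with $U_\ell,S_\ell,V_\ell$ and are therefore block-diagonal; column-orthonormalization of a block-diagonal matrix is block-diagonal (a QR factorization respects the block structure), so the augmentation \cref{eq_basis_aug} keeps $\widehat{\mathcal{U}},\widehat{\mathcal{V}}$ block-diagonal; and the assembled coefficient matrix \cref{eq_S_step} is block-diagonal with blocks $\widehat S_\ell$. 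The only operation that mixes layers is the final SVD-based truncation, and here the key elementary fact is that the SVD of a block-diagonal matrix is the union, as a multiset (with the corresponding block-diagonal singular vectors), of the SVDs of its blocks; thus applying the cutoff \cref{eq_truncation_threshold} to $\widehat{\mathcal{S}}$ amounts to discarding the globally smallest squared singular values across all layers until their cumulative sum reaches the threshold---which is precisely the global truncation criterion \cref{eq_global_truncation_threshold}.

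The routine part is the block-diagonal bookkeeping. The point that needs care is making the ``adjacency matrix of the computational graph'' precise for architectures beyond plain feedforward stacks (residual connections, weight sharing, attention blocks): one has to argue that a single matrix $\mathcal{W}$ with a fixed block pattern, together with a fixed linear projector $\Pi$, still reproduces all the individual gradients $\nabla_{W_\ell}\mathcal{L}$. I expect this to be the main obstacle, and I would handle it by taking as the \emph{definition} of the embedding the property that the forward map factors through the block pattern, so that $\Pi\nabla_{\mathcal{W}}\mathcal{L} = (\nabla_{W_\ell}\mathcal{L})_\ell$ holds by construction, and by keeping the statement at the level of ``any computational graph whose trainable weights are the $W_\ell$''.
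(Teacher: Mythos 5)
Your proof is correct in substance and reaches all the conclusions the paper reaches, but it takes a genuinely different route, and the difference carries a commitment worth making explicit. The paper works with the tangent projector $P(\mathcal W)$ of the \emph{single} ambient manifold of rank-$r$ matrices of the full adjacency matrix; it then proves a structure-preservation lemma (for $Z\in\mathrm{range}(\Pi)$ one has $P(\mathcal W)Z\in\mathrm{range}(\Pi)$, because $UU^\top$ and $VV^\top$ are block-structured so left/right multiplication preserves the sparsity pattern), and only after that imposes Gauge conditions $\dot U_i^\top U_i=0$, $\dot V_i^\top V_i=0$ and a Galerkin condition restricted to block-structured variations of $T_{\mathcal W}\mathcal M_r\cap\mathrm{range}(\Pi)$ to recover the per-layer equations \eqref{eq_fine_tune_grad_flow_dlrt}. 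You instead declare the ambient manifold to be the embedded product $\mathcal M_{r_1}\times\cdots\times\mathcal M_{r_L}$, so that the tangent projector is block-wise by construction, and then verify that each step of \Cref{alg_efficient_TDLRT} commutes with the block embedding (QR and SVD factor block-wise, and the block-diagonal SVD observation cleanly yields the global truncation criterion \eqref{eq_global_truncation_threshold}). Your version is more elementary and makes the algorithmic block-compatibility check cleaner, but it silently redefines $P(\mathcal W)$: in the paper's notational conventions $P(\cdot)$ denotes the projector onto the tangent space of the single rank-$r$ manifold, which is a priori larger than the product tangent space. Showing the two projectors coincide on $\mathrm{range}(\Pi)$ is exactly the content of the paper's structure-preservation lemma, so to connect your argument to the statement as written you would need to either prove that lemma or rephrase the proposition explicitly in product-manifold language. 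Your caveat about residual connections and weight sharing is well-founded: the paper's argument is also specialized to a feedforward-shaped super-diagonal adjacency matrix, and your proposed fix --- defining the embedding by the property that the forward map factors through the block pattern, so that $\Pi\nabla_{\mathcal W}\mathcal L = (\nabla_{W_\ell}\mathcal L)_\ell$ holds by construction --- is consistent with what the paper's proof actually uses.
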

The proof of \Cref{prop:structure_preservation} can be found in \Cref{app_full_network_lr} together with the relative derivation of the global truncation strategy. 

\section{Numerical Results}

\paragraph{DeBERTa for GLUE.} 
We evaluate the performance of \ALGNAME{} by fine-tuning the 183 million parameter transformer DeBERTaV3-base \citep{he2023debertav3improvingdebertausing} on the GLUE Benchmark \citep{wang2019gluemultitaskbenchmarkanalysis} and compare the results in \Cref{tab_results}. For details on the methods, implementation, hyperparameter choices, and benchmark setup, please refer to \Cref{app_glue}. In most cases, \ALGNAME{} outperforms other methods on the benchmark, achieving better metrics with significantly fewer trainable parameters. This reduction in trainable parameters allows \ALGNAME{} to process substantially more samples during training and evaluation compared to AdaLoRA.

\paragraph{Performance analysis.}
The proposed method from \Cref{alg_efficient_TDLRT} combines low-rank optimality guarantees with significant computational efficiency gains compared to existing low-rank optimization methods, as shown in \Cref{tab_method_comparison}. For a rank $r$ adapter, the computational cost of gradient evaluation (i.e., \cref{eq_kls}) is equivalent to that of AdaLoRA, which updates $U$, $S$, and $V$ directly, and is similar to a standard LoRA update. The cost of basis augmentation is $\mathcal{O}(n r^2)$ due to the QR decomposition in \cref{eq_basis_aug}, comparable to evaluating the orthonormality regularization terms in AdaLoRA. Rank truncation is performed via an SVD of $S$ at a cost of $\mathcal{O}(r^3)$, where typically $r \ll n$. The complexity analysis shows comparable per-iteration costs for LoRA, AdaLoRA, and \ALGNAME{}. In \Cref{tab_results}, we also report the number of iterations computed per second during training and evaluation for both \ALGNAME{} and AdaLoRA, demonstrating that \ALGNAME{} outperforms AdaLoRA across almost all GLUE benchmarks. We note that training and inference speed depend on both layer ranks and sequence lengths, and the performance difference is less pronounced for benchmarks with longer sequences.

\begin{table}[t]
    \centering
\caption{Method comparison for low-rank finetuning. We compare the computational cost of a single training step for an $n\times n$ layer matrix of rank $r$. In the table, ``local optimality'' refers to the property $P(W_\star)\nabla \mathcal L(W_\star) = 0$ for the computed adapter $W_\star$, as discussed in \Cref{sec:what_can_go_wrong}.}
 \resizebox{\textwidth}{!}
 {
\begin{tabular}{l  ccccc c  c }
\toprule
 Method & compute (per iteration) & memory (per iteration) & \# gradient evals. & rank adaptive & local optimality\\
\midrule
Full FT& $\landauO(n^2)$ & $\landauO(n^2)$ & 1 & - &   \cmark &\\
\midrule
\ALGNAME{}& $\landauO( 2nr + (2n+ 1)r^2 + r^3)$ & $\landauO(4nr + 3r^2)$ & 1 & \cmark &  \cmark\\
AdaLoRA \citep{zhang2023adalora}& $\landauO(2nr + (2n+ 1)r^2 + r^3)$ & $\landauO(2nr + 3r^2)$ & 1 & \cmark  &\xmark \\
DLRT \citep{Schothoefer_2022} & $\landauO(6nr + (2n + 5)r^2 + 9r^3)$ & $\landauO(4nr + 3r^2)$ & 3 & \cmark &   \cmark \\
LoRA \citep{hu2021lora} &$\landauO(2nr)$ &$\landauO(2nr)$ &1 & \xmark & \xmark \\
\bottomrule
\end{tabular}
}
\vspace{-3em}
    \label{tab_method_comparison}
\end{table}
\begin{table}[t]
\centering
\caption{DeBERTaV3-base fine-tuning on GLUE. We compare with full fine-tuning (Full FT), Houlsby adapter \citep{pmlr-v97-houlsby19a} (HAdapter),  Pfeiffer adapter \citep{pfeiffer2021adapterfusionnondestructivetaskcomposition} (PAdapter), LoRA \citep{hu2021lora}, AdaLoRA \citep{zhang2023adalora} and Bitfit\citep{zaken2022bitfitsimpleparameterefficientfinetuning}. We report target metrics and computational performance (higher is better) for the median of 5 runs using different random seeds. Best results per dataset are shown
in bold. Results for BitFit, HAdapter, PAdapter were taken from \citep{zhang2023adalora}.}
\label{tab_results}
\resizebox{\textwidth}{!}{%
\begin{tabular}{lcccccccc}
\toprule
Method (\textcolor{blue}{\# Params})&  MNLI & SST-2  & CoLA & QQP  & QNLI  & RTE  & MRPC  & STS-B  \\
  &(Acc) &(Acc) &(Mcc) &(F1)&(Acc)&(Acc)&(Acc)&(Corr)\\
\midrule
Full FT  (\textcolor{blue}{184M})  & 89.90 & 95.63 & 69.19 & 89.80 & 94.03 & 83.75 & 89.46 & 91.60  \\
\midrule
BitFit     (\textcolor{blue}{0.1M}) & 89.37 & 94.84 & 66.96 & 84.95 & 92.24 & 78.70 & 87.75 & 91.35  \\
HAdapter  (\textcolor{blue}{1.22M})  & 90.13 & 95.53 & 68.64 & 89.27 & 94.11 & 84.48 & 89.95 & 91.48 \\
PAdapter   (\textcolor{blue}{1.18M})  & 90.33 & 95.61 & 68.77 & 89.40 & \textbf{94.29} & 85.20 & 89.46 & 91.54 \\

LoRA r=8 (\textcolor{blue}{1.33M})  & 90.29 & 95.29 & 68.57 & 90.61 & 93.91& 85.5 & 89.75 & 89.10\\
AdaLoRA  (\textcolor{blue}{1.27M})  &\textbf{90.44}& 95.64  & 68.76 & \textbf{90.65} & 94.11 & \textbf{86.00} & 89.44 & 91.41 \\
\ALGNAME{} (\textcolor{blue}{reported individually})   &89.98 (\textcolor{blue}{0.7M})&\textbf{95.98} (\textcolor{blue}{1.17M})&\textbf{69.03} (\textcolor{blue}{0.98M})&90.53 (\textcolor{blue}{0.69M}) &94.23 (\textcolor{blue}{0.7M})&85.93 (\textcolor{blue}{1.19M})&\textbf{90.10} (\textcolor{blue}{0.75M}) & \textbf{91.58} (\textcolor{blue}{0.71M})\\
\midrule
\multicolumn{9}{l}{Evaluation and train time comparison}\\ 
\midrule 
AdaLoRA (eval/train) [it/sec] & 12.4/4.3& 17.6/6.7  &  24.6/8.1& 9.2/3.2 & 4.9/1.6 & 10.3/3.2 &9,9/3.1  & 21.1/\textbf{8.5}\\
\ALGNAME{} (eval/train) [it/sec] &\textbf{17.1/4.9}& \textbf{21.3/8.3} & \textbf{37.4/9.1} & \textbf{12.0/3.8} &\textbf{5.9/1.8}  & \textbf{13.2/3.7} & \textbf{12.6/3.7}  &\textbf{21.3/}8.3 \\
\bottomrule
\end{tabular}
}
\end{table}

\paragraph{Vision transformer for object classification.}
We compare \ALGNAME{} and AdaLoRA on fine-tuning the Vit-base-patch16-224  Vision Transformer, pre-trained on the Imagenet-1k dataset, and fine-tuned on Cifar10, Cifar100, and Tiny-Imagenet. Details on implementation and hyperparameters are provided in \Cref{app_vit}. \Cref{tab_vit} shows that \ALGNAME{}, with both global and local rank budgeting, achieves higher validation accuracy than AdaLoRA, while using fewer trainable parameters.

\begin{table}[t]
    \centering
    \begin{minipage}[t!]{0.67\textwidth}
\centering
\caption{Vit-base-patch16-224 fine-tuning on Cifar10, 100 and Tiny-Imagenet. We compare AdaLoRA to \ALGNAME{} with local and global budgeting reporting the median of 5 runs using different random seeds.}
\label{tab_vit}
\resizebox{\textwidth}{!}
{%
\begin{tabular}{lccccccc}
\toprule
Method &\multicolumn{2}{c}{ Cifar 10 [\%]} &\multicolumn{2}{c}{ Cifar 100 [\%]}  & \multicolumn{2}{c}{ Tiny-Imagenet  [\%]} \\
& \# Params &Acc [\%] &\# Params &Acc [\%] &\# Params &Acc [\%]\\
\midrule
AdaLoRA    & \textbf{0.47M} & 98.51 & 0.45M & 91.44  &0.9M & 87.2\\
\ALGNAME{}, local & \textbf{0.47M} & \textbf{98.55} &\textbf{0.35M}&\textbf{91.63}& 0.92M & \textbf{88.09} \\
\ALGNAME{}, global  & 0.48M & 98.51  & 0.47M & 91.62 & \textbf{0.75M} & 88.07 \\
\bottomrule
\end{tabular}
}
\vfill
 \end{minipage}
    \hfill
    \begin{minipage}[t!]{0.31\textwidth}
    \centering
        \caption{Stable Diffusion on Dreambooth benenchmark. We compare LoRA and \ALGNAME{} reporting the median of 5 runs.}
    \label{tab:lora_stable_diff}
    \resizebox{\textwidth}{!}
{%
\begin{tabular}{lcc}
\toprule
Method & Val. Loss & \# Params  \\
\midrule
 LoRA ($r = 5$) & $0.275$ &$3.0$ M \\
        LoRA ($r = 3$) & $0.281$ &$1.8$ M \\
        \midrule
        \ALGNAME{} ($\tau = 0.02$) &\textbf{0.242} &\textbf{2.6M} \\
        \ALGNAME{} ($\tau = 0.1$) & \textbf{0.257} &\textbf{1.4M} \\
\bottomrule
\end{tabular}
}
     \end{minipage}
\end{table}

\paragraph{Ablations.}
In \Cref{fig_vit_rafer_init_r}, we examine how the performance of \ALGNAME{} is influenced by the initial rank and learning rate. \Cref{fig_vit_rafer_init_r}(a, b) demonstrate that \ALGNAME{} dynamically recovers the intrinsic rank of the low-rank adaptation, regardless of the initial rank, highlighting the robustness of the method with respect to this hyperparameter. Notably, \ALGNAME{} can extend the adapter rank to full rank if necessary within logarithmic time, while truncating in constant time (in terms of optimization iterations). We provide a detailed discussion of the rank distribution across transformer layers in \Cref{app_vit}. Similarly, \Cref{fig_vit_rafer_init_r}(c, d) show that \ALGNAME{} is less sensitive to learning rate variations compared to AdaLoRA.

\paragraph{Dreambooth stable diffusion.}
Finally, we test \ALGNAME{} on fine-tuning Stable Diffusion \citep{rombach2021highresolution} using Dreambooth \citep{ruiz2023dreamboothfinetuningtexttoimage} on their original datasets. Implementation details are provided in \Cref{app_stable_diff}. In \Cref{tab:lora_stable_diff}, we compare the validation loss and number of parameters between LoRA and \ALGNAME{}, showing that \ALGNAME{} consistently achieves lower validation loss with fewer trainable parameters.

\begin{figure}[t]
    \centering
        \centering
        \begin{subfigure}[b]{0.4\textwidth}
            \centering
            \includegraphics[width=\textwidth]{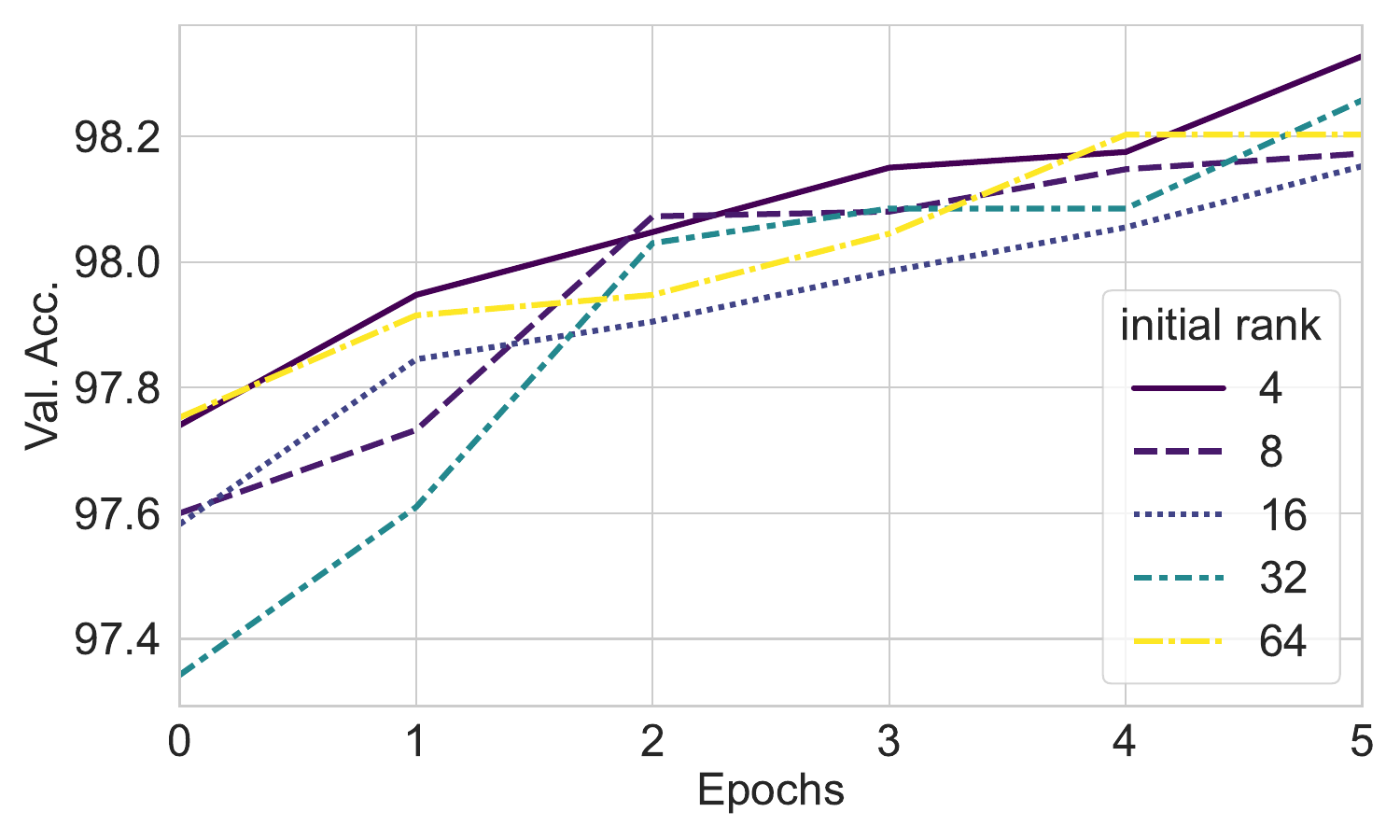}
            \caption{Validation accuracy over epochs.}
        \end{subfigure}
        \begin{subfigure}[b]{0.4\textwidth}
            \centering
            \includegraphics[width=\textwidth]{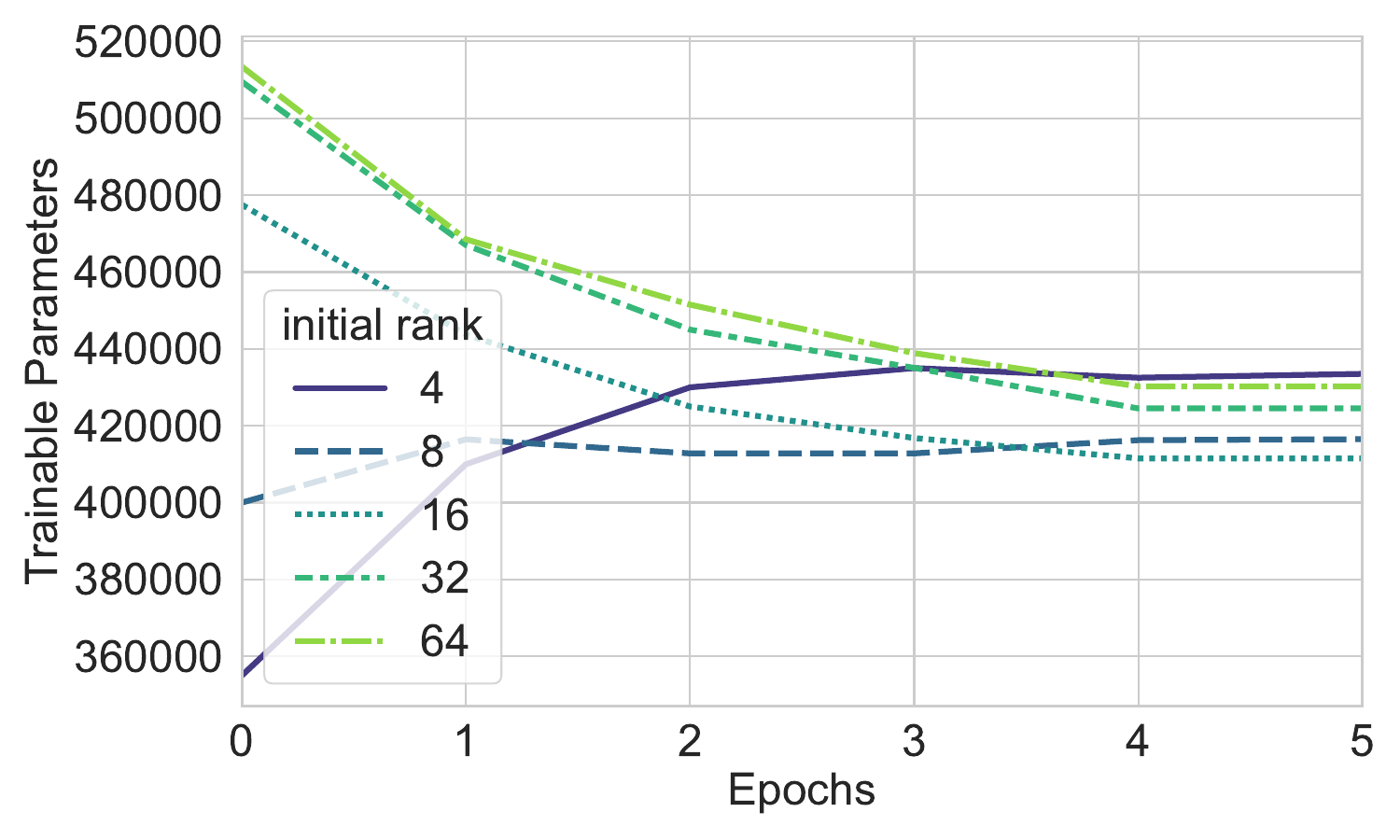}
            \caption{Trainable parameters over epochs. }
        \end{subfigure}\\
        \begin{subfigure}[b]{0.4\textwidth}
            \centering
            \includegraphics[width=\textwidth]{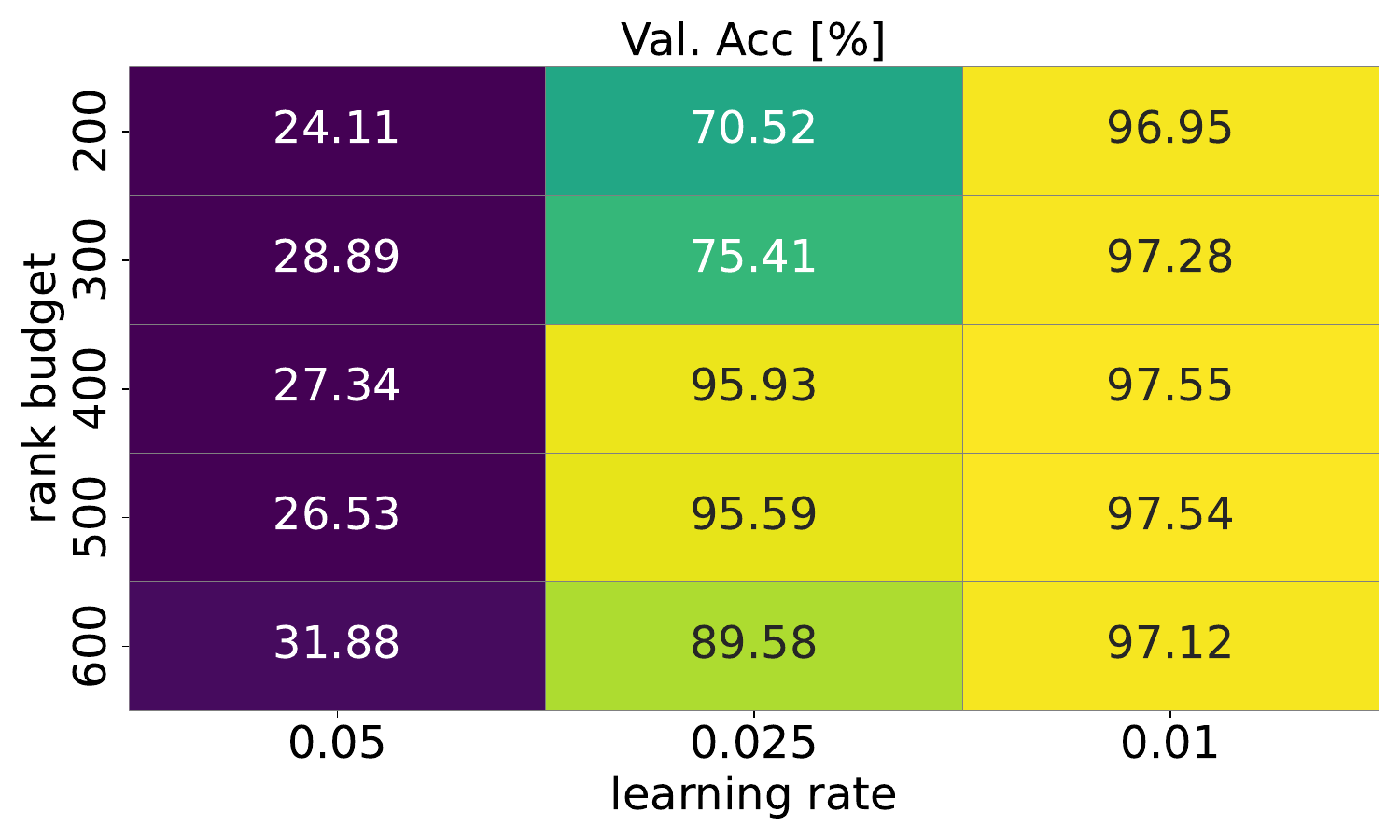}
            \caption{\ALGNAME{}}
        \end{subfigure}
        \begin{subfigure}[b]{0.4\textwidth}
            \centering
            \includegraphics[width=\textwidth]{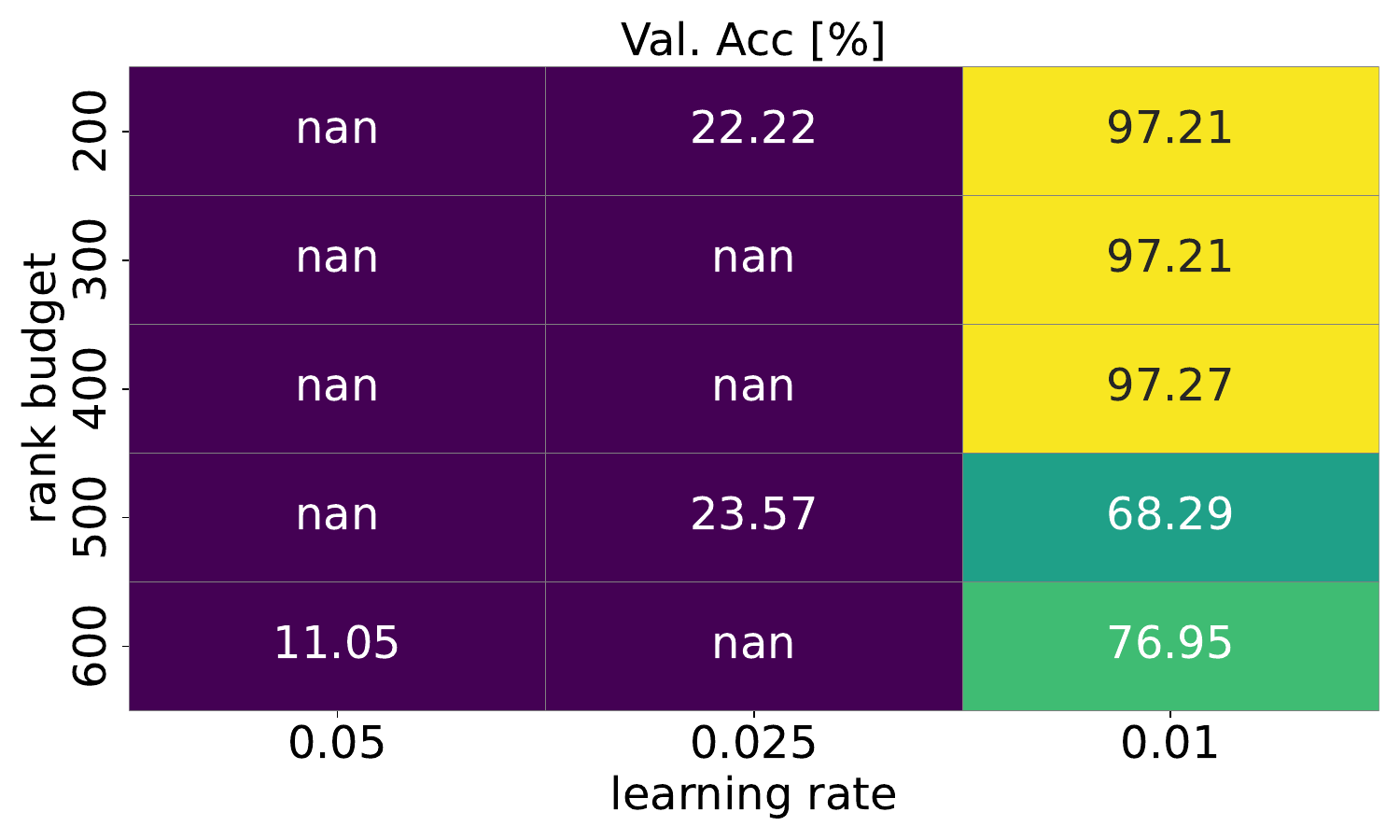}
                       \caption{AdaLoRA}
        \end{subfigure}
\caption{\textbf{Top panels (a, b):} \ALGNAME{}-adapted ViT-32b fine-tuned on Cifar10 with different initial layer ranks, using a learning rate of $1\mathrm{e}{-3}$ and $\tau=0.3$. The total number of trainable parameters converges to a similar steady state, regardless of the initial rank. The differences in validation accuracy between runs are smaller than the variance observed within individual setups. \textbf{Bottom panels (c, d):} \ALGNAME{}- and AdaLoRA-adapted ViT-32b fine-tuned on Cifar10 with different rank budgets and learning rates. Fields marked with \texttt{nan} indicate that training diverged within the first epoch. \ALGNAME{} demonstrates significantly greater robustness than AdaLoRA, particularly with high learning rates.}
 \label{fig_vit_rafer_init_r}
\end{figure}

\section{Conclusion}
We introduced \ALGNAME{} (\ALGNAMELONG), a novel adaptive low-rank fine-tuning method that combines computational efficiency with robustness. Based on geometric principles from dynamical low-rank approximation theory, the method comes with guarantees of convergence and local optimality. By leveraging a parallel update strategy of the low-rank adapters, the method requires only a single backward pass per iteration, achieving inference and training speed comparable or superior to existing baselines such as AdaLoRA, and much more efficient than previous geometric-aware strategies.
Our experiments on the GLUE benchmark, Vision Transformers, and Stable Diffusion demonstrate that \ALGNAME{} outperforms existing PEFT methods in both accuracy and efficiency, with fewer trainable parameters. These results, alongside strong theoretical guarantees, position \ALGNAME{} as a robust solution for efficient model adaptation.

\section{Funding Acknoledgements}{
The work of E. Zangrando was funded by
the MUR-PNRR project “Low-parametric machine learning”. \\
Francesco Tudisco is partially funded by the PRIN-MUR project MOLE and the PRIN-PNRR project FIN4GEO within the European Union's Next Generation EU framework, Mission 4, Component 2, CUP P2022BNB97. \\
The work of Steffen Schotthöfer is sponsored by the Applied Mathematics Progrm at the Office of Advanced Scientific Computing Research, U.S. Department of Energy, and performed at the Oak Ridge National Laboratory, which is managed by UT-Battelle, LLC under Contract No. DE-AC05-00OR22725 with the U.S. Department of Energy. The United States Government retains and the publisher, by accepting the article for publication, acknowledges that the United States Government retains a non-exclusive, paid-up, irrevocable, world-wide license to publish or reproduce the published form of this manuscript, or allow others to do so, for United States Government purposes. The Department of Energy will provide public access to these results of federally sponsored research in accordance with the DOE Public Access Plan (http://energy.gov/downloads/doe-public-access-plan).
}
\newpage

\bibliography{iclr2025_conference}
\bibliographystyle{iclr2025_conference}

\appendix
\newpage

\section{Algorithm for auxiliary functions}
We present the auxiliary function for \Cref{alg_efficient_TDLRT} in \Cref{alg_helper}.
\begin{algorithm}[t]

\DontPrintSemicolon
\SetAlgoLined

\vspace{.2em}
\SetKwProg{Def}{def}{:}{}
\Def{   {\tt  optimizer\_step}($P$: param, $G$: gradient, $\lambda$: learning rate)}{
$P^\textup{new}\gets P -\lambda G$\tcc*{May use momentum and weight decay} 
return $P^\textup{new}$
}
\vspace{.2em}
\SetKwProg{Def}{def}{:}{}
\Def{   {\tt  basis\_augmentation}($B$: old basis, $G_B$: basis dynamics)}{
$[B \mid \widetilde{B}] \gets  \texttt{qr}([B \mid G_B])$ \;
return  $\widetilde{B}$
}

\vspace{.2em}
\SetKwProg{Def}{def}{:}{}
\Def{   {\tt  truncation}($\augS$: augmented coefficient, $\augU$: augmented basis, $\augV$: augmented co-basis )}{
$P_{r_1}, \Sigma_{r_1}, Q_{r_1} \gets$ truncated \texttt{svd}$(\widetilde{S})$ with threshold  $\vartheta$ to new rank $r_1$\;
$U\gets   \augU P_{r_1}$;
$V\gets   \augV Q_{r_1}$ \tcc*{Basis update}
$S\gets \Sigma_{r_1}; S^{\textup{inv}}\gets \Sigma_{r_1}^{-1}$ \tcc*{Coefficient update with diagonal $\Sigma_{r_1}$}
return  $U,S,V,S^{\textup{inv}}$\;
}
\caption{Various auxiliary functions}\label{alg_helper}
\end{algorithm}
\section{Additional information for the numerical test cases}
\subsection{GLUE Benchmark}\label{app_glue}
\subsubsection{Dataset description}
We compare \ALGNAME{} to several fine-tuning methods from recent literature in the General Language Understanding Evaluation (GLUE) benchmark \citep{wang2019gluemultitaskbenchmarkanalysis}. 
The GLUE benchmark is a collection of diverse natural language understanding tasks designed to evaluate the performance of models in comprehending and processing human language. GLUE provides a comprehensive assessment by including tasks that cover a range of linguistic phenomena, such as textual entailment, sentiment analysis, sentence similarity, and more. The benchmark consists of nine different tasks:
\begin{itemize}[leftmargin=*,noitemsep,topsep=0em]
\item CoLA (Corpus of Linguistic Acceptability): Classifying whether a sentence is grammatically correct or not.
\item SST-2 (Stanford Sentiment Treebank): Sentiment analysis task where the goal is to classify the sentiment of a sentence as positive or negative.
\item MRPC (Microsoft Research Paraphrase Corpus): Identifying if two sentences are paraphrases of each other.
\item STS-B (Semantic Textual Similarity Benchmark): Measuring the degree of semantic similarity between two sentences on a scale from 1 to 5.
\item QQP (Quora Question Pairs): Determining if a pair of questions are semantically equivalent.
\item MNLI (Multi-Genre Natural Language Inference): Classifying the relationship between a pair of sentences (entailment, contradiction, or neutral).
\item QNLI (Question Natural Language Inference): Determining if a sentence provides a correct answer to a given question.
\item RTE (Recognizing Textual Entailment): Binary classification task for entailment and contradiction.
\item WNLI (Winograd Schema Challenge): Resolving pronoun reference ambiguity in sentences.
Specific Focus: MRPC (Microsoft Research Paraphrase Corpus)
\end{itemize}
We present the benchmark overview in \Cref{tab_glue_overview}.
\begin{table}[h]
\caption{Summary of GLUE benchmark tasks}
\label{tab_glue_overview}
\centering
\begin{tabular}{@{}l l r r r r l@{}}
\toprule
\textbf{Corpus} & \textbf{Task} & \textbf{\#Train} & \textbf{\#Dev} & \textbf{\#Test} & \textbf{\#Label} & \textbf{Metrics} \\ 
\midrule
\multicolumn{7}{c}{\textbf{Single-Sentence Classification (GLUE)}} \\ 
CoLA & Acceptability & 8.5k & 1k & 1k & 2 & Matthews corr \\
SST  & Sentiment & 67k & 872 & 1.8k & 2 & Accuracy \\ 
\midrule
\multicolumn{7}{c}{\textbf{Pairwise Text Classification (GLUE)}} \\ 
MNLI & NLI & 393k & 20k & 20k & 3 & Accuracy \\
RTE & NLI & 2.5k & 276 & 3k & 2 & Accuracy \\
QQP & Paraphrase & 364k & 40k & 391k & 2 & F1 \\
MRPC & Paraphrase & 3.7k & 408 & 1.7k & 2 & Accuracy \\
QNLI & QA/NLI & 108k & 5.7k & 5.7k & 2 & Accuracy \\
\midrule
\multicolumn{7}{l}{\textbf{Text Similarity (GLUE)}} \\ 
STS-B & Similarity & 7k & 1.5k & 1.4k & 1 & Pearson/Spearman cor \\
\bottomrule
\end{tabular}
\end{table}
To recapitulate, the F1 score is defined in dependence of precision score $P$ and recall score $R$.
The model precision $P$ is given by
\begin{align}
    P := \frac{P_T}{P_T+P_F},
\end{align}
where $P_T$ is the number of true positive and $P_F$ is the number of false positive examples.
The recall $R$ is the ratio
\begin{align}
    R := \frac{P_T}{P_T +N_F},
\end{align}
where $N_F$ are the false negatives. The F1 score combines these two metrics to
\begin{align}
    F1 := \frac{2P R}{P+R}\,.
\end{align}
\subsubsection{Reference implementations}
\textbf{Full finetuning (FT)}: This is the most common approach for model finetuning and transfer learning. Here, the model is initialized with pre-trained weights and all model parameters are updated with gradient descent. 

\textbf{Bitfit~\citep{zaken2022bitfitsimpleparameterefficientfinetuning}}: Here, the model is initialized with pre-trained weights, but only bias terms are updated with gradient descent. 

\textbf{Adapter tuning~\citep{pmlr-v97-houlsby19a,pfeiffer2021adapterfusionnondestructivetaskcomposition}}: Two-layer adapters are inserted between transformer blocks. In \citep{pmlr-v97-houlsby19a}, the adapter is inserted between the self-attention module and the feed-forward module and equipped with a residual connection. In \citep{pfeiffer2021adapterfusionnondestructivetaskcomposition}, the adapter is applied after the feed-forward module and the layer-norm module. To maintain conistency with the notation of \citep{zhang2023adalora}, we call the method of \citep{pmlr-v97-houlsby19a} HAdapter and the  method of \citep{pfeiffer2021adapterfusionnondestructivetaskcomposition} PAdapter. 

\textbf{LoRA~\citep{hu2021lora}}: As stated in \Cref{sec:what_can_go_wrong}, LoRA applies additive corrections to selected weight matrices, i.e.  $\fz = \sigma(W_{\mathrm{pt}}\fx + \frac{\alpha}{r}AB^\top\fx)$ for $A,B\in\mathbb{R}^{n\times r}$. We apply LoRA to key $W_k$, query $W_q$ and value $W_v$ matrices of all attention blocks, and to both feed-forward layers $W_{f_1}$ and $W_{f_2}$. 

\textbf{AdaLoRA~\citep{zhang2023adalora}}: As stated in \Cref{sec:what_can_go_wrong}, AdaLoRA applies additive corrections to selected weight matrices, i.e.   $\fz = \sigma(W_{\mathrm{pt}}\fx + \frac{\alpha}{r}USV^\top\fx)$
with arbitrary activation $\sigma$, frozen pre-trained weights $W_{\mathrm{pt}}\in\mathbb{R}^{n\times n}$,  rank $r$ adapter weights $U,V\in\mathbb{R}^{n\times r}$, $S\in\mathbb{R}^{r\times r}$. An SVD-based truncation mechanism is used to select layer ranks. Alternatively, the loss-sensitivity of singular vectors can be used for layer rank selection.  Just like LoRA, we apply AdaLoRA to key $W_k$, query $W_q$ and value $W_v$ matrices of all attention blocks, and to both feed-forward layers $W_{f_1}$ and $W_{f_2}$.

We use the implementation of \citep[Appendix C]{zhang2023adalora} to compute the results for the presented reference methods and use their reported hyper-parameter choices:
We compare the baselines under different budget levels, for example, given the total trainable
parameters as 0.3/0.6/1.2 million. In order to match the parameter budget, we select the hidden
dimensions of adapters from \{8, 16, 32, 64\}, set the rank r of LoRA as \{2, 4, 8\}, and choose the
final budget $b^{(T)}$ of AdaLoRA from \{144, 288, 576\}. Then we set $b^{(0)}$ as 1.5 times of $b^{(T)}$
for AdaLoRA and select the regularization coefficient $\gamma$ from \{0.1, 0.3, 0.5\}. We set the exponential moving average parameters $\beta_1$ and $\beta_2$ of AdamW
as their default value $0.85$. We select the learning rate from $\{5\rm{e}{-5}, 8 \rm{e}{-5}, 1\rm{e}{-4},2\rm{e}{-4}\}$.

\subsubsection{Implementation details}
We implement \ALGNAME{} as similar as possible as Adalora to achieve a fair comparison. That is, we add an adapter of the form $\fz = \sigma(W_{\mathrm{pt}}\fx + USV^\top\fx)$  to the key $W_k$, query $W_q$ and value $W_v$ matrices of all attention blocks, and to both feed-forward layers $W_{f_1}$ and $W_{f_2}$. For each adapter, we employ \Cref{alg_efficient_TDLRT} to update the layer weights and ranks.

In \Cref{tab_glue_hyperparam}, we display the hyper-parameter choices of \ALGNAME{}
\begin{table}[h!]
\centering
\caption{Hyper-parameter setup for the GLUE benchmark.}
\label{tab_glue_hyperparam}
\begin{tabular}{lccccccc}
\toprule
\textbf{Dataset} & \textbf{Learning Rate} & \textbf{Batch Size} & \textbf{\# Epochs} & $\tau$ & inital rank\\
\midrule
MNLI   & $5 \times 10^{-4}$  & 32 & 7  & 0.15 &10\\
RTE    & $1.2 \times 10^{-3}$ & 32 & 50 & 0.15 &10\\
QNLI   & $1.2 \times 10^{-3}$ & 32 & 5 & 0.15 &10 \\
MRPC   & $1 \times 10^{-3}$   & 32 & 30 & 0.15 &10\\
QQP    & $5 \times 10^{-4}$   & 32 & 5  & 0.15 &10\\
SST-2  & $8 \times 10^{-4}$   & 32 & 24 & 0.15 &10\\
CoLA   & $5 \times 10^{-4}$   & 32 & 25 & 0.15 &10\\
STS-B  & $2.2 \times 10^{-3}$ & 32 & 25 & 0.15 &10\\
\bottomrule
\end{tabular}
\end{table}

\subsection{Object classification benchmarks for the  vit-base-patch16-224 vision transformer}\label{app_vit}
We present in \Cref{tab_vit} results for finetuning the vit-base-patch16-224 vision transformer, which is pretrained on the imagenet-1k-dataset. The pretrained weights are downloaded from the torch-vision python package.
For both AdaLora and \ALGNAME{}, we augment the key, query, and value matrices from attention layers as well as the three fully connected layers of each transformer block with a low-rank adapter. The biases of each layer are trainable. Additionally, the classifier is augmented with a low-rank adapter. {The classifier is low-rank by construction, and we fix the rank as the number of classes.} 
We fine-tune the vision transformer on Cifar10, Cifar100 and Tiny-Imagenet. 

The hyperparameter settings to generate the results of \Cref{tab_vit}, \Cref{tab_ranks1} and \Cref{tab_rank2} are given in \Cref{tab_hperparam_cifar}.
 
\begin{table}[h!]
\centering
\caption{Hyper-parameter setup for fine-tuning vit-base-patch16-224 vision transformer  with \ALGNAME{}.}
\label{tab_hperparam_cifar}
\begin{tabular}{lccccccc}
\toprule
\textbf{Dataset} & \textbf{Learning Rate} & \textbf{Batch Size} & \textbf{\# Epochs} & $\tau$ & inital rank\\
\midrule
Cifar10   & $1 \times 10^{-3}$  & 256 & 5  & 0.2 &16\\
Cifar100    & $1 \times 10^{-3}$ & 256 & 5 & 0.25 &32\\
TinyImageNet   & $1 \times 10^{-4}$ & 256 & 5 & 0.15 &32 \\
\bottomrule
\end{tabular}
\end{table}

\Cref{tab_ranks1} and \Cref{tab_rank2} show the rank distribution across layers for both AdaLoRA and \ALGNAME{} with global budget, for learning rate $\lambda=1\rm{e}{-3}$ and $\lambda=1\rm{e}{-4}$ and budgets ranging from $b=200,\dots, 600$ total ranks for the network. 
 Both methods prefer to allocate higher ranks to the deeper layers of the vision transformer, and prefer fully-connected layers over attention layers. 
Both methods pefer the first fully connected layer of a transformer block over the second. Overall \ALGNAME{} tends to assert higher ranks to single layers, compared to AdaLora, that distributes ranks more heterogeneously.
The effects are more pronounced for smaller learning rates.

\begin{figure}[t]
    \centering

    \begin{subfigure}[b]{0.4\textwidth}
        \centering
        \includegraphics[width=\textwidth]{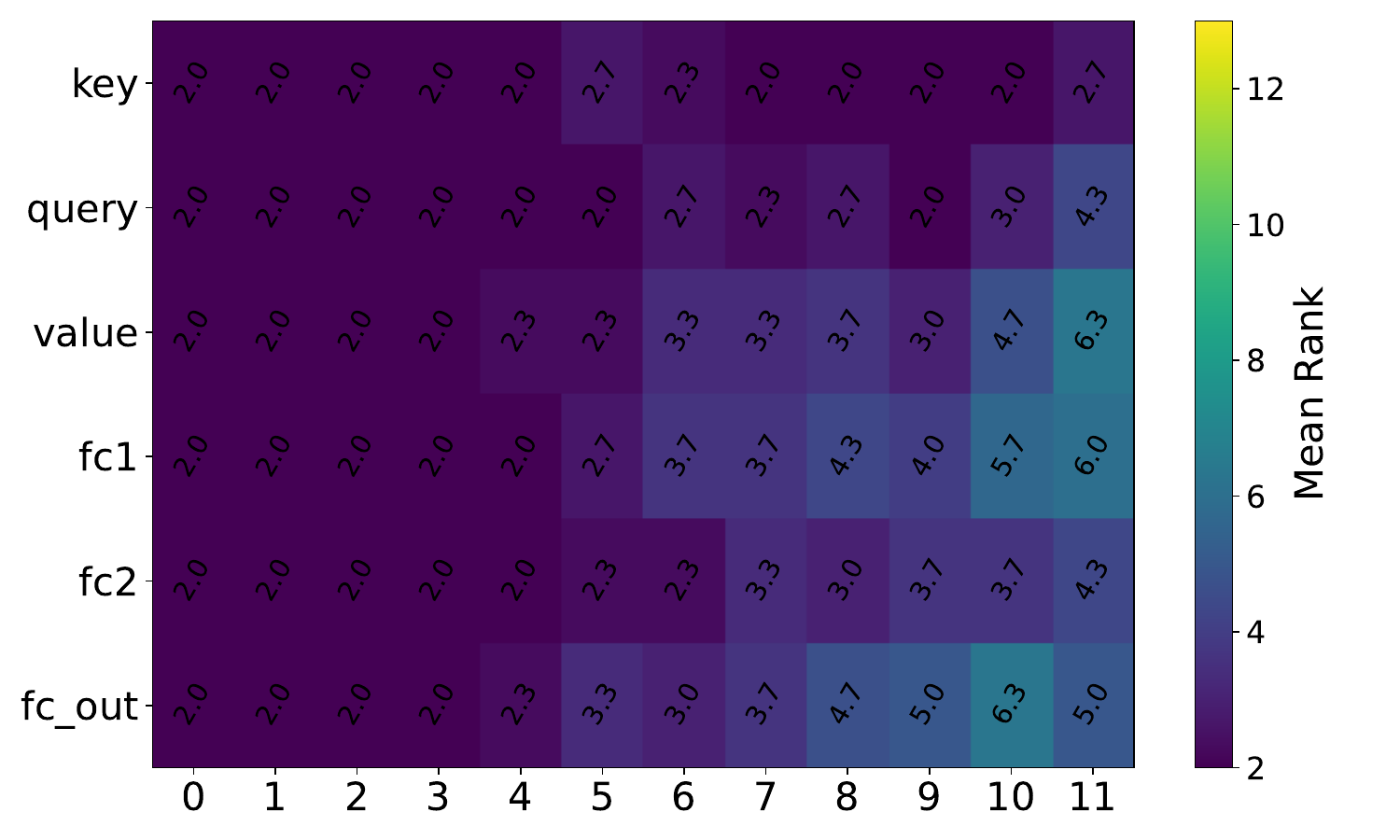}
        \caption{\ALGNAME{}: mean b=200}
    \end{subfigure}
    \begin{subfigure}[b]{0.4\textwidth}
        \centering
        \includegraphics[width=\textwidth]{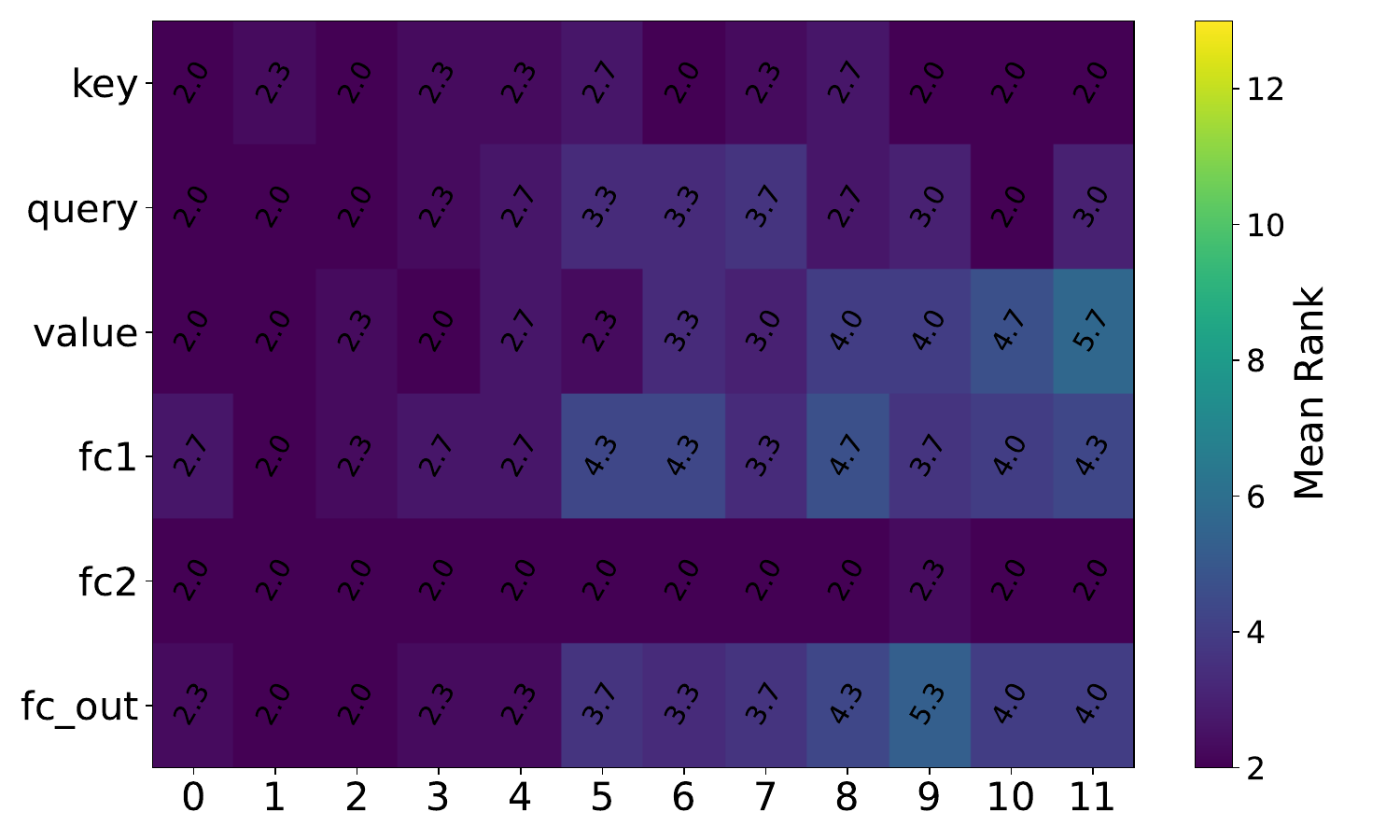}
        \caption{AdaLoRA: mean b=200}
    \end{subfigure}

    \begin{subfigure}[b]{0.4\textwidth}
        \centering
        \includegraphics[width=\textwidth]{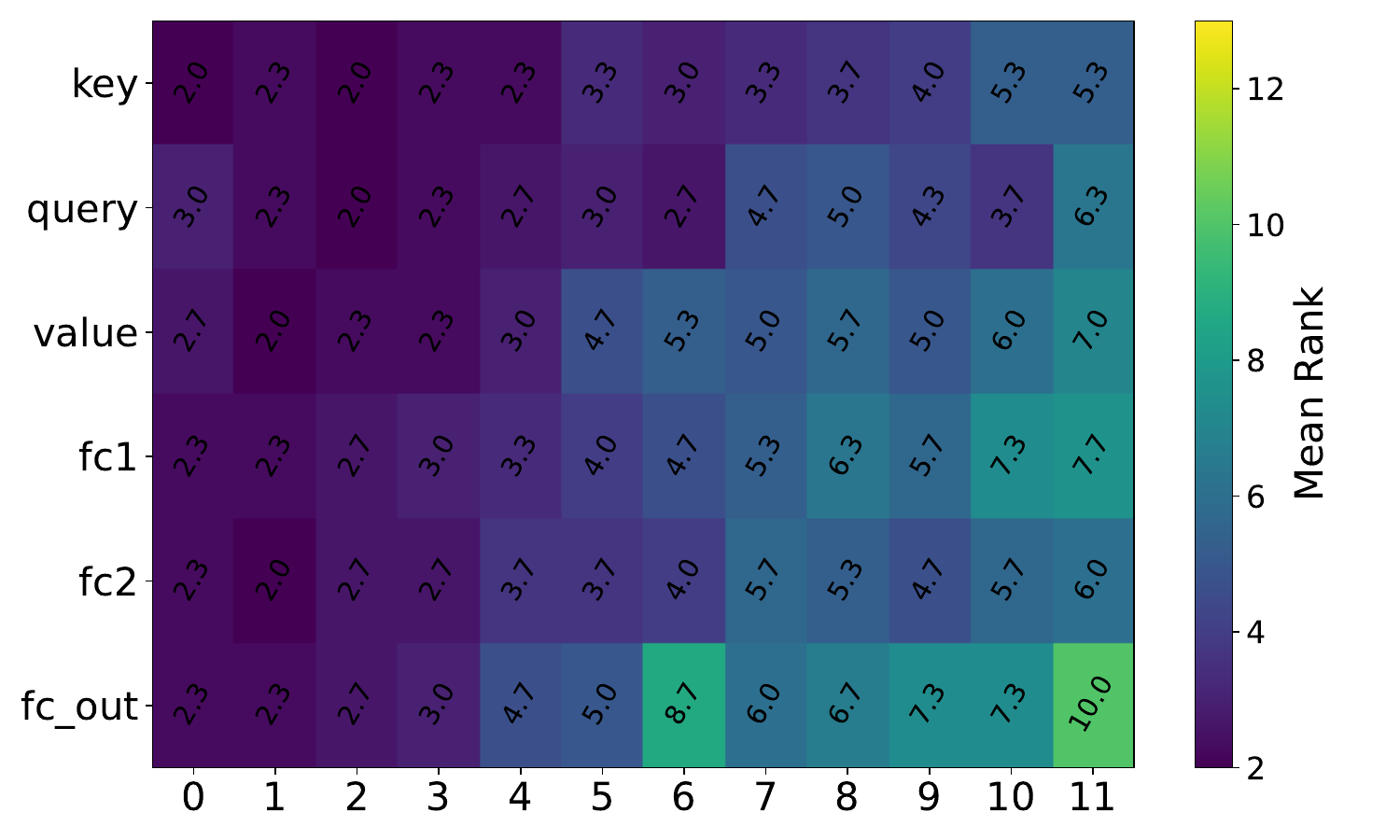}
        \caption{\ALGNAME{}: mean b=300}
    \end{subfigure}
    \begin{subfigure}[b]{0.4\textwidth}
        \centering
        \includegraphics[width=\textwidth]{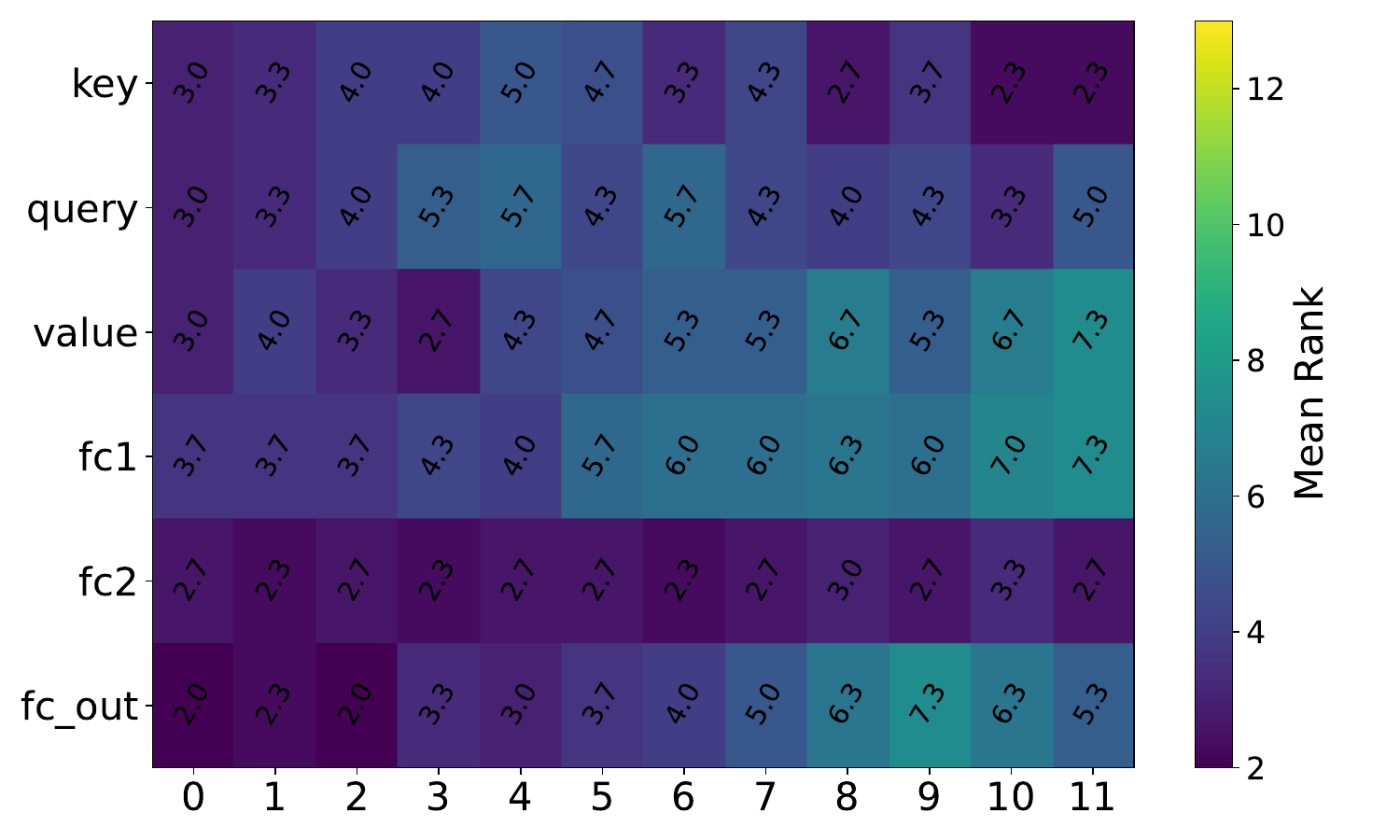}
        \caption{AdaLoRA: mean b=300}
    \end{subfigure}

    \begin{subfigure}[b]{0.4\textwidth}
        \centering
        \includegraphics[width=\textwidth]{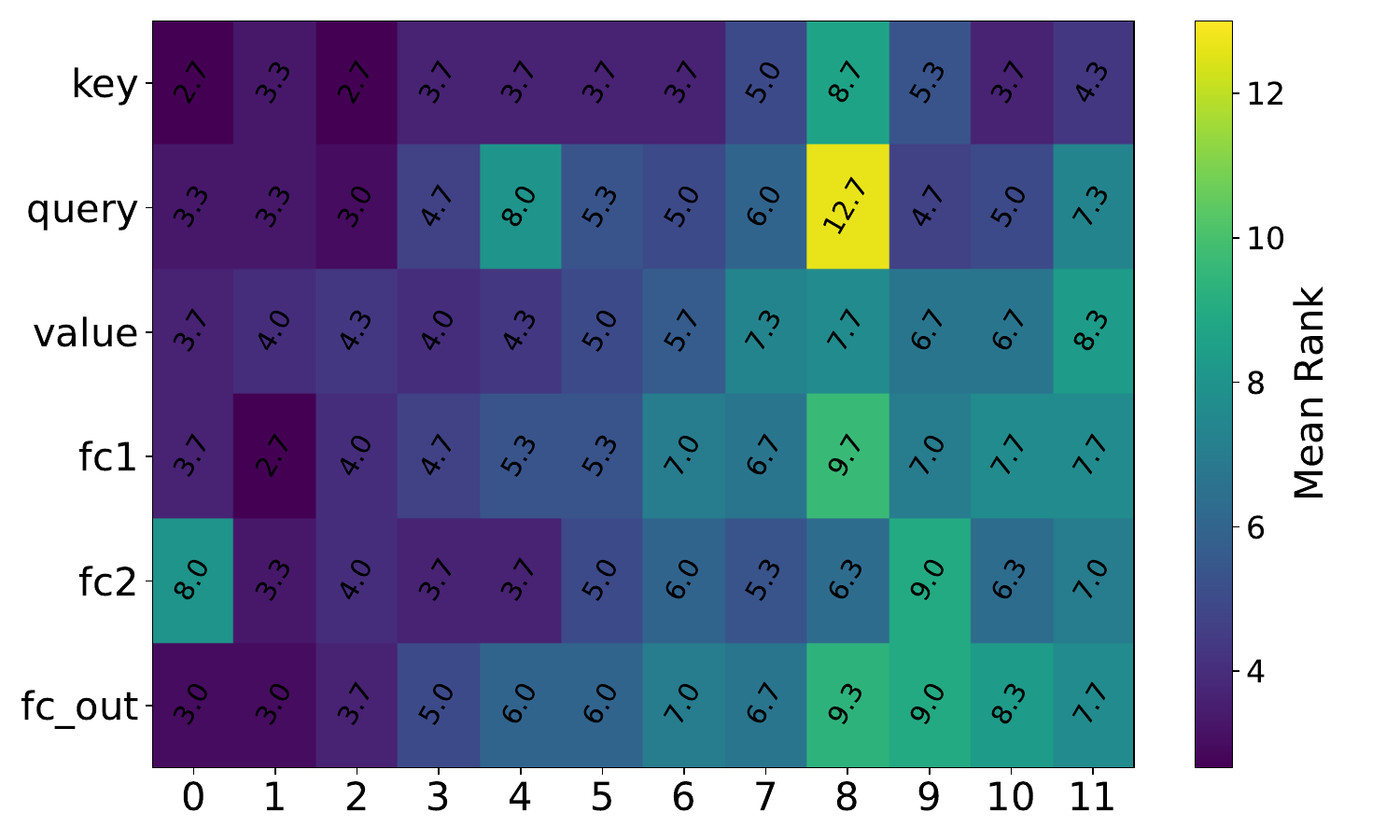}
        \caption{\ALGNAME{}: mean b=400}
    \end{subfigure}
    \begin{subfigure}[b]{0.4\textwidth}
        \centering
        \includegraphics[width=\textwidth]{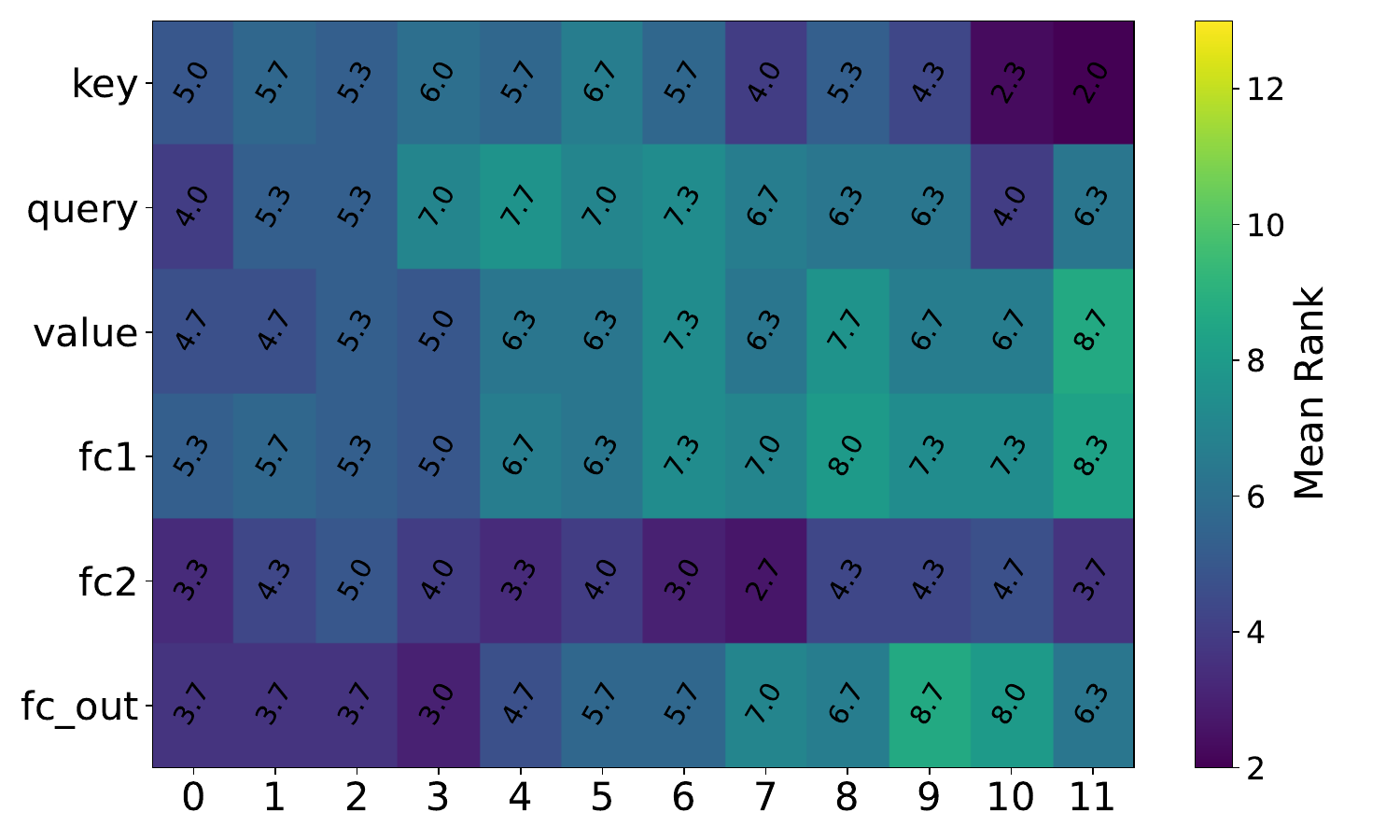}
        \caption{AdaLoRA: mean b=400}
    \end{subfigure}

    \begin{subfigure}[b]{0.4\textwidth}
        \centering
        \includegraphics[width=\textwidth]{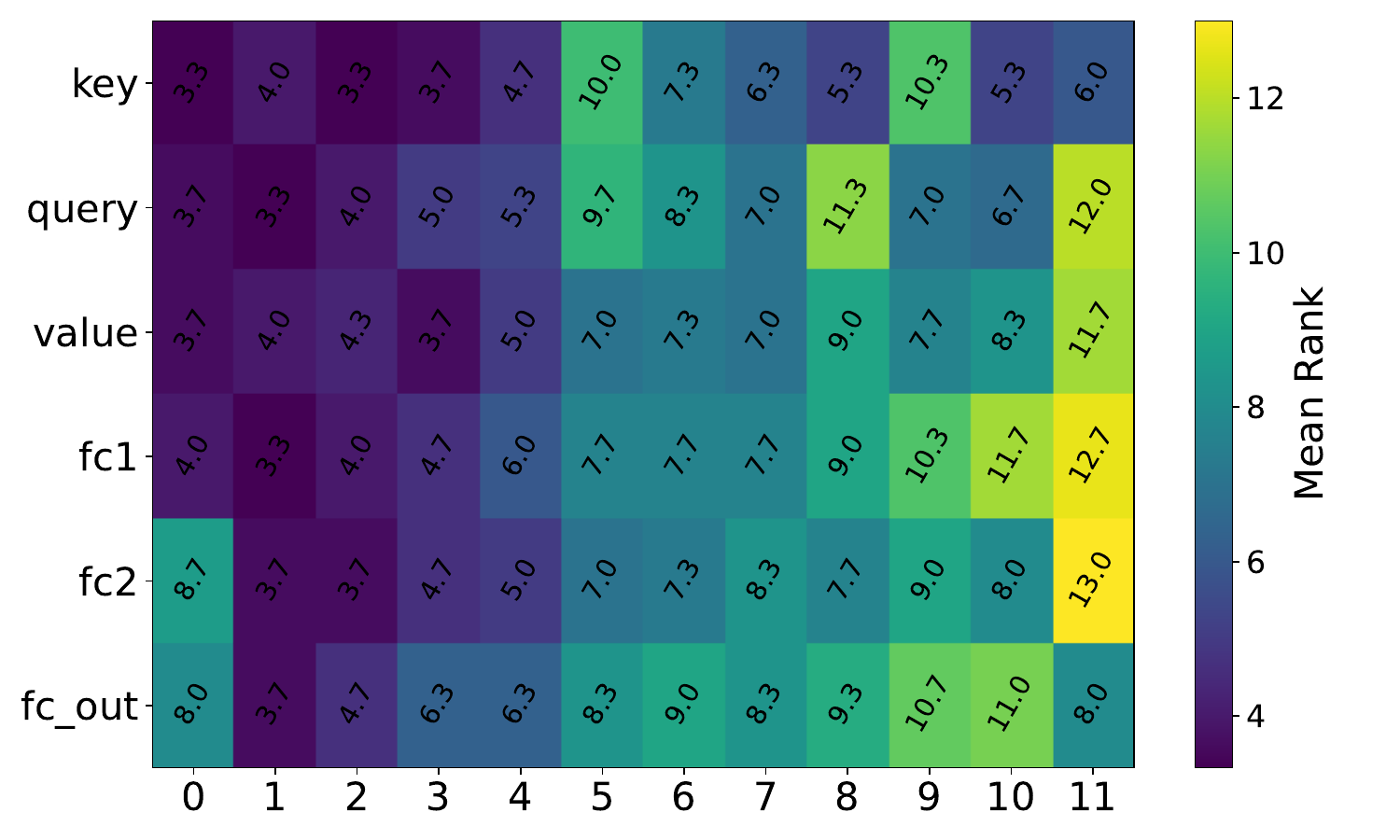}
        \caption{\ALGNAME{}: mean b=500}
    \end{subfigure}
    \begin{subfigure}[b]{0.4\textwidth}
        \centering
        \includegraphics[width=\textwidth]{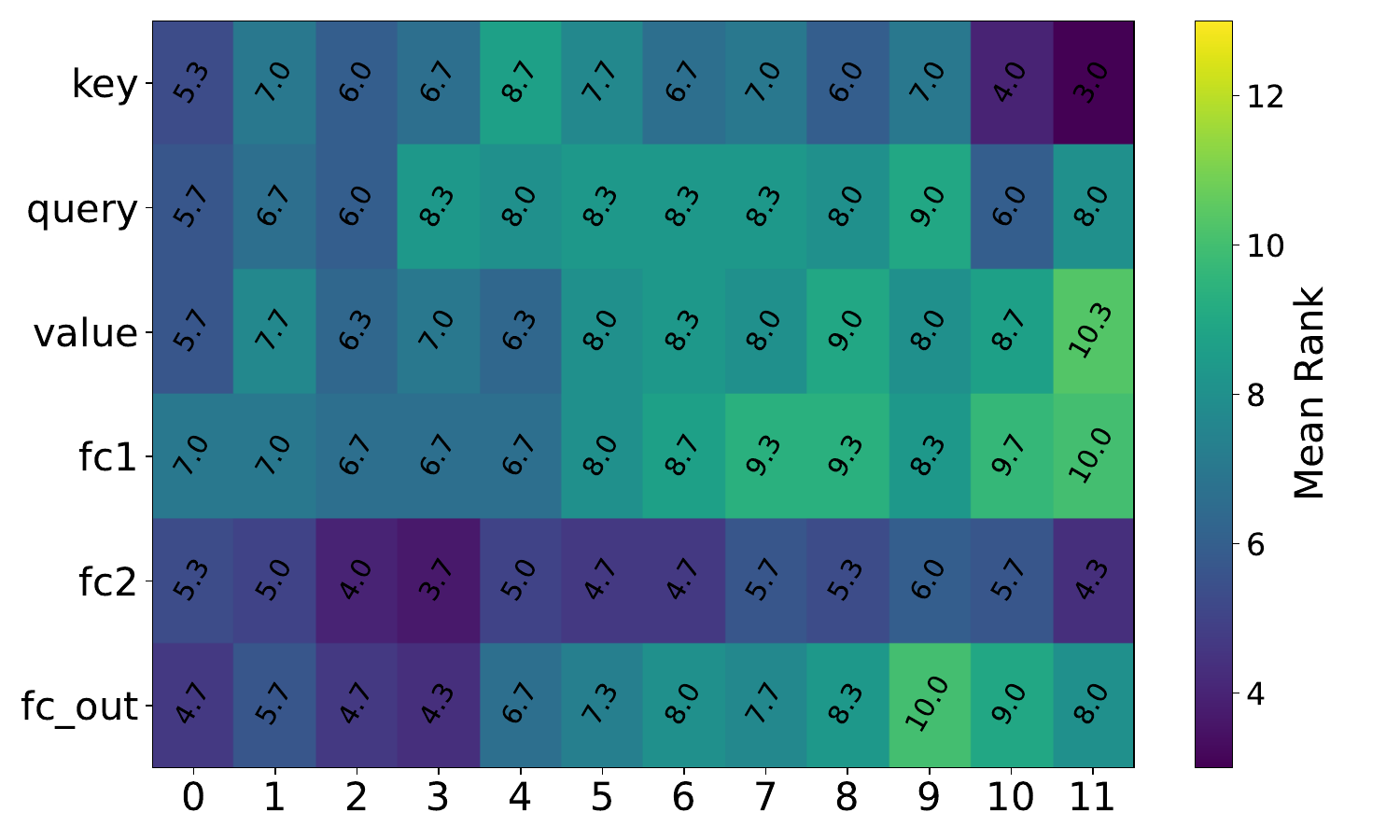}
        \caption{AdaLoRA: mean b=500}
    \end{subfigure}

    \begin{subfigure}[b]{0.4\textwidth}
        \centering
        \includegraphics[width=\textwidth]{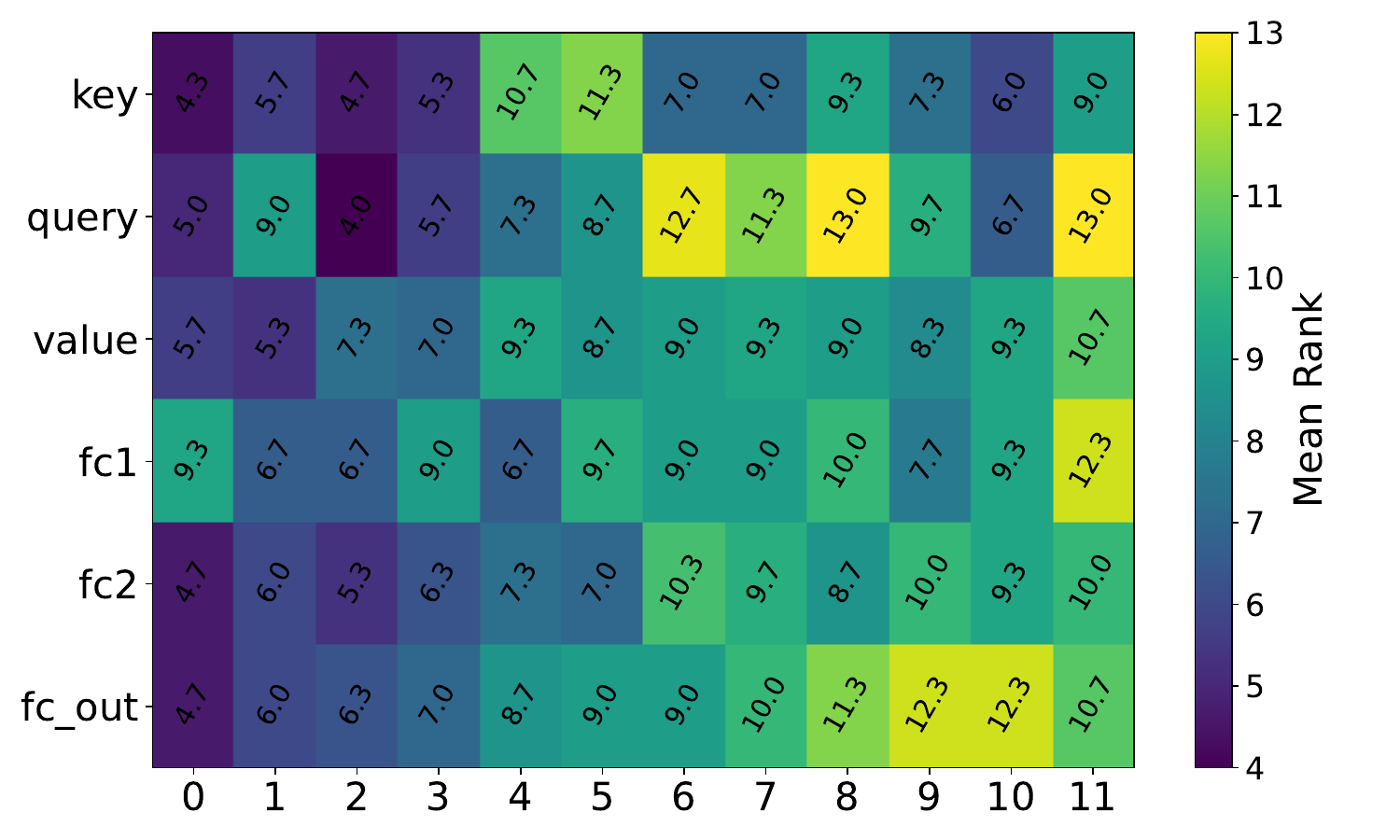}
        \caption{\ALGNAME{}: mean b=600}
    \end{subfigure}
    \begin{subfigure}[b]{0.4\textwidth}
        \centering
        \includegraphics[width=\textwidth]{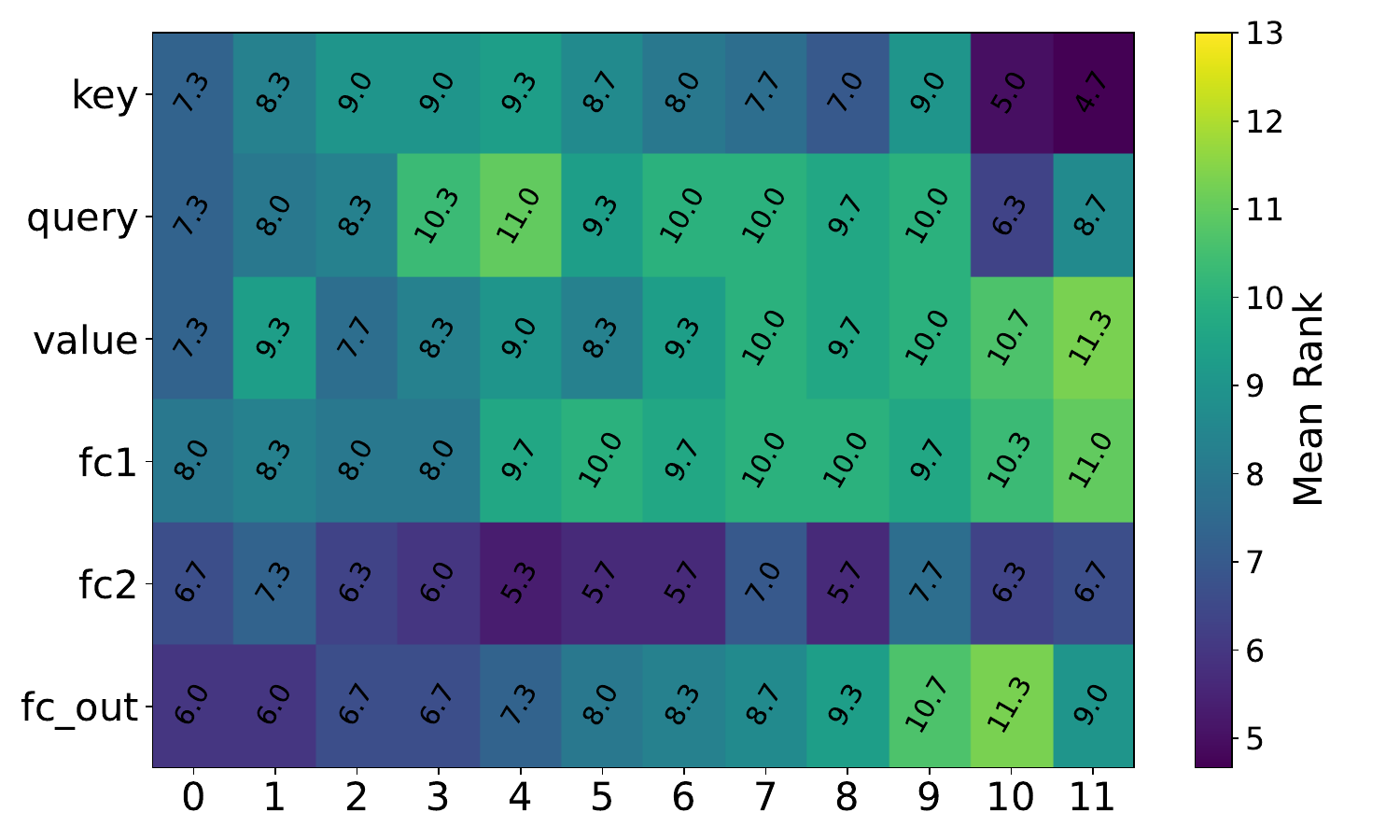}
        \caption{AdaLoRA: mean b=600}
    \end{subfigure}
\caption{Rank distribution of Vit-32b finetuned on Cifar10 for 5 epochs at learning rate $1\rm{e}{-3}$ using \ALGNAME{} and AdaLoRA. }\label{tab_ranks1}
\end{figure}

\begin{figure}[t]
    \centering

    \begin{subfigure}[b]{0.4\textwidth}
        \centering
        \includegraphics[width=\textwidth]{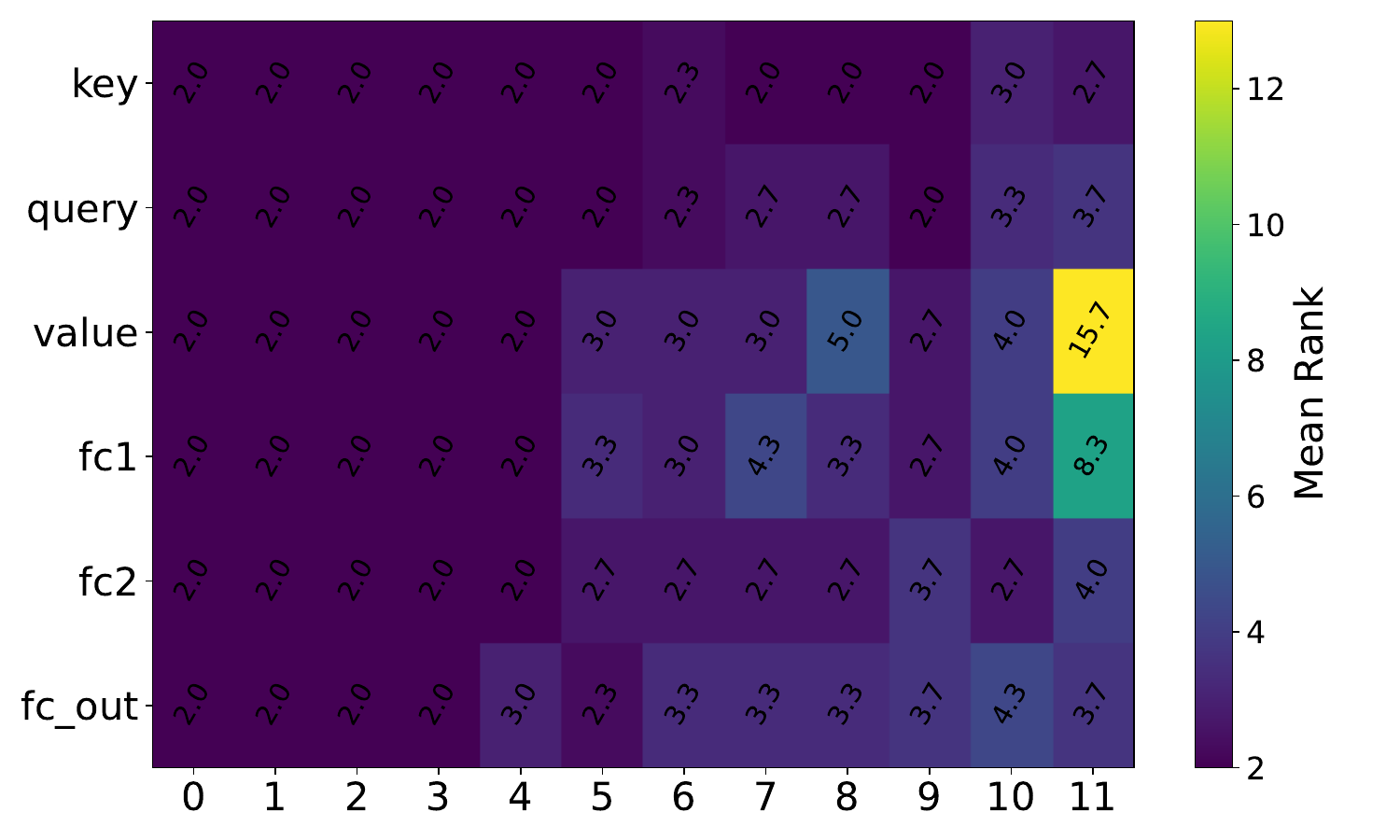}
        \caption{\ALGNAME{}: mean b=200}
    \end{subfigure}
    \begin{subfigure}[b]{0.4\textwidth}
        \centering
        \includegraphics[width=\textwidth]{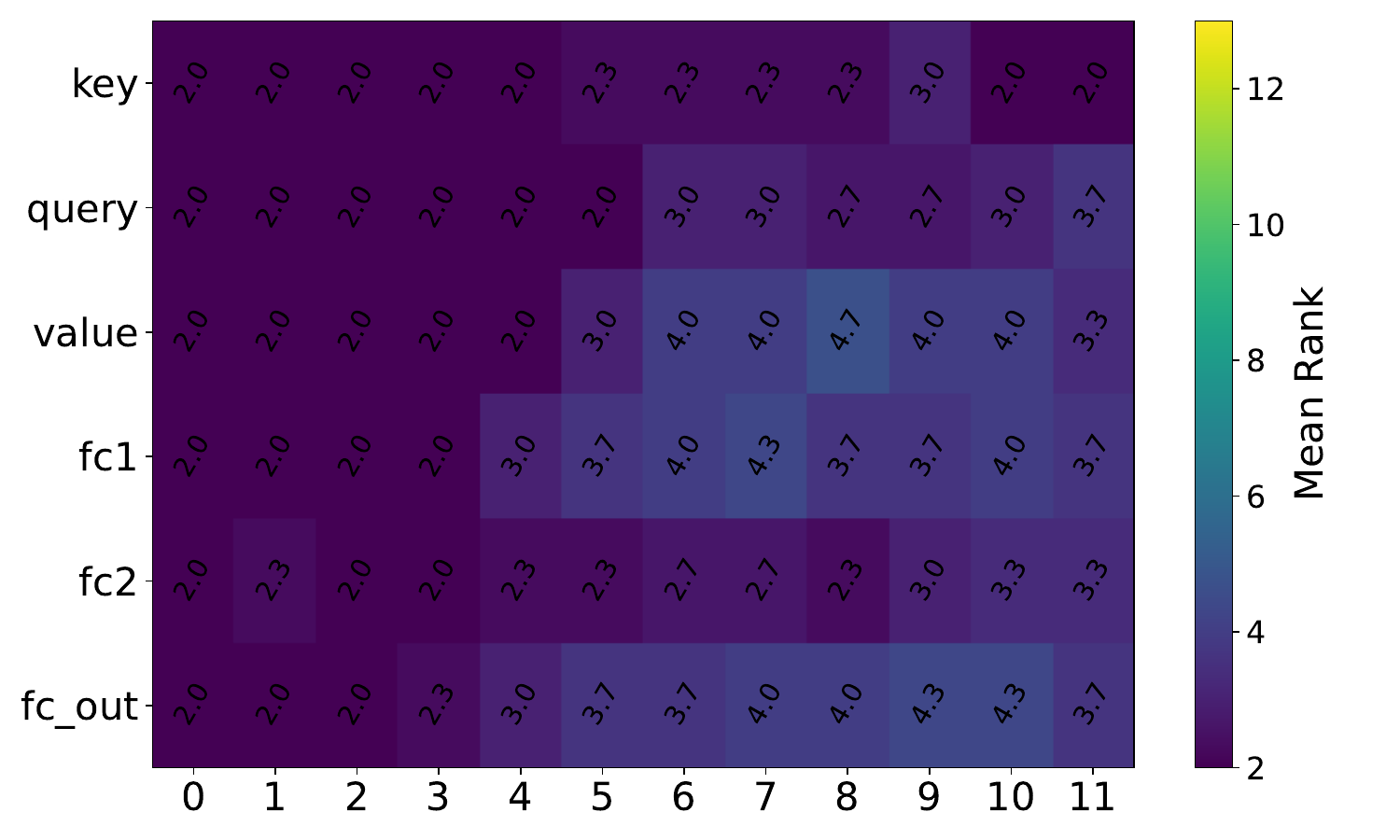}
        \caption{AdaLoRA: mean b=200}
    \end{subfigure}

    \begin{subfigure}[b]{0.4\textwidth}
        \centering
        \includegraphics[width=\textwidth]{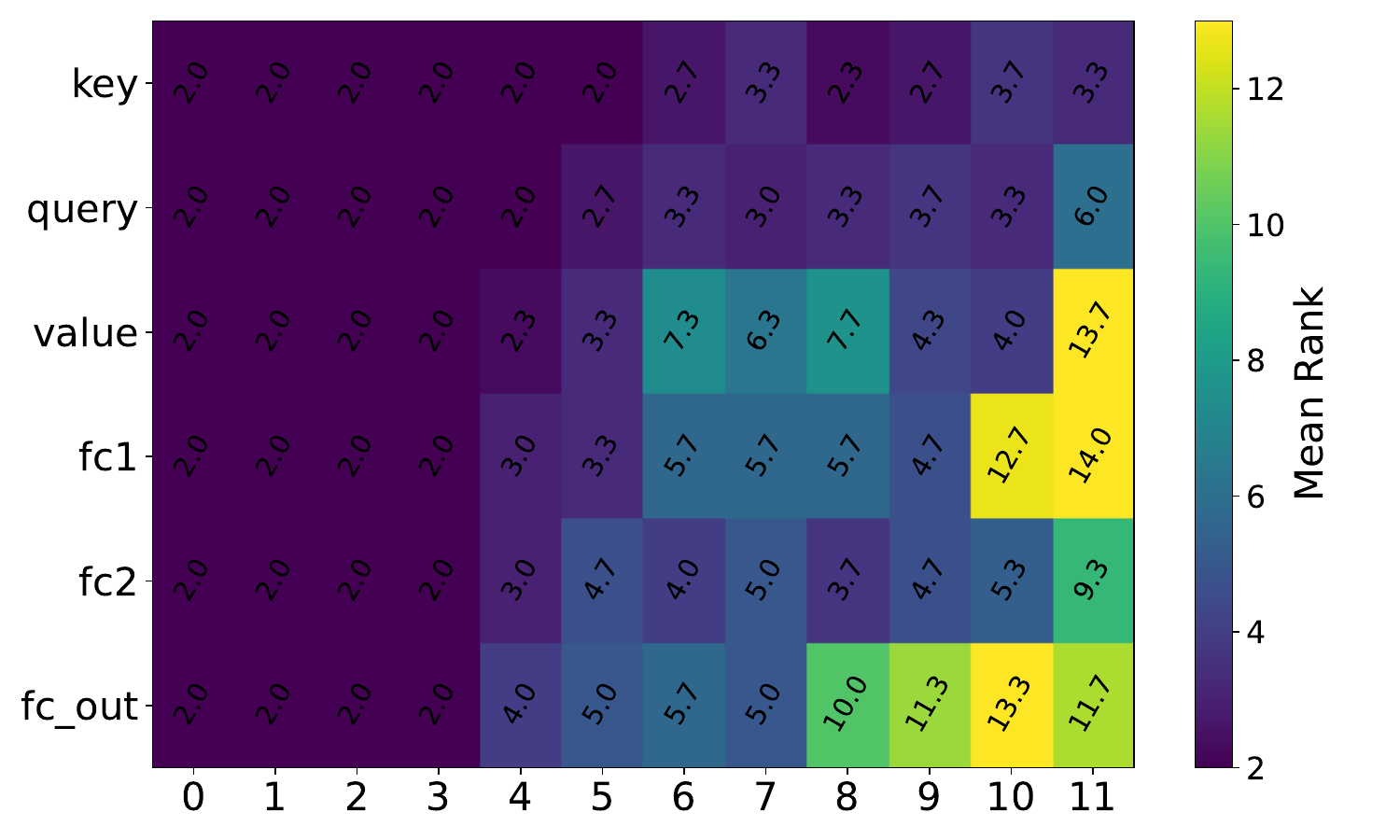}
        \caption{\ALGNAME{}: mean b=300}
    \end{subfigure}
    \begin{subfigure}[b]{0.4\textwidth}
        \centering
        \includegraphics[width=\textwidth]{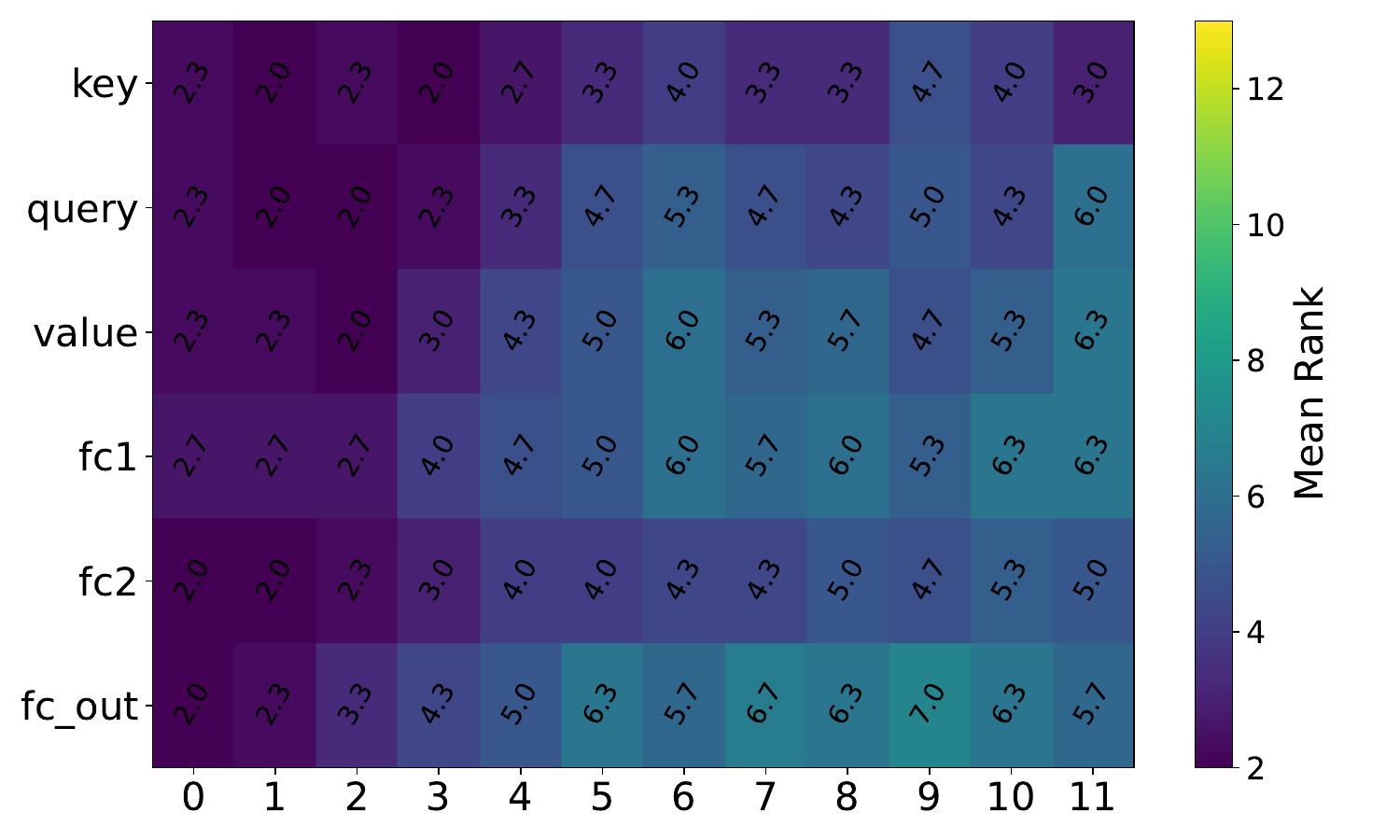}
        \caption{AdaLoRA: mean b=300}
    \end{subfigure}

    \begin{subfigure}[b]{0.4\textwidth}
        \centering
        \includegraphics[width=\textwidth]{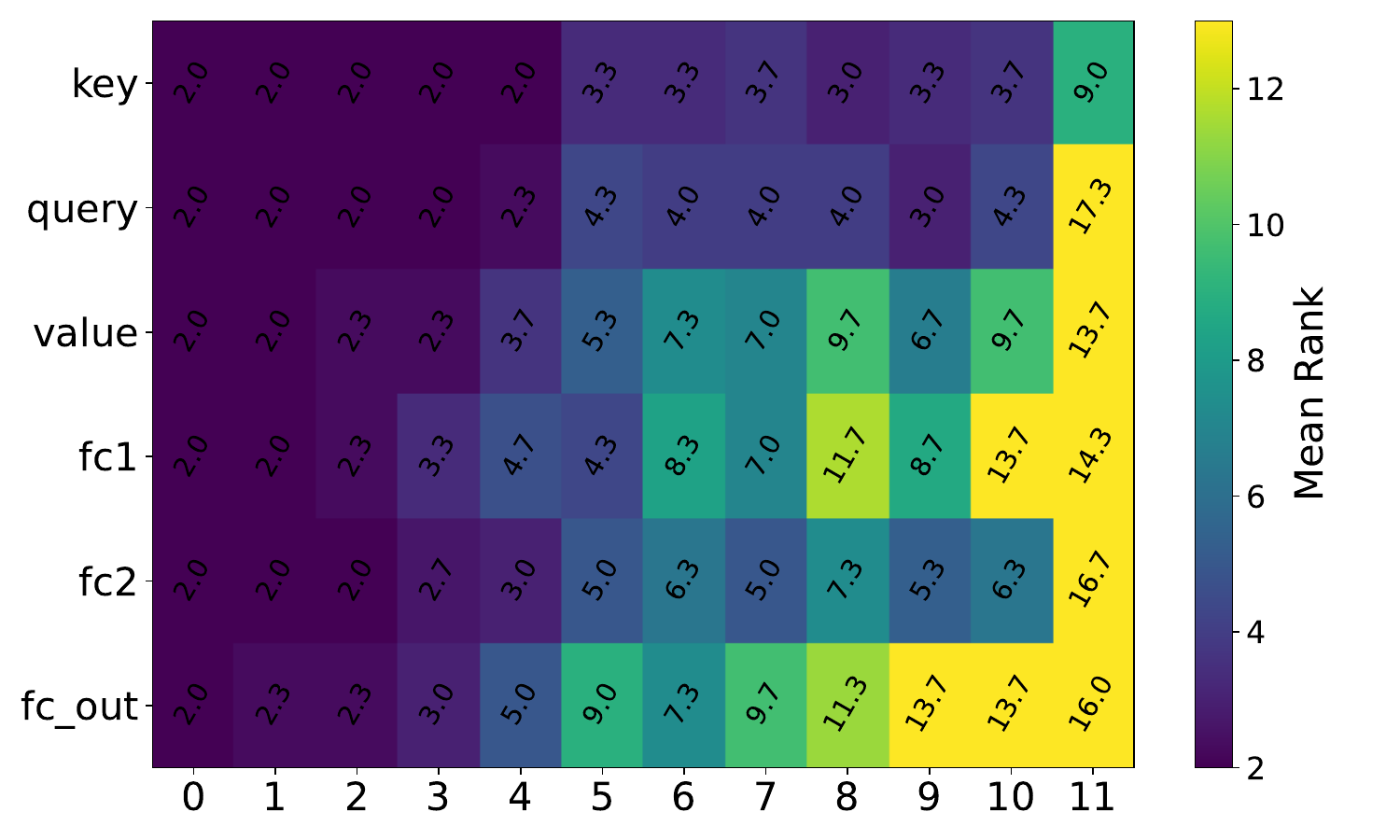}
        \caption{\ALGNAME{}: mean b=400}
    \end{subfigure}
    \begin{subfigure}[b]{0.4\textwidth}
        \centering
        \includegraphics[width=\textwidth]{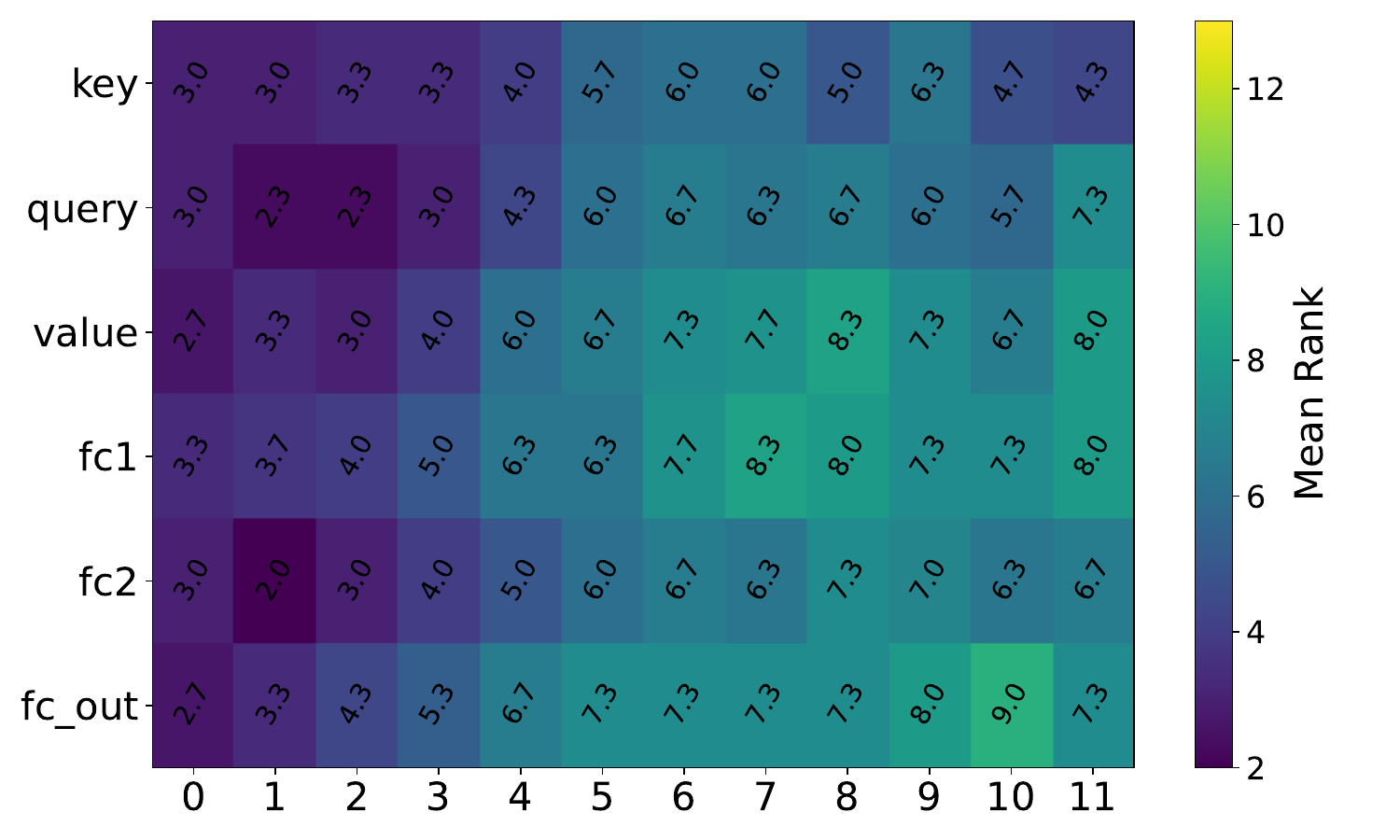}
        \caption{AdaLoRA: mean b=400}
    \end{subfigure}

    \begin{subfigure}[b]{0.4\textwidth}
        \centering
        \includegraphics[width=\textwidth]{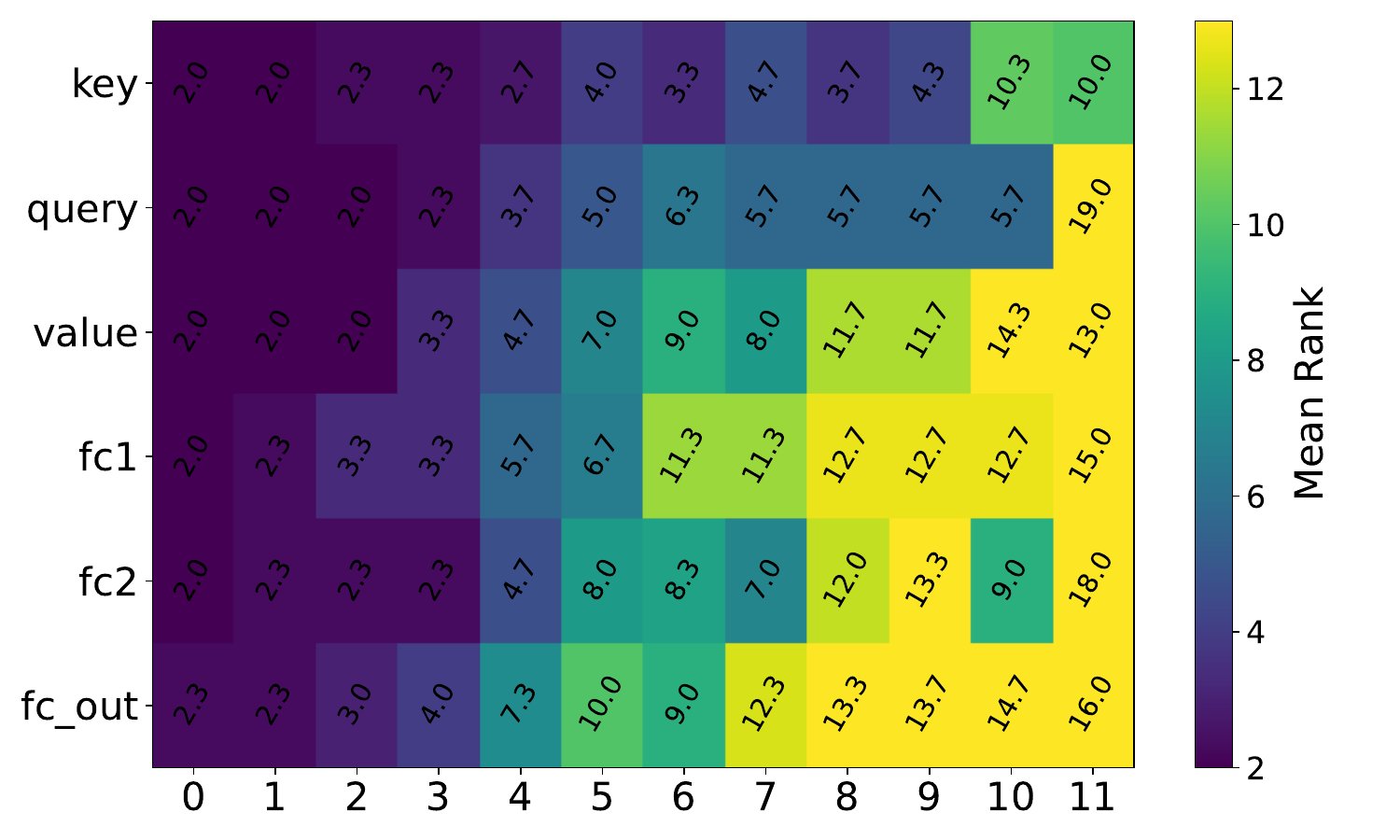}
        \caption{\ALGNAME{}: mean b=500}
    \end{subfigure}
    \begin{subfigure}[b]{0.4\textwidth}
        \centering
        \includegraphics[width=\textwidth]{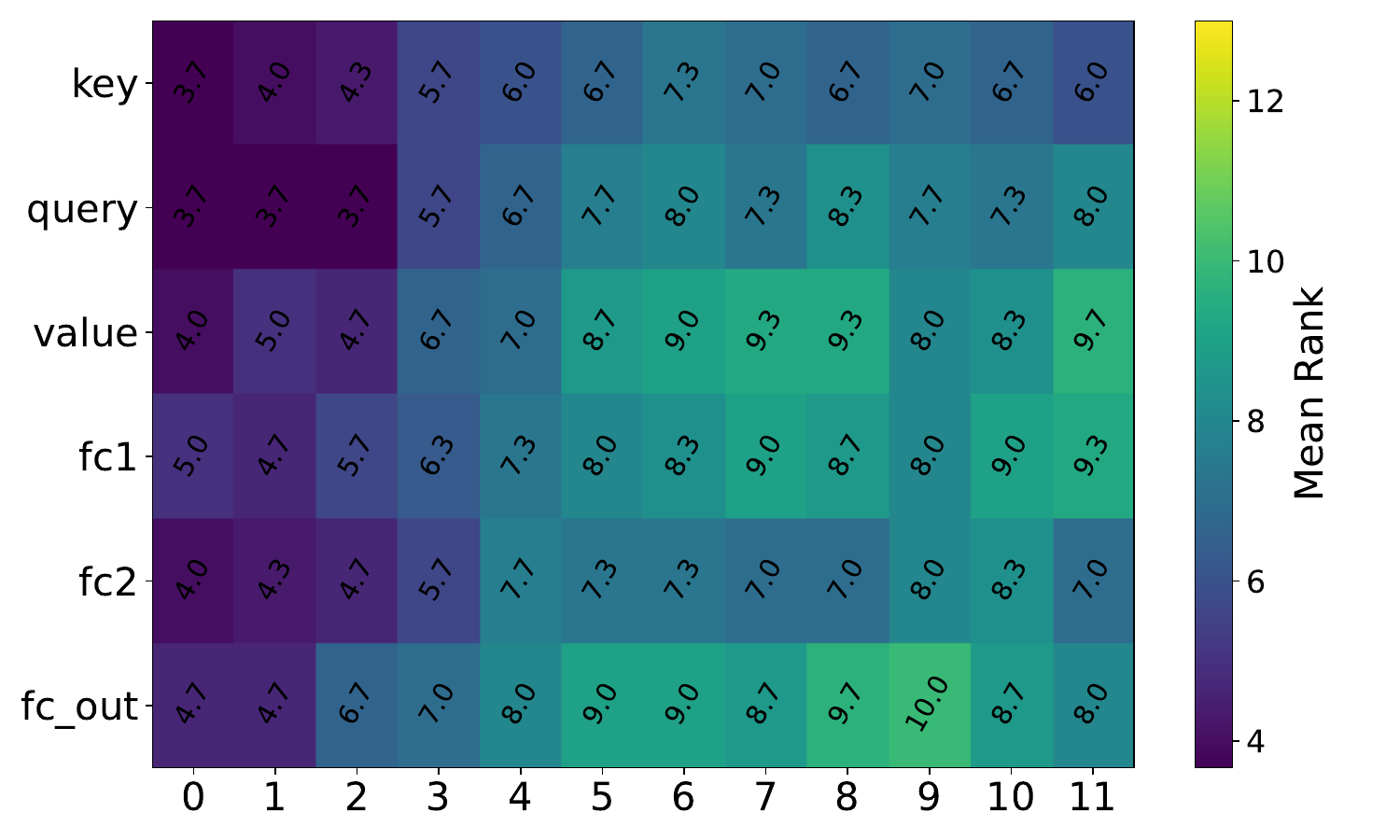}
        \caption{AdaLoRA: mean b=500}
    \end{subfigure}

    \begin{subfigure}[b]{0.4\textwidth}
        \centering
        \includegraphics[width=\textwidth]{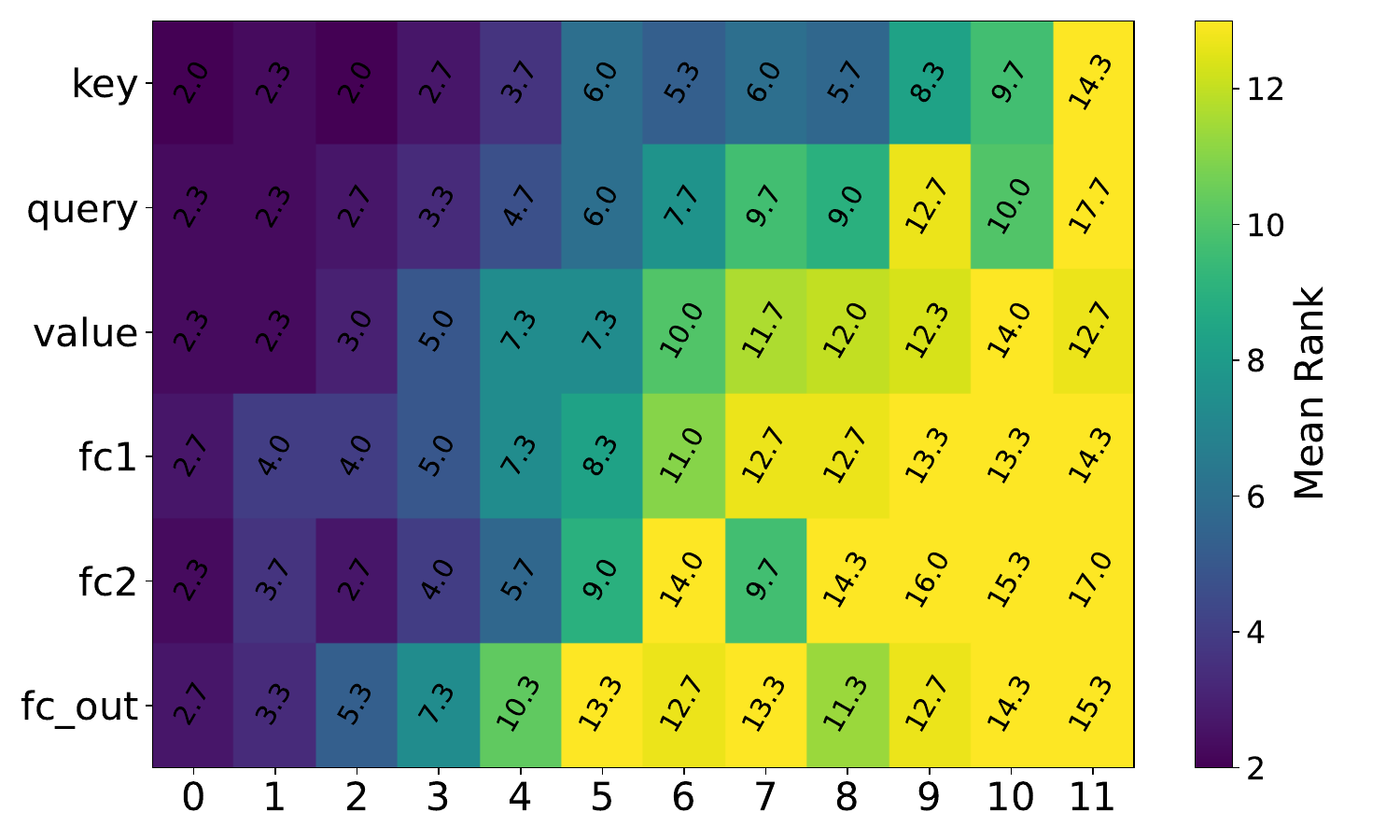}
        \caption{\ALGNAME{}: mean b=600}
    \end{subfigure}
    \begin{subfigure}[b]{0.4\textwidth}
        \centering
        \includegraphics[width=\textwidth]{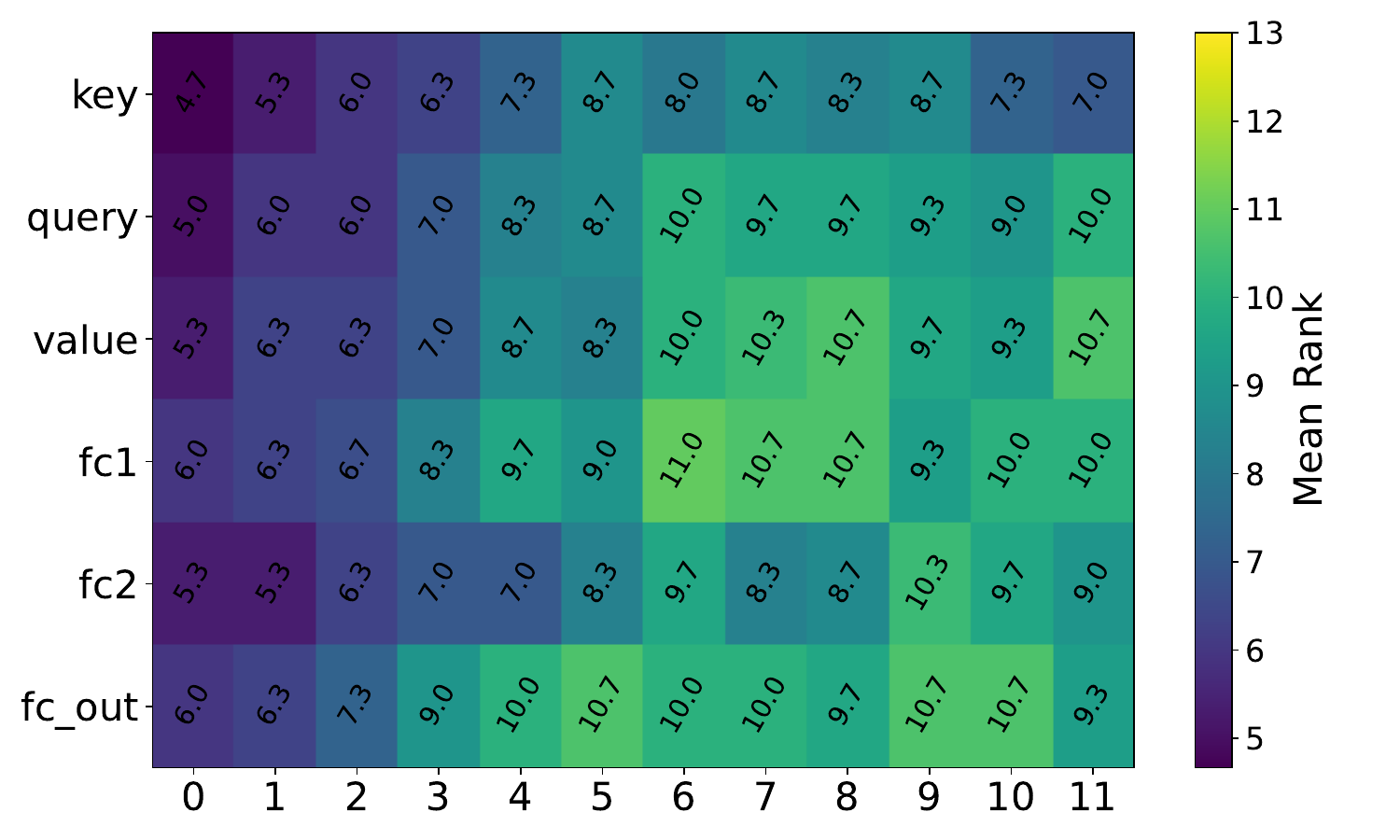}
        \caption{AdaLoRA: mean b=600}
    \end{subfigure}
\caption{Rank distribution of Vit-32b finetuned on Cifar10 for 5 epochs at learning rate $1\rm{e}{-4}$ using \ALGNAME{} and AdaLoRA. }\label{tab_rank2}
\end{figure}
\newpage
\subsection{Stable Diffusion on Dreambooth,}\label{app_stable_diff}
In this numerical example, we apply low-rank adapters to all linear and attention layers of the U-Net and the text 
encoder networks.  The hyperparameters for LoRA and \ALGNAME{} are the same, apart from the fact that we start 
with adapters of rank $8$ for both Unet and text encoder. We train for $5$ full epochs, using adamW as an optimizer, with 
$(\beta_1,\beta_2) = (0.9,0.999)$, initial learning rate $5 \times 10^{-6}$ and weight decay set to $10^{-2}$.

\newpage
\section{Overview for the numerical analysis}
\subsection{Notation}
We provide an overview of the notation used throughout the main manuscript and the appendix.
\begin{itemize}[leftmargin=*,noitemsep,topsep=0em]
\item $W\in\mathbb{R}^{n\times n}$ is the full-rank weight matrix of a neural network layer or adapter.
\item $Z\in\mathbb{R}^{n\times n}$ is an arbitrary matrix.
\item $\mathcal{M}_r = \set{Z\in\mathbb{R}^{n \times n}: \text{rank}(Z)=r}$ is a manifold of rank $r$ matrices.
\item  $\mathcal{T}_{Z} \mathcal{M}_{r}$ is the tangent space of $\mathcal{M}_r$ at $Z$ for any $Z\in\mathbb{R}^{n\times n}$.
\item  $W^r =USV^\top\in \mathcal{M}_r$ is a rank-$r$ approximation of a matrix $W$.
\item  $\augWr =\augU\augS\augV^\top\in \mathcal{M}_r$ is a rank-$r$ approximation of a matrix $W$ with augmented basis.
\item $U,V\in\mathbb{R}^{n\times r}$ is the orthonormal basis and co-basis $\mathcal{M}_r$.
\item $\augU = [U_0,\newU]\in\mathbb{R}{n\times 2r}$ is the augmented basis. (Analogously for $\augV$.)
\item $U_0\in\mathbb{R}^{n\times r}$ is the basis at the beginning of the iteration.  (Analogously for $V_0$.)
\item $\newU\in\mathbb{R}^{n\times r}$ is the basis augmentation, obtained by $[U_0,\newU] = \textup{ortho}([U_0,K^{\text{new}}])\in\mathbb{R}^{n\times 2r_0}$. (Analogously for $\newV$.)
\item $S\in\mathbb{R}^{r\times r}$ is the coefficient matrix so assemble the low-rank approximation $W^r$ from $U,V$.
\item  $P(Z)$ is the orthogonal projection onto $\mathcal{T}_{Z}\mathcal{M}_r$. 
\item  $P_U=UU^\top$ is the orthogonal projection onto the range of orthonormal $U\in\mathbb{R}^{n\times r}$.
\item  $P_V=VV^\top$ is the orthogonal projection onto the range of orthonormal $V\in\mathbb{R}^{n\times r}$.
\item When applied to vectors, $\norm{\cdot}$ denotes the Euclidean norm ($\ell_2$-norm). When applied to matrices, $\norm{\cdot}$ denotes the Frobenius norm.
\item $\mathcal{L}(W;\xi)$ denotes the loss function dependent on weight matrix $W$ and data sample randomness $\xi$. Commonly abbreviated by $\mathcal{L}(W)$.
\item $f(W;\xi)=-\nabla_W\mathcal{L}(W;\xi)$ is the negative stochastic loss gradient w.r.t $W$. Commonly abbreviated by $f(W)$.
\item  $F(W)=\mathbb{E}_\xi [f(W,\xi)]$ the expectation of the random loss gradient, called the deterministic gradient.
\end{itemize}
\subsection{Recap of commonly used properties}\label{app_recap}
We recapitulate repeatedly used properties of the mathematical objects and notations introduced above. 
\begin{itemize}[leftmargin=*,noitemsep,topsep=0em]
\item Per definition, we have for any $Z\in\mathbb{R}^{n\times n}$
    \begin{align}\label{def_projection}
        P(W^r)Z = UU^\top Z +  Z VV^\top - UU^\top Z VV^\top
    \end{align}
\item Since $\mathcal{T}_{Z}\mathcal{M}_r$ is a subspace of $\mathbb{R}^{n\times n}$ for any $Z\in\mathcal{M}_r$ we can decompose the gradients $F$ and $f$ into $F(Z)=M(Z)+R(Z)$ and $f(Z)=f(Z)+r(Z)$ for any $Z\in\mathcal{R}^{n\times n}$, where $M(Z),m(Z)\in\mathcal{T}_{Z}\mathcal{M}_r$.
    \end{itemize}
\subsection{Global assumptions}
The following provides a comprehensive overview of the global assumptions for made in the analysis section and proofs of the provided theorems. The assumptions are common in literature, see e.g. \citep{Hnatiuk}
\begin{assumption}\label{ass_1}
    There is an $\varepsilon>0$ such that $\norm{R(Z)},\norm{r(Z)}\leq\epsilon$ for all $Z\in\mathcal{M}_r$
\end{assumption}
\begin{assumption}\label{ass_2}
   $F$ and $f$ are bounded by a constant $B>0$ and L-continuous w.r.t. $\norm{\cdot}$ with constant $L>0$.
\end{assumption}
\begin{assumption}\label{ass_3}
   There is a constant $C>0$ such that $\norm{F(Z)-f(Z)}\leq C$.
\end{assumption}
\begin{assumption}\label{ass_4}
    At initial time, we assume that the difference of a full-rank weight matrix $W_0$ and its low-rank counterpart $W^r_0$ is bounded by $\norm{W_0-Y_0}\leq \delta$ for $\delta>0$.
\end{assumption}
\begin{assumption}\label{ass_5}
    For all times, we have w.l.o.g $\mathcal{L}(t)>0$.
\end{assumption}

\section{Descent direction}\label{app_descend_direction}
We first state a few auxiliary lemmas, which provide common inequalities that will be used in the following analysis.
\begin{lemma}{\citep[Lemma 5.2]{Hnatiuk}}\label{lem_loss_function_est}
    For any two matrices $Y_1,Y_2\in\mathbb{R}^{n\times n}$ and an $L$-smooth $\mathcal{L}$ with constant $L$     it holds
    \begin{align}
        \mathcal{L}(Y_1)-\mathcal{L}(Y_2)\leq -\inner{Y_1-Y_2,f(Y_2)} + \frac{L}{2}\norm{Y_1-Y_2}^2,
    \end{align}
    where $f(Y)=-\nabla_Y\mathcal{L}(Y)$. Furthermore, it holds 
    \begin{align}
        \mathcal{L}(Y_1)-\mathcal{L}(Y_2)\leq -\inner{Y_1-Y_2,F(Y_2)} + \frac{L}{2}\norm{Y_1-Y_2}^2,
    \end{align}
    where $F(Y)=-\mathbb{E}[\nabla_Y\mathcal{L}(Y)]$.
\end{lemma}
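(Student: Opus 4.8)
The plan is to establish this quadratic upper bound (the classical \emph{descent lemma} for functions with Lipschitz gradient) by integrating the gradient along the segment joining $Y_2$ to $Y_1$, and then to pass from the stochastic to the deterministic statement by linearity of expectation.

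First I would parametrize $Y(t) = Y_2 + t(Y_1 - Y_2)$ for $t\in[0,1]$ and apply the fundamental theorem of calculus to $t\mapsto \mathcal{L}(Y(t))$, obtaining $\mathcal{L}(Y_1)-\mathcal{L}(Y_2) = \int_0^1 \inner{\nabla_Y\mathcal{L}(Y(t)),\,Y_1-Y_2}\,\mathrm{d}t$, where $\inner{\cdot,\cdot}$ denotes the Frobenius inner product. Adding and subtracting $\nabla_Y\mathcal{L}(Y_2)$ inside the integral decomposes this into the linear term $\inner{\nabla_Y\mathcal{L}(Y_2),\,Y_1-Y_2}$ plus the remainder $\int_0^1 \inner{\nabla_Y\mathcal{L}(Y(t))-\nabla_Y\mathcal{L}(Y_2),\,Y_1-Y_2}\,\mathrm{d}t$. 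By Cauchy--Schwarz and the $L$-Lipschitz continuity of $\nabla_Y\mathcal{L}$ (the $L$-smoothness hypothesis, cf.\ \Cref{ass_2}), the integrand is bounded by $\norm{\nabla_Y\mathcal{L}(Y(t))-\nabla_Y\mathcal{L}(Y_2)}\,\norm{Y_1-Y_2}\le L\,\norm{Y(t)-Y_2}\,\norm{Y_1-Y_2}=Lt\,\norm{Y_1-Y_2}^2$; since $\int_0^1 Lt\,\mathrm{d}t = L/2$, the remainder is at most $\tfrac{L}{2}\norm{Y_1-Y_2}^2$. Substituting $f=-\nabla_Y\mathcal{L}$ in the linear term yields the first inequality.

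For the second inequality I would apply the first one to the realized loss $\mathcal{L}(\,\cdot\,;\xi)$, which is $L$-smooth with the same constant, and take $\mathbb{E}_\xi[\cdot]$ of both sides; since $Y_1-Y_2$ is deterministic, $\mathbb{E}_\xi\inner{\nabla_Y\mathcal{L}(Y_2;\xi),\,Y_1-Y_2} = \inner{\mathbb{E}_\xi\nabla_Y\mathcal{L}(Y_2;\xi),\,Y_1-Y_2} = -\inner{F(Y_2),\,Y_1-Y_2}$, and the quadratic term is unchanged, which gives the claim. There is no substantial obstacle here: this is a textbook estimate (precisely Lemma~5.2 of the cited reference), and the only point requiring a word of care is the interchange of expectation with the inner product, which is immediate by linearity. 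I would therefore either reproduce the short argument above or simply invoke the cited result.
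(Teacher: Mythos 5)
The paper does not prove this lemma itself---it invokes it directly as Lemma 5.2 of \citep{Hnatiuk}---so there is no in-paper argument to compare against. Your proof is the standard descent-lemma derivation (fundamental theorem of calculus along the segment, Cauchy--Schwarz plus Lipschitz continuity of the gradient to bound the remainder by $\tfrac{L}{2}\norm{Y_1-Y_2}^2$, then take expectation over $\xi$ and use linearity to replace $f$ by $F$), and it is correct; the only minor caveat is that taking $\mathbb{E}_\xi$ of both sides implicitly reads the left-hand side of the second inequality as the expected loss difference, which is the intended (if slightly abused) notation here.
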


The following results are primarily based on \citep{Hnatiuk} and use the reformulation of truncated terms as proposed in \citep{zangrando2023rank}. For ease of notation, we use $f(W^{r}_{{t}})=f(W^{r}_{{t}},\xi_t)$. Hence, randomness is not explicitly stated in our notation. Note that in this case, the factorized solution $W^{r}_{{1}} = U_1 S_1 V_1^{\top}$ is random since it depends on $f(W^{r}_{{1}})$. When using expected values, we explicitly write down the corresponding random variable. That is, $\mathbb{E}_{\xi}[\cdot]$ is the expected value for a random variable $\xi$. We denote the random variable in step $T$ as $\xi_{T}$ and denote $\mathbb{E}[\cdot] := \mathbb{E}_{\xi_1, \cdots, \xi_{T}}[\cdot]$.

\begin{theorem}(Restatement of \Cref{theo_descend_direction})
   \Cref{alg_efficient_TDLRT} with stochastic (mini-batch) gradients fulfills
\begin{align}\label{the0_2_result}
  \mathbb{E}_{\xi_{t+1}}[\mathcal{L}(W^{r}_{t+1})]  \leq \mathcal{L}(W^{r}_{t}) -\lambda\left(1-\frac{L\lambda^2}{2}\right)\mathbb{E}_{\xi_1}[\Vert P(W^{r}_{t})f(W^{r}_{t},\xi_t) \Vert^2] +L\mathbb{E}_{\xi_1}[ \|W^{r}_{t+1}- \augWr_t \|]\,.
\end{align}
where $W^{r}_{{t}}$, $\augWr_{t}$, $W^{r}_{t+1}$ are the low-rank weight matrices at the start of iteration $t+1$, before, and after the truncation step, respectively. The step size is given by $\lambda$. 
\end{theorem}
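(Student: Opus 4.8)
The plan is to track the loss across one iteration of \Cref{alg_efficient_TDLRT} by splitting the update into two conceptual sub-steps: (i) the parallel $K$/$L$/$S$ gradient step together with the basis augmentation and assembly of $\augS$, which produces the pre-truncation iterate $\augWr_t$, and (ii) the SVD-based truncation, which produces $W^r_{t+1}$. For sub-step (i) I would first establish the key algebraic identity that the augmented weight $\augWr_t = \augU\augS\augV^\top$ equals $W^r_t - \lambda\, P(W^r_t) f(W^r_t,\xi_t)$ plus a term that is $\landauO(\lambda^2)$. This is the heart of the ``gradient trick'': because $K^{\text{new}} = U_0S_0 - \lambda (\nabla_U\mathcal L) S_0^{-\top}$, the span of $[U_0\mid K^{\text{new}}]$ contains the range of $U_0$ and of $(\nabla_W\mathcal L)V_0$, so after orthonormalization $\augU\augU^\top$ projects onto a space containing $\mathrm{range}(U_0)$ and $\mathrm{range}((\nabla_W\mathcal L)V_0)$; the analogous statement holds for $\augV$. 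Multiplying out the block form of $\augS$ in \cref{eq_S_step} and using these projection facts gives $\augWr_t = W^r_t + \lambda\big(-P_{U_0}(\nabla_W\mathcal L)P_{V_0} - (I-P_{U_0})(\nabla_W\mathcal L)P_{V_0} - P_{U_0}(\nabla_W\mathcal L)(I-P_{V_0})\big) + \landauO(\lambda^2) = W^r_t - \lambda P(W^r_t)\nabla_W\mathcal L(W^r_t) + \landauO(\lambda^2)$, where I use $P(W^r_t)Z = P_{U_0}Z + ZP_{V_0} - P_{U_0}ZP_{V_0}$ from \cref{def_projection}.

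Next I would apply the descent lemma (\Cref{lem_loss_function_est}) with $Y_1=\augWr_t$ and $Y_2=W^r_t$:
\begin{align*}
\mathcal L(\augWr_t) - \mathcal L(W^r_t) \leq -\inner{\augWr_t - W^r_t,\, f(W^r_t,\xi_t)} + \frac{L}{2}\norm{\augWr_t - W^r_t}^2.
\end{align*}
Substituting the identity for $\augWr_t - W^r_t$ and using that $P(W^r_t)$ is an orthogonal projection (so $\inner{P(W^r_t)f,\,f} = \norm{P(W^r_t)f}^2$ and $\norm{P(W^r_t)f}\leq\norm{f}\leq B$ by \Cref{ass_2}), the inner product contributes $-\lambda\norm{P(W^r_t)f(W^r_t,\xi_t)}^2$ and the quadratic term contributes $\tfrac{L\lambda^2}{2}\norm{P(W^r_t)f(W^r_t,\xi_t)}^2$ up to higher-order pieces; collecting the $\lambda$-powers yields the factor $\lambda(1-\tfrac{L\lambda^2}{2})$ as claimed (the $\landauO(\lambda^3)$ remnants being absorbed, which is where the precise bookkeeping of the $S_0^{-1}$ cancellations matters — note these cancel exactly when $S_0$ is diagonal and full rank, so no stiffness enters).

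For sub-step (ii), the truncation, I would again invoke $L$-smoothness but only through the Lipschitz/boundedness of $\mathcal L$ itself: $\mathcal L(W^r_{t+1}) - \mathcal L(\augWr_t) \leq L\norm{W^r_{t+1} - \augWr_t}$, using that $\nabla\mathcal L$ is bounded by $L$ along the segment (or directly $|\mathcal L(Y_1)-\mathcal L(Y_2)|\le L\norm{Y_1-Y_2}$ from \Cref{ass_2}, rescaling the constant if needed). Chaining the two sub-steps and taking the conditional expectation $\mathbb E_{\xi_{t+1}}$ (here the subscript convention of the paper folds the step-$t$ randomness into $\mathbb E_{\xi_1}$) gives exactly \cref{the0_2_result}. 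The main obstacle I anticipate is the first algebraic identity: showing cleanly that the block assembly of $\augS$ together with the QR-based orthonormalization reproduces $W^r_t - \lambda P(W^r_t)\nabla\mathcal L + \landauO(\lambda^2)$ without hidden $S_0^{-1}$-dependent error terms. This requires carefully expanding $\augU^\top K^{\text{new}}$, $L^{\text{new},\top}\augV$, and the off-diagonal products, and verifying that the factors of $S_0$ and $S_0^{-1}$ introduced in \cref{eq_kls} cancel against those appearing through $K^{\text{new}} = U_0 S_0 - \dots$ and $L^{\text{new}} = V_0 S_0^\top - \dots$ — this is precisely the ``robust'' reformulation borrowed from \citep{Schothoefer_2022, zangrando2023rank} and is the step where the geometric structure does the work.
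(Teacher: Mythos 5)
Your proposal follows essentially the same route as the paper's proof: split the iteration into the $K$/$L$/$S$ step plus assembly (producing $\augWr_t$) and the truncation step, establish $\augWr_t - W^{r}_{t} = \lambda\, P(W^{r}_{t})f(W^{r}_{t},\xi_t)$ from the block structure of $\augS$ and the range inclusions induced by the basis augmentation, apply Lemma~\ref{lem_loss_function_est}, bound the truncation contribution using boundedness of the gradient, and take expectations. One refinement to the step you flag as the ``main obstacle'': the identity is actually \emph{exact} --- once the $S_0^{-\top}$ cancellation gives $K^{\text{new}} = U_0 S_0 + \lambda f(W^{r}_{t},\xi_t) V_0$ and hence $\mathrm{range}(f V_0)\subseteq\mathrm{range}(\augU)$ and $\mathrm{range}(f^\top U_0)\subseteq\mathrm{range}(\augV)$, the cross-terms $\newU\newU^\top U_0 S_0 V_0^\top$ and $U_0 S_0 V_0^\top\newV\newV^\top$ vanish by orthogonality of the augmented blocks and the equalities $\augU\augU^\top f V_0 V_0^\top = f V_0 V_0^\top$, $U_0 U_0^\top f \augV\augV^\top = U_0 U_0^\top f$ hold with no remainder, so there is no $\landauO(\lambda^2)$ or $\landauO(\lambda^3)$ term to absorb (and note your first statement of the identity has a sign slip: it should read $W^{r}_{t} + \lambda P(W^{r}_{t})f$, as your subsequent inner-product computation assumes).
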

\begin{proof}
    Without loss of generality, we restrict ourselves to time steps $t=0$ and write $f(W_0^r)$ shorthand for $f(W^{r}_{t=0},\xi_t)$.
By definition of the coefficient matrix assembly in \cref{eq_S_step}, we get respectively
\begin{itemize}
    \item  $\widetilde U\widetilde U^{\top}f(W^{r}_{0})V_0V_0^{\top}$  for the right hand side of the $S^{\textup{new}}$ block
    \item $ U_0U_0^{\top}f(W^{r}_{0})\widetilde V\widetilde V^{\top}$  for the right hand side of the $L^{\textup{new}}$ block
      \item $ \widetilde U\widetilde U^{\top}f(W^{r}_{0})V_0V_0^{\top}$  for the right hand side of the $K^{\textup{new}}$ block
      \item and zero for the lower right block. 
\end{itemize}
Since the augmented bases are orthonormal, we can write for $W^{r}_{0} = U_0S_0V_0$
\begin{align*}
    \augWr_{0}  \overset{(\ref{eq_S_step})}{=}\,& W^{r}_{0} + \lambda U_0U_0^{\top}f(W^{r}_{0})V_0V_0^{\top} + \lambda \widetilde U\widetilde U^{\top}f(W^{r}_{0})V_0V_0^{\top} + \lambda U_0U_0^{\top}f(W^{r}_{0})\widetilde V\widetilde V^{\top} \\
    =\,& W^{r}_{0} - \lambda U_0U_0^{\top}f(W^{r}_{0})V_0V_0^{\top} + \lambda \augU\augU^{\top}f(W^{r}_{0})V_0V_0^{\top} + \lambda U_0U_0^{\top}f(W^{r}_{0})\augV\augV^{\top}\\
    =\,& W^{r}_{0} - \lambda  U_0U_0^{\top}f(W^{r}_{0})V_0V_0^{\top} + \lambda  f(W^{r}_{0})V_0V_0^{\top} + \lambda U_0U_0^{\top}f(W^{r}_{0})\\
    \overset{ (\ref{def_projection})}{=}\,& W^{r}_{0} + \lambda  P(W^{r}_{0})f(W^{r}_{0}).
\end{align*}

By \Cref{lem_loss_function_est} we have
\begin{align}\label{eq:decdir1}
    \mathcal{L}(\augWr_{0}) - \mathcal{L}(W^{r}_{0}) \leq - \langle f(W^{r}_{0}), \augWr_{0} -W^{r}_{0} \rangle + \frac{L}2 \Vert \augWr_{0} - W^{r}_{0} \Vert^2.
\end{align}
Therefore, plugging the above equation into \cref{eq:decdir1} yields
\begin{align}\label{eq:helper_0}
    \mathcal{L}(\augWr_{0}) - \mathcal{L}(W^{r}_{0}) \leq\,& - \lambda\langle f(W^{r}_{0}),P(W^{r}_{0})f(W^{r}_{0}) \rangle + \frac{L\lambda^2}{2}\Vert P(W^{r}_{0})f(W^{r}_{0}) \Vert^2 \\
    =\,& - \lambda\langle P(W^{r}_{0})f(W^{r}_{0}),P(W^{r}_{0})f(W^{r}_{0}) \rangle + \frac{L\lambda^2}{2}\Vert P(W^{r}_{0})f(W^{r}_{0}) \Vert^2 \,\\
    =\,& - \lambda\left(1-\frac{L\lambda^2}{2}\right)\Vert P(W^{r}_{0})f(W^{r}_{0}) \Vert^2 \,.
\end{align}
where the second line is obtained by definition of the orthogonal projection.
Comparing the loss before $\augWr$ and after $W^{r}_{1}$ truncation yields for some $s\in(0,1)$ using the mean value theorem and the Cauchy-Schwarz inequality,
\begin{equation}\label{eq:drift_truncation}
\mathcal{L}(W^{r}_{1}) \leq \mathcal{L}(\augWr_0+ \langle \nabla \mathcal{L}(sW^{r}_{1}+(1-s)\augWr),W^{r}_{1}-\augWr_0 \rangle \leq \mathcal{L}(\augWr_0)+L \|W^{r}_{1}- \augWr_0 \|.
\end{equation}
Plugging \cref{eq:drift_truncation} into \cref{eq:helper_0} then gives
\begin{align*}
      \mathcal{L}(W^{r}_{1}) - \mathcal{L}(W^{r}_{0}) \leq 
    - \lambda\left(1-\frac{L\lambda^2}{2}\right)\Vert P(W^{r}_{0})f(W^{r}_{0}) \Vert^2  +L \|W^{r}_{1}- \augWr_0 \|,
\end{align*}
where $L$ is the Lipschitz constant of $F$.
Hence, taking the expected value yields
\begin{align*}
    \mathbb{E}_{\xi_1}[\mathcal{L}(W^{r}_{1})] \leq \mathcal{L}(W^{r}_{0}) -\lambda\left(1-\frac{L\lambda^2}{2}\right)\mathbb{E}_{\xi_1}[\Vert P(W^{r}_{0})f(W^{r}_{0}) \Vert^2] +L\mathbb{E}_{\xi_1}[ \|W^{r}_{1}- \augWr_0 \|]\,.
\end{align*}
\end{proof}
\section{Convergence}\label{app_convergence}
\begin{theorem}{(Restatement of \Cref{theo_convergence})}
    Let $\mathcal{L}\geq 0$ and $W^{r}_{1},\dots,W^{r}_T$ be the solutions generated by \Cref{alg_efficient_TDLRT} over $T$ steps. Let the learning rate sequence $\{\lambda_t\}$ satisfy the Robbins-Monro conditions:
    \[
    \textstyle{\sum_t \lambda_t =+\infty \qquad \sum_t \lambda_t^2 < +\infty \, .}
    \]
Further assume $\sum_{t=1}^{T-1}\mathbb{E}[\|W^{r}_{t+1} - \augWr_{t} \|] \leq D < \infty$, i.e.  after some time, the solution $W^{r}_t$ is contained in a manifold of rank $r$. 
Then we have
       \begin{align*}
        \liminf_{T\rightarrow\infty} \mathbb{E}[\Vert P(W^{r}_{T})f(W^{r}_{T}) \Vert^2] = 0\,,
    \end{align*}
    where the expected value is taken over all $\xi_t$.
\end{theorem}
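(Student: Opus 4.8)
The plan is to run the standard Robbins--Monro bookkeeping on top of the one-step descent estimate of \Cref{theo_descend_direction}. Write $a_t := \mathbb{E}[\mathcal{L}(W^r_t)]$, $g_t := \mathbb{E}[\Vert P(W^r_t) f(W^r_t)\Vert^2]$ and $\delta_t := \mathbb{E}[\Vert W^r_{t+1} - \augWr_t\Vert]$, where all expectations are over $\xi_1,\dots,\xi_T$. Taking the total expectation of the estimate in \Cref{theo_descend_direction} (the conditional bound holds stepwise, so the tower property applies) yields the purely scalar recursion
\begin{align*}
  a_{t+1} \;\le\; a_t \;-\; \lambda_t\Bigl(1 - \tfrac{L\lambda_t^2}{2}\Bigr) g_t \;+\; L\,\delta_t .
\end{align*}
Everything now happens at the level of these three nonnegative sequences.

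Next I would use the summability $\sum_t \lambda_t^2 < \infty$, which forces $\lambda_t \to 0$, to pick $t_0$ with $1 - L\lambda_t^2/2 \ge \tfrac12$ for all $t \ge t_0$. For such $t$ the recursion gives $\tfrac{\lambda_t}{2} g_t \le a_t - a_{t+1} + L\delta_t$, and summing from $t_0$ to $T-1$ telescopes to
\begin{align*}
  \tfrac12\sum_{t=t_0}^{T-1}\lambda_t\, g_t \;\le\; a_{t_0} - a_T + L\sum_{t=t_0}^{T-1}\delta_t \;\le\; a_{t_0} + L D ,
\end{align*}
where I used $a_T = \mathbb{E}[\mathcal{L}(W^r_T)] \ge 0$ (since $\mathcal{L}\ge 0$) and the hypothesis $\sum_t \delta_t \le D$; the constant $a_{t_0}$ is finite because $\mathcal{L}$ is bounded by $B$. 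Letting $T \to \infty$ gives $\sum_{t\ge t_0}\lambda_t g_t < \infty$.

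Finally I would conclude by contradiction: if $\liminf_{T\to\infty} g_T = 2\varepsilon' > 0$, then $g_t \ge \varepsilon'$ for all $t$ beyond some $t_1$, so $\sum_{t\ge\max(t_0,t_1)}\lambda_t g_t \ge \varepsilon'\sum_{t\ge\max(t_0,t_1)}\lambda_t = \infty$ by $\sum_t\lambda_t = \infty$, contradicting the previous display. Hence $\liminf_{T\to\infty}\mathbb{E}[\Vert P(W^r_T) f(W^r_T)\Vert^2] = 0$.

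The only genuinely delicate point, and the step I expect to need the most care, is the interplay between the coefficient $1 - L\lambda_t^2/2$ coming out of \Cref{theo_descend_direction} and the stated step-size restriction: one must ensure this coefficient is eventually bounded below by a fixed positive constant so that the negative term really controls $g_t$ — which holds precisely because $\lambda_t\to 0$ — and one must check that the drift terms $\delta_t$ introduced by truncation are summable, which is exactly the extra hypothesis. The passage from $\sum\lambda_t g_t < \infty$ together with $\sum\lambda_t = \infty$ to $\liminf g_t = 0$ is classical, and it is worth stressing that at this generality only $\liminf = 0$ (not $\lim = 0$) is available, consistent with the statement.
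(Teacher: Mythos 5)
Your proof is correct and follows exactly the same Robbins--Monro telescoping argument as the paper: take total expectations of the one-step estimate from \Cref{theo_descend_direction}, telescope the resulting scalar recursion, and use the summability of the truncation drift to bound the telescoped sum. You are somewhat more careful than the paper's own (rather terse) closing steps, making explicit that $\sum_t\lambda_t^2<\infty$ forces $\lambda_t\to 0$ so the coefficient $1-L\lambda_t^2/2$ is eventually bounded below by $1/2$, and spelling out the contradiction argument that converts $\sum_t\lambda_t g_t<\infty$ together with $\sum_t\lambda_t=\infty$ into $\liminf_t g_t=0$ — two steps the paper compresses into the sentence ``when $T\rightarrow\infty$ the right-hand side remains bounded, implying\ldots''.
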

\begin{proof}
    By taking the expected value over $\xi_{1},\dots,\xi_{T}$ in \cref{the0_2_result} and denoting the corresponding expected value as $\mathbb{E}[\cdot]$ we get
    \begin{align*}
        \mathbb{E}[\mathcal{L}(W^{r}_{t+1})] - \mathbb{E}[\mathcal{L}(W^{r}_{t})] \leq  -&{\lambda_t}\mathbb{E}[\Vert P(W^{r}_{t})f(W^{r}_{t}) \Vert^2] + \frac{L{\lambda_t}^2}{2}\mathbb{E}[\Vert P(W^{r}_{t})f(W^{r}_{t}) \Vert^2]\\
        +&L \mathbb{E}[\|W^{r}_{t+1} - \augWr_{t} \|] \, \\
        = &-\lambda_t\left(1-\frac{L{\lambda_t}}{2}\right)\mathbb{E}[\Vert P(W^{r}_{t})f(W^{r}_{t}) \Vert^2]  + L \mathbb{E}[\|W^{r}_{t+1} - \augWr_{t} \|] \,.
    \end{align*}
    Using a telescoping sum until $t=T$ then yields
    \begin{align*}
        - \mathcal{L}(Y_{{0}}) \leq \mathbb{E}[\mathcal{L}(W^{r}_{t})] - \mathcal{L}(Y_{{0}}) \leq -\,&\sum_{t=1}^{T-1}\lambda_t\left(1-\frac{L{\lambda_t}}{2}\right)\mathbb{E}[\Vert P(W^{r}_{t})f(W^{r}_{t}) \Vert^2] \\
        &+ L \sum_{t=1}^{T-1}\mathbb{E}[\|W^{r}_{t+1} - \augWr_{t} \|]\,.
    \end{align*}
    Rearranging gives
    \begin{align*}
        \sum_{t=1}^{T-1}\lambda_t \left(1-\frac{L{\lambda_t}}{2}\right)\mathbb{E}[\Vert P(W^{r}_{t})f(W^{r}_{t}) \Vert^2] &\leq \mathcal{L}(Y_{{0}}) + L \sum_{t=1}^{T-1}\mathbb{E}[\|W^{r}_{t+1} - \augWr_{t+1} \|]\,. \\
       & \leq \mathcal{L}(Y_{{0}}) + L D\,.
    \end{align*}
    Using the assumptions $\Vert P(W^{r}_{t})f(W^{r}_{t}) \Vert \leq B$ and $\sum_{t=1}^{T-1}\mathbb{E}[\|W^{r}_{t+1} - \augWr_{t+1} \|]\leq D$.
    Now, when $T\rightarrow \infty$, then the right-hand side remains bounded, implying that
    \begin{align*}
        \liminf_{T\rightarrow\infty} \mathbb{E}[\Vert P(W^{r}_{t})f(W^{r}_{t}) \Vert^2] = 0\,.
    \end{align*}
\end{proof}

\section{Efficient evaluation of the right hand side of the low-rank dynamics}\label{app_gradient_trick}

{\Cref{alg_efficient_TDLRT} creates a trajectory in the low-rank parameter space, that robustly follows the full-rank solution of the gradient flow of the neural network training}. 
In particular, \Cref{theo_gradient_trick} yields a time-continuous representation of  \Cref{alg_efficient_TDLRT}.
\begin{theorem}\label{theo_gradient_trick}
The evolution equations \cref{eq_kls} are explicit Euler discretizations of a dynamical system which is equivalent to 
\begin{align}\label{eq_cont_kls}
\begin{aligned}
    \dot S &=-\nabla_S\mathcal{L}(U_0S(t)V_0^\top), \qquad S(t=0) = S_0, \\
    \dot K &=-\nabla_K\mathcal{L}(K(t)V_0^\top), \qquad K(t=0) = U_0S_0,\\
    \dot L &=-\nabla_L\mathcal{L}(U_0L(t)^\top), \qquad L(t=0) = S_0^\top V_0,
\end{aligned}
\end{align}
where $\mathcal{L}$ is the stochastic loss given random data samples.
\end{theorem}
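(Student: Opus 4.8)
The plan is to prove \Cref{theo_gradient_trick} in two moves: first exhibit \cref{eq_cont_kls} as the change-of-variables form of the Riemannian flow \cref{eq_fine_tune_grad_flow_dlrt}, and then verify that \cref{eq_kls} is, line by line, the explicit Euler step of \cref{eq_cont_kls} with step size $\lambda$. Within one iteration the stochastic loss $\mathcal{L}(\cdot)=\mathcal{L}(\cdot\,;\xi)$ is a fixed $L$-smooth function of its matrix argument, so all of the following is a deterministic chain-rule computation; the only structural fact used is that $S_0$ is invertible, which holds because $S_0$ is initialized diagonal and full rank.

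For the first move I would substitute $K(t)=U(t)S(t)$ and $L(t)=V(t)S(t)^{\top}$ into \cref{eq_fine_tune_grad_flow_dlrt}, freezing the bases at $U_0,V_0$ as in the parallel basis/coefficient splitting. Using $\dot K=\dot U S+U\dot S$ together with the formulas of \cref{eq_fine_tune_grad_flow_dlrt}, the factor $S^{-1}S=I$ removes the stiff term, while the contribution $+\,U_0U_0^{\top}\nabla_W\mathcal{L}\,V_0$ arising from expanding the projector $I-U_0U_0^{\top}$ in $\dot U S$ cancels the contribution $-\,U_0U_0^{\top}\nabla_W\mathcal{L}\,V_0$ from $U\dot S$; what remains is $\dot K=-\nabla_W\mathcal{L}(KV_0^{\top})\,V_0=-\nabla_K\mathcal{L}(KV_0^{\top})$, the last equality being the chain rule for $W=KV_0^{\top}$. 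The symmetric computation with $\dot L=\dot V S^{\top}+V\dot S^{\top}$ gives $\dot L=-\nabla_W\mathcal{L}(U_0L^{\top})^{\top}U_0=-\nabla_L\mathcal{L}(U_0L^{\top})$, and the $S$-equation is copied verbatim. With the initial data $S(0)=S_0$, $K(0)=U_0S_0$, $L(0)=V_0S_0^{\top}$ this is exactly \cref{eq_cont_kls} (the initial datum printed there as $S_0^{\top}V_0$ should be read as $V_0S_0^{\top}$, so that $U_0L(0)^{\top}=U_0S_0V_0^{\top}$).

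For the second move I would write one explicit Euler step of \cref{eq_cont_kls} at $t=0$, namely $S^{\mathrm{new}}=S_0-\lambda\nabla_S\mathcal{L}(U_0S_0V_0^{\top})$, $K^{\mathrm{new}}=U_0S_0-\lambda\,\nabla_K\mathcal{L}(KV_0^{\top})\big|_{K=U_0S_0}$, and $L^{\mathrm{new}}=V_0S_0^{\top}-\lambda\,\nabla_L\mathcal{L}(U_0L^{\top})\big|_{L=V_0S_0^{\top}}$, and then rewrite the two bracketed gradients in the form of \cref{eq_kls}. The chain rule for $W=KV_0^{\top}$ gives $\nabla_K\mathcal{L}(KV_0^{\top})=\nabla_W\mathcal{L}(KV_0^{\top})\,V_0$, so at $K=U_0S_0$ this equals $\nabla_W\mathcal{L}(U_0S_0V_0^{\top})\,V_0$; on the other hand the chain rule for $W=USV^{\top}$ gives $\nabla_U\mathcal{L}(U_0S_0V_0^{\top})=\nabla_W\mathcal{L}(U_0S_0V_0^{\top})\,V_0S_0^{\top}$, hence $\nabla_U\mathcal{L}(U_0S_0V_0^{\top})\,S_0^{-\top}=\nabla_W\mathcal{L}(U_0S_0V_0^{\top})\,V_0$ since $S_0^{\top}S_0^{-\top}=I$. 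The two expressions agree, which is the $K$-line of \cref{eq_kls}; symmetrically $\nabla_L\mathcal{L}(U_0L^{\top})\big|_{L=V_0S_0^{\top}}=\nabla_W\mathcal{L}(U_0S_0V_0^{\top})^{\top}U_0=\nabla_V\mathcal{L}(U_0S_0V_0^{\top})\,S_0^{-1}$, which is the $L$-line, and the $S$-line is immediate. This closes the proof.

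The one step carrying real content — and hence the part I would write out most carefully — is the identity $\nabla_U\mathcal{L}\,S_0^{-\top}=\nabla_K\mathcal{L}(KV_0^{\top})\big|_{K=U_0S_0}$ and its $L$-analogue: the inverse $S_0^{-\top}$, which in the Riemannian flow \cref{eq_fine_tune_grad_flow_dlrt} sits inside the vector field and makes it stiff when the singular values spread, here merely post-multiplies an ordinary backpropagated gradient, so it injects no stiffness into the explicit integrator and, with $S_0$ diagonal, is essentially free to apply. The remaining work is pure bookkeeping: keeping the transpose placements and the "which factor is held fixed" conventions consistent across \cref{eq_fine_tune_grad_flow}, \cref{eq_fine_tune_grad_flow_dlrt} and \cref{eq_kls}. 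Everything else is routine.
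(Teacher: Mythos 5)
Your proof is correct and takes essentially the same route as the paper's: derive $\dot K = -\nabla_W\mathcal{L}\,V$ by the cancellation inside $\dot U S + U\dot S$, use the chain rule to identify this simultaneously with $-\nabla_K\mathcal{L}(KV^\top)$ and with $-\nabla_U\mathcal{L}\,S^{-\top}$ (and symmetrically for $L$), and then observe that explicit Euler on \cref{eq_cont_kls} reproduces \cref{eq_kls}; you merely perform the discretization before the chain-rule matching rather than after, which is immaterial. Your aside that the printed initial datum $S_0^\top V_0$ should read $V_0 S_0^\top$ is also correct, as the former is not even dimensionally consistent with $L\in\mathbb{R}^{n\times r}$.
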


\begin{proof}
    Consider the continuous time dynamics of $\dot K$, where we omit explicit time dependence on $U,S,V$ and $K$ for the sake of brevity, i.e.,
\begin{align} \label{eq:Ki_reduced}
    \begin{aligned}
    \dot{K}
        &= \dot{\left(US\right)} \\ 
        &= \dot{U} S + U \dot{S} \\
        &  \stackrel{(\ref{eq_fine_tune_grad_flow_dlrt})}{=}
        -(I-U U^{\top})\nabla_{W}\mathcal{L}(USV^\top)V S^{-1}S - U U^{\top} \nabla_{W}\mathcal{L}(USV^\top) V  \\
        &= - (I -U U^{\top})\nabla_{W}\mathcal{L}(USV^\top)V - U U^{\top}\nabla_{W}\mathcal{L}(USV^\top) V \\
       &= (U U^{\top}- I)\nabla_{W}\mathcal{L}(USV^\top)V - U U^{\top}\nabla_{W}\mathcal{L}(USV^\top) V \\
       &=-\nabla_{W}\mathcal{L}(USV^\top)V
    \end{aligned}
\end{align}
Further, using the chain rule, we observe
\begin{align*}
        \nabla_{U}\mathcal{L}(USV^\top) =\nabla_W\mathcal{L}(USV^\top)\nabla_{U}(USV^\top)=\nabla_W\mathcal{L}(USV^\top)VS^\top\,.
    \end{align*}
Thus, $-\nabla_{U}\mathcal{L}(USV^\top)S^{-\top} = -\nabla_W\mathcal{L}(USV^\top)V= \dot K$. Lastly we have by the chain rule $\dot K =-\nabla_W\mathcal{L}(USV^\top)V  =-\nabla_K\mathcal{L}(USV^\top)$, which yields 
\begin{align*}
    \dot K =-\nabla_{U}\mathcal{L}(USV^\top)S^{-\top}  =-\nabla_K\mathcal{L}(KV^\top)\,.
\end{align*}
Analogously we obtain for $\dot L$
\begin{align*}
    \dot L =-\nabla_{V}\mathcal{L}(USV^\top)S^{-1}  =-\nabla_L\mathcal{L}(UL^\top)\,,
\end{align*}
which concludes the proof.
\end{proof}

Note that using an explicit Euler time discretization for \cref{eq_cont_kls} directly yields \cref{eq_kls}, the update step of \ALGNAME.

\section{Robust error bound of the low-rank system}\label{app_robust_error_bound}
We show the robust error bound for \Cref{alg_efficient_TDLRT} applied to a single layer, and then extend the result to a network containing multiple layers treated with \Cref{alg_efficient_TDLRT}.

\begin{theorem}{(Restatement of \Cref{theo_robust_error_bound})}
For an integer $k$, let $t=k\lambda$. Let $W(t)$ be the solution of \cref{eq_cont_gradient_flow_full_rank}, and let $W^r_t$ be the factorized low-rank solution after $k$ steps with \Cref{alg_efficient_TDLRT}.
    Assume that for any $Z$ in a neighborhood of $W(t)$, we have $\norm{(I-  P(Z))\nabla\mathcal{L}(Z)}<\varepsilon$, i.e., the gradient flow is close to $T_Z \mathcal M_r$. Then, 
\begin{equation}\label{eq:approx}
   \norm{W(t)-W^r_t}\leq  c_{1}\varepsilon + c_{2}\lambda +c_{3}\vartheta/\lambda\,.
\end{equation}
Moreover, let $W_{RF}(t)$ denote the solution of the Riemannian flow of \eqref{eq_fine_tune_grad_flow_dlrt}. Then, 
\begin{equation}\label{eq:approx2}
   \norm{W_{RF}(t)-W^r_t}\leq  c_{4}\varepsilon + c_{2}\lambda +c_{3}\vartheta/\lambda
\end{equation}
where the constants $c_{1}$, $c_{2}$,  $c_{3}$,  $c_{4}$ depend only on $L$ and $B$. 
\end{theorem}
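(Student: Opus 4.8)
The plan is to run a discrete Gronwall (``Lady Windermere's fan'') argument for the one-step error, with the crucial twist that every estimate is routed through the \emph{full-rank} right-hand side $f:=-\nabla\mathcal L$ — which is $L$-Lipschitz and $B$-bounded by \Cref{ass_2} — rather than through the projector $P(\cdot)$, whose Lipschitz constant degenerates near small singular values. The tangency hypothesis $\norm{(I-P(Z))\nabla\mathcal L(Z)}<\varepsilon$ is exactly what makes this possible. Writing $t_n=n\lambda$, I would first recall the one-step identity established in the proof of \Cref{theo_descend_direction} (valid a fortiori with deterministic gradients): from $W^r_n=U_nS_nV_n^\top\in\mathcal M_r$, the pre-truncation iterate of \Cref{alg_efficient_TDLRT} is $\augWr_n=W^r_n+\lambda\,P(W^r_n)f(W^r_n)$, i.e.\ an explicit Euler step of the projected flow, and the truncation step obeys $\norm{W^r_{n+1}-\augWr_n}\le\vartheta$ by \eqref{eq_truncation_threshold}; hence $W^r_{n+1}=W^r_n+\lambda P(W^r_n)f(W^r_n)+\rho^{\mathrm{tr}}_n$ with $\norm{\rho^{\mathrm{tr}}_n}\le\vartheta$. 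On the exact side, Taylor expansion of \eqref{eq_cont_gradient_flow_full_rank} gives $W(t_{n+1})=W(t_n)+\lambda f(W(t_n))+\rho^{\mathrm{ex}}_n$ with $\norm{\rho^{\mathrm{ex}}_n}\le\tfrac12 LB\lambda^2$, since $\tfrac{d}{ds}f(W(s))=-\nabla^2\mathcal L(W(s))\dot W(s)$ has norm at most $L\norm{f(W(s))}\le LB$.

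Next I would subtract the two one-step formulas and set $e_n:=\norm{W^r_n-W(t_n)}$, obtaining $e_{n+1}\le e_n+\lambda\norm{P(W^r_n)f(W^r_n)-f(W(t_n))}+\tfrac12 LB\lambda^2+\vartheta$. The middle term is handled by the decomposition $P(W^r_n)f(W^r_n)=f(W^r_n)-(I-P(W^r_n))f(W^r_n)$: as long as $W^r_n$ stays in the neighbourhood of $W(t)$ where the hypothesis holds, the last summand has norm $<\varepsilon$, while $\norm{f(W^r_n)-f(W(t_n))}\le L e_n$. This yields $e_{n+1}\le(1+L\lambda)e_n+\lambda\varepsilon+\tfrac12 LB\lambda^2+\vartheta$, and iterating from $e_0$ (which is $0$ under consistent initialization $W^r_0=W(0)$, or bounded by the $\delta$ of \Cref{ass_4} otherwise), together with $(1+L\lambda)^k\le e^{Lk\lambda}=e^{Lt}$, the discrete Gronwall lemma gives
\[
e_k\le e^{Lt}e_0+\frac{e^{Lt}-1}{L}\Bigl(\varepsilon+\tfrac12 LB\lambda+\vartheta/\lambda\Bigr),
\]
which is \eqref{eq:approx} with $c_1=c_3=(e^{Lt}-1)/L$ and $c_2=B(e^{Lt}-1)/2$, depending only on $L$, $B$ and the horizon $t$. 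That the iterates stay in the admissible neighbourhood is a routine bootstrap: for fixed finite $t$ the bound is $O(\varepsilon+\lambda+\vartheta/\lambda)$, so for $\lambda,\vartheta$ small they cannot leave the region where $\norm{(I-P)\nabla\mathcal L}<\varepsilon$.

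For \eqref{eq:approx2} I would not repeat the accumulation but instead compare the two \emph{continuous} flows. With $\dot W_{RF}=P(W_{RF})f(W_{RF})$, $\dot W=f(W)$ and $\Delta(t):=\norm{W(t)-W_{RF}(t)}$, the estimate $\tfrac{d}{dt}\Delta\le\norm{f(W)-f(W_{RF})}+\norm{(I-P(W_{RF}))f(W_{RF})}\le L\Delta+\varepsilon$ and Gronwall give $\Delta(t)\le\frac{e^{Lt}-1}{L}\varepsilon$; the triangle inequality $\norm{W_{RF}(t)-W^r_t}\le\Delta(t)+\norm{W(t)-W^r_t}$ then yields \eqref{eq:approx2} with $c_4=c_1+(e^{Lt}-1)/L$ and the same $c_2,c_3$. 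Finally, the multilayer case follows immediately from \Cref{prop:structure_preservation}: layerwise application of \Cref{alg_efficient_TDLRT} is one step of the very same scheme on the augmented single-matrix system $\dot{\mathcal W}=-P(\mathcal W)\Pi\nabla\mathcal L(\mathcal W)$, to which the argument above applies verbatim with $\Pi\nabla\mathcal L$ in place of $\nabla\mathcal L$ and the tangency hypothesis read on the augmented system.

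The main obstacle I anticipate — and the reason the statement is phrased with an $\varepsilon$-hypothesis rather than assuming a genuinely smooth manifold flow — is controlling the term $(I-P(W^r_n))f(W^r_n)$ \emph{without} any regularity of $P$ near the stratum boundary (small singular values): one must resist the temptation to estimate $\norm{P(W^r_n)f - P(W(t_n))f}$ via curvature/Lipschitz bounds on the projector and instead bound the non-tangential part directly by $\varepsilon$. Once that choice is made, the remainder is a standard one-step-error plus Gronwall computation, entirely analogous to the robust error bounds for dynamical low-rank integrators. The only remaining bookkeeping item is making the ``admissible neighbourhood'' bootstrap on $[0,t]$ precise, which is routine for fixed $t$ and $\lambda,\vartheta$ sufficiently small.
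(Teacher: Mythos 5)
Your proof is correct and reaches the stated bound, but it is organized more economically than the paper's. The paper establishes the local error via the triangle decomposition
\[
\norm{W(t_1)-W^r_1}\le\norm{W(t_1)-P_{\augU}W(t_1)P_{\augV}}+\norm{P_{\augU}W(t_1)P_{\augV}-\augWr}+\norm{\augWr-W^r_1},
\]
bounds the first summand by analysing $(I-P_{\augU})W(t_1)$ (and its co-range analogue) against the tangency hypothesis and the construction of $K^{\mathrm{new}},L^{\mathrm{new}}$, and bounds the second by a further four-block decomposition (the $S$-, $K$-, $L$- and cross-blocks of $\augS$), each estimated by Taylor expansion of $f$ along the exact flow. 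You instead reuse the exact one-step identity $\augWr_n=W^r_n+\lambda P(W^r_n)f(W^r_n)$ already derived in the proof of \Cref{theo_descend_direction}, which collapses those two summands into a single defect $\norm{W(t_{n+1})-W^r_n-\lambda P(W^r_n)f(W^r_n)}$, split as $[f(W^r_n)-f(W(t_n))]+(I-P(W^r_n))f(W^r_n)$ and bounded by $Le_n+\varepsilon$. That yields the recursion $e_{n+1}\le(1+L\lambda)e_n+\lambda\varepsilon+\tfrac12 LB\lambda^2+\vartheta$ more directly, and the ensuing discrete Gronwall step is the same ``Lady Windermere'' argument the paper invokes. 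Both routes rest on the design choice the paper highlights — all Lipschitz estimates are routed through the full-rank $f$, never through $P(\cdot)$ or the individual factors, so no $S^{-1}$ enters the constants. Two cosmetic remarks: your estimate $\norm{\rho^{\mathrm{ex}}_n}\le\tfrac12 LB\lambda^2$ invokes $\nabla^2\mathcal L$ and hence twice differentiability, whereas only $L$-smoothness is assumed; the same remainder follows from Lipschitzness of $f$ alone via $\norm{W(t_{n+1})-W(t_n)-\lambda f(W(t_n))}\le\int_{t_n}^{t_{n+1}}\norm{f(W(s))-f(W(t_n))}\,ds\le\tfrac12 LB\lambda^2$. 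Also, your constants carry an explicit $e^{Lt}$ factor while the theorem statement says they depend only on $L$ and $B$; this time-horizon dependence is equally implicit in the paper's Lady Windermere step, so it is a shared looseness of exposition rather than a defect of your argument.
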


\begin{proof} Let us first investigate the local error. That is, we choose the solution at a given time $t_0$ of the full-rank gradient flow of \cref{eq_cont_gradient_flow_full_rank}, denoted as $W(t_0)$, as a given iteration of \ALGNAME, which we denote as $W^r_0$. Hence, $W(t_0) = W^r_0 =: W_0\in\mathcal{M}_r$. We are then interested in bounding the distance between the full-rank flow at $t_1=t_0+\lambda$ to the \ALGNAME~solution after a single iteration with learning rate $\lambda$. To simplify notation, we denote $\widehat U = [U_0 | \widetilde U]\in\mathbb{R}^{n\times 2r_0}$, $\widehat V = [V_0 | \widetilde V]\in\mathbb{R}^{n\times 2r_0}$ and denote the projections onto these augmented basis vectors as $P_{\augU}=\augU\augU^\top$ and $P_{\augV}=\augV\augV^\top$. Moreover, $c$ denotes a generic constant that only depends on $L$ and $B$. It is important to note that this constant does not depend on $S_k^{-1}$, since we never perform Taylor expansions of the individual low-rank factors.

Let us denote the augmented solution of \ALGNAME{} before truncation as $\augWr = \augU\augS\augV^\top$. Similarly, $W^r_1$ is the truncated solution after iteration~$1$. Then, the local error is bounded by
\begin{align*}
     \norm{W(t_1)-W^r_1} \leq\,& \norm{W(t_1) - P_{\augU} W(t_1) P_{\augV}} + \\
     &\norm{ P_{\augU} W(t_1) P_{\augV} -\augWr } + \norm{\augWr - W^r_1}.
\end{align*}
In the following, we bound the three norms individually in three corresponding steps.

\textbf{Step 1} - Bounding $ \norm{W(t_1) - P_{\augU} W(t_1) P_{\augV}}$: 
Using the triangle inequality, we obtain
    \begin{align*}
        \norm{W(t_1) - P_{\augU} W(t_1) P_{\augV}} \leq&  \norm{W(t_1) - P_{\augU} W(t_1) }+ \norm{P_{\augU}W(t_1)(I - P_{\augV})}\\
         = &\norm{(I - P_{\augU}) W(t_1) }+ \norm{W(t_1)(I - P_{\augV})},
    \end{align*}
    using orthonormality of $\augU$.
    
\textbf{First term:} Consider the first term with the dynamics $\dot W(t)=f(W)$ in mind,
    \begin{align*}
& \norm{(I - P_{\augU})  W(t_1)} \\
        \overset{\textup{(I)}}{\leq}\,& \norm{ (I - P_{\augU}) (W_0  +\lambda f(W_0))} + c\lambda^2 \\
        \overset{}{\leq}\,& \norm{ (I - P_{\augU}) (W_0  -\lambda P(W_0)f(W_0) + \lambda(I-P(W_0))f(W_0))} +c \lambda^2 \\
        \leq\,& \norm{ (I - P_{\augU}) W_0}  + \lambda \norm{(I - P_{\augU})P(W_0)f(W_0)} +\lambda \norm{ (I - P_{\augU})(I-P(W_0))f(W_0))} + c\lambda^2 \\
        \overset{\textup{(II)}}{=}\,&\lambda \norm{(I - P_{\augU})P(W_0)f(W_0)} +\lambda \norm{ (I - P_{\augU})(I-P(W_0))f(W_0))} + c\lambda^2 \\
        \overset{\textup{(III)}}{\leq}\,& \lambda\norm{(I - P_{\augU})  P(W_0)f(W_0)}  + \lambda\varepsilon + c\lambda^2 \\
        \overset{\textup{(IV)}}{\leq}\,& \lambda\norm{(I - P_{\augU}) f(W_0)\augV\augV^\top} + \lambda\varepsilon + c\lambda^2.
        \end{align*}
    using Taylor expansion in (I), $W_0\in\mathcal{M}_r$ in (II), \Cref{ass_1} in (III), and \cref{def_projection} in (IV).

By construction of the basis augmentation, we obtain 
\begin{align}\label{eq_helper_2}
    (I - P_{\augU})K^{\mathrm{new}}=(I - P_{\augU}) U_0S_0 = 0.
\end{align}
From \cref{eq_helper_2} we can directly conclude that $\norm{ (I - P_{\augU})  f(W_0)V_0V_0^{\top} } =0$.
Thus we obtain
\begin{align*}
    \lambda\norm{(I - P_{\augU}) f(W_0)\augV\augV^\top} =& \lambda\norm{(I - P_{\augU}) f(W_0)V_0V_0^\top} + \lambda\norm{(I - P_{\augU}) f(W_0)\newV\newV^\top}\\
    &  \overset{}{\leq} \lambda\epsilon,
\end{align*}
where we used for the second term that $\newV$ is in the orthogonal complement of $V_0$.
Hence,  
    \begin{align*}
       \norm{(I-P_{\augU})  W(t_1) } \leq  c\lambda^2 + \lambda \varepsilon\,.
    \end{align*}
    
\textbf{Second term}: The same derivation for the co-range using the evolution for $L(t)$ yields     
\begin{align*}
    \norm{W(t_1)(I - P_{\augV} ) } \leq c\lambda^2 +  \lambda\varepsilon.
\end{align*}

\textbf{Step 2 } - Bounding $\norm{ P_{\augU} W(t_1) P_{\augV} -\augWr }$: We have by the assembly of the augmented $S$ matrix in \cref{eq_S_step},
\begin{align*}
\augWr = \augU \augS\augV^{\top} = U_0 S^{\mathrm{new}}V_0^{\top} + \newU \newU^\top K^{\mathrm{new}}V_0^{\top} + U_0 L^{\mathrm{new},\top}\newV\newV^\top,
\end{align*}
from which we obtain the error bound between the projected $W(t_1)$ and $\augWr$:
    \begin{align*}
        \norm{P_{\augU} W(t_1) P_{\augV} -\augWr} \leq\,& \norm{P_{\augU} W(t_1) P_{\augV} -U_0 S^{\mathrm{new}}V_0^{\top} + \newU \newU^\top K^{\mathrm{new}}V_0^{\top} + U_0 L^{\mathrm{new},\top}\newV\newV^\top} \\    
      \overset{(I)}{\leq}\,&   \norm{U_0^{\top} W(t_1)V_0 - S^{\mathrm{new}} } +  \norm{ \newU^{\top} W(t_1)V_0 - \newU^{\top} K^{\mathrm{new}} } \\
        \,&+ \norm{U_0^{\top} W(t_1)\newV - L^{\mathrm{new},\top}\newV}+ \norm{\newU^{\top} W(t_1)\newV}\,.
    \end{align*}
    where we use orthonormality of $\augU$,$\augV$ in (I).
    All terms on the right-hand side can be bounded by $\lambda^2$ and $\varepsilon$ terms: 
    
  \textbf{First term:} We have
    \begin{align*}
       \norm{U_0^{\top} W(t_1)V_0 - S^{\mathrm{new}} } \overset{\textup{(I)}}{=}\,& \norm{\int_{t_0}^{t_1} U_0^{\top}(f(W(t))  - f(W_0))V_0   \, dt }\\
       \overset{\textup{(II)}}{\leq}\,& \int_{t_0}^{t_1} \norm{f(W(t))  - f(W_0)}  \, dt \\
      \overset{\textup{(III)}}{=}\,&\int_{t_0}^{t_1} \norm{f(W(t_0))  - f(W_0)}  \, dt + c\lambda^2 \\
        \overset{\textup{(IV)}}{=}\,&c\lambda^2 
    \end{align*}
    where we use in (I) $S^{\mathrm{new}} = S_0 -U_0^\top\nabla_W\mathcal{L}(W_0;\xi)V_0=  -U_0^\top f(W_0)V_0$. We use the orthonormality of $U_0,V_0$ in (II), perform a Taylor expansion of the full-rank flow in (III), and finally use that $W(t_0)=W^S(t_0)$ in (IV).
 
  \textbf{Second and third term:}   We have
    \begin{align*}
        \norm{\newU^\top W(t_1)V_0 - \newU^\top K^{\mathrm{new}} } \overset{\textup{(I)}}{\leq} \,& \int_{t_0}^{t_1}  \norm{\newU^{\top}(f(W(t))  - f(W_0))V_0 }  \, dt \\
       \overset{\textup{(II)}}{\leq}  \,& \int_{t_0}^{t_1}  \norm{f(W(t_0))  - f(W_0)}  \, dt + c\lambda^2 \\
       {=}\,& c\lambda^2\,,
    \end{align*}
where we use the K-step of \ALGNAME{} in (I) and a Taylor expansion of the full-rank flow in (II). $\norm{ U_0^\top W(t_1)\newV -  L^{\mathrm{new},\top}\newV }$ can be bounded analogously.

  \textbf{Fourth term:}    Lastly, we obtain for the fourth term,
        \begin{align*}
       \norm{\newU^\top W(t_1)\newV }=&\,  \norm{\newU^\top W(t_0)\newV + \int_{t_0}^{t_1} \newU^\top f(W(t))\newV \, dt}\\
      \overset{\textup{(I)}}{\leq} \,& \int_{t_0}^{t_1} \norm{\newU^\top f(W(t))\newV}  \, dt\\
        \leq\,& \int_{t_0}^{t_1} \norm{\newU^\top f(W(t_0))\newV}  \, dt + c\lambda^2 
        \overset{\textup{(II)}}{\leq}\lambda \varepsilon +  c\lambda^2\,.
    \end{align*}
     with $\newU^\top W(t_0)\newV =0$ by the construction of the augmented matrix $\augS$ used in (I), and in (II), we use \Cref{ass_1}.

     \textbf{Step 3} - Bounding of $ \norm{\augWr - W^r_1}$: By construction of the truncation step we directly obtain
     \begin{align*}
          \norm{\augWr - W^r_1}\leq \vartheta
     \end{align*}

     In conclusion, we obtain for a single iteration of \Cref{alg_efficient_TDLRT}
\begin{align*}
     \norm{W(t_1)-W^r_1} \leq\,& \norm{W(t_1) - P_{\augU} W(t_1) P_{\augV}} + \\
     &\norm{ \augU\augU^{\top} W(t_1) P_{\augV} -\augWr } + \norm{\augWr - W^r_1} \\
     \leq\,& \widetilde c_1\lambda\epsilon + \widetilde c_2\lambda^2 + \vartheta
\end{align*}
    To conclude, the global error in the training epochs follows by using the Lipschitz continuity of the gradient flow: We move from the local error in time to the global error in time by a standard ODEs argument of Lady Windermere’s fan~\citep[\S II.3]{wanner1996solving}; With $t = k\lambda$ and denoting the adapter computed with \ALGNAME{} at iteration $k$ as $W^r_t$ we then have
    \begin{align*}
     \norm{W(t)-W^r_t}  \leq c_1\epsilon  + c_2\lambda + c_3\vartheta/\lambda\,.
     \end{align*}
     This bounds the distance between the full-rank flow and \ALGNAME. The result trivially extends to the Riemannian flow of \eqref{eq_fine_tune_grad_flow_dlrt}. Denote by $W_{\mathrm{RF}}(t)$ the solution of the Riemannian flow $\dot W_{\mathrm{RF}}(t) = -P(W_{\mathrm{RF}}(t))\nabla_W \mathcal{L}(W_{\mathrm{RF}}(t))$. Then, since $\norm{W(t)-W_{\mathrm{RF}}(t)}  \leq c\epsilon$, it directly follows that 
     \begin{align*}
        \norm{W_{\mathrm{RF}}(t)-W^r_t}  \leq c_4\epsilon  + c_2\lambda + c_3\vartheta/\lambda\,.
     \end{align*}
\end{proof}

\section{Visualization of the stiffness of the basic low-rank system}\label{app_example_stiff}

Consider \Cref{eq_matrix_regression} in the case for $n=20$. We set the target matrix 
\begin{align*}
   W= \begin{bmatrix}
    0 & 15& 0 & \dots \\
    -2 & 0& 0 & \dots \\
    0 & 0& 0 & \dots \\
    \vdots & \vdots &\vdots  & \ddots \\
\end{bmatrix}\in\mathbb{R}^{20\times 20},
\end{align*}
which has rank $r=2$ and singular values $\sigma_1=15$ and $\sigma_2=2$
We compare SVD-lora, AdaLora, and \ALGNAME{}, both with an ansatz of form $W_{\textup{ans}}=USV^\top$ initialized as
\begin{align*}
   U,V= \begin{bmatrix}
    I \\
    0 \\
\end{bmatrix}\in\mathbb{R}^{20\times 4},\qquad S=\begin{bmatrix}
    10 & 0 &0&0 \\
    0 & 1e-2 &0&0 \\
    0 & 0 &1e-4&0 \\
    0 & 0 &0&1e-6 \\
\end{bmatrix}\in\mathbb{R}^{4\times 4}
\end{align*}
where $U$,$V$ are orthonormal, and the $S$ matrix has a fast decaying singular spectrum. 
 
AdaLora and \ALGNAME{} use a relative singular value truncation threshold $\tau=0.15$ for rank truncation. 
We found that learning rate $\lambda=0.178$ is the maximal learning rate before AdaLora and SVD-Lora become unstable, whereas \ALGNAME{} allows for arbitrary large learning rates, and we set $\lambda=0.1$. 
We present the trajectories of the S-matrix elements of the corresponding methods in  \Cref{fig_visualization} for up to $1000$ iterations or until single precision accuracy is reached.
As seen in \Cref{fig_visualization}, AdaLora and SVD-Lora exhibit heavy oscillations in the trajectories of the $S$-matrix elements - leading to slow convergence. Adalora - although using orthonormalization by regularization of the low-rank basis is not able to stabilize the training, leading to overestimation of the rank, which is $r=5$ at final time and a final loss value of $1.6$. Similarly SVD-Lora exhibits even stronger oscillations and is not able to find the right matrix approximation. In contrast, \ALGNAME{} identifies the correct rank $r=2$ and the corresponding correct singular values $15$ and $2$.

\begin{figure}[h!]
    \centering
    \begin{subfigure}[b]{0.3\textwidth}
        \includegraphics[width=\textwidth]{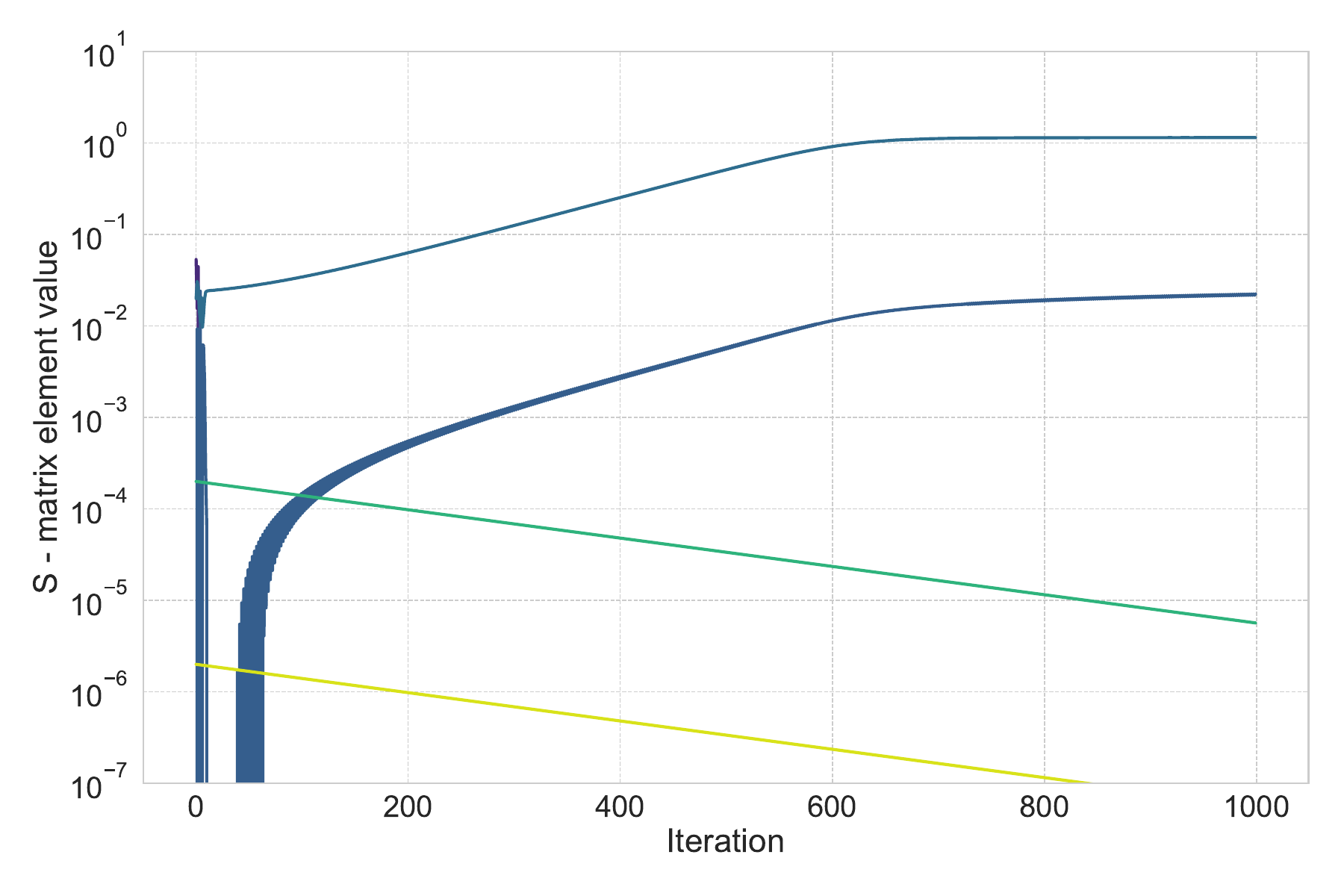}
        \caption{ SVD-Lora, $S$ matrix}
        \label{fig:sub3}
    \end{subfigure}
      \hfill 
    \begin{subfigure}[b]{0.3\textwidth}
        \includegraphics[width=\textwidth]{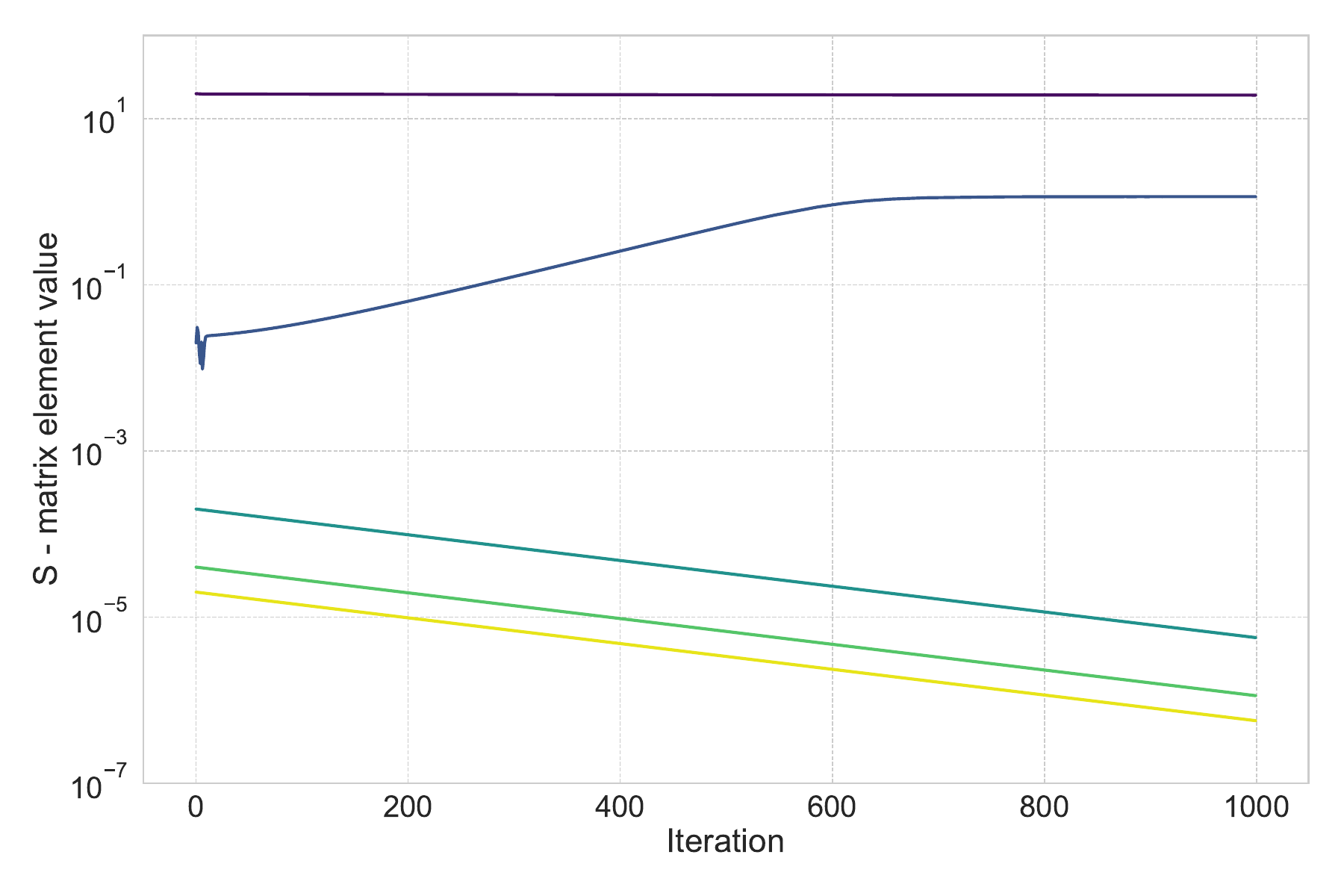}
        \caption{ AdaLora, $S$ matrix}
        \label{fig:sub3}
    \end{subfigure}
    \hfill
      \begin{subfigure}[b]{0.3\textwidth}
        \includegraphics[width=\textwidth]{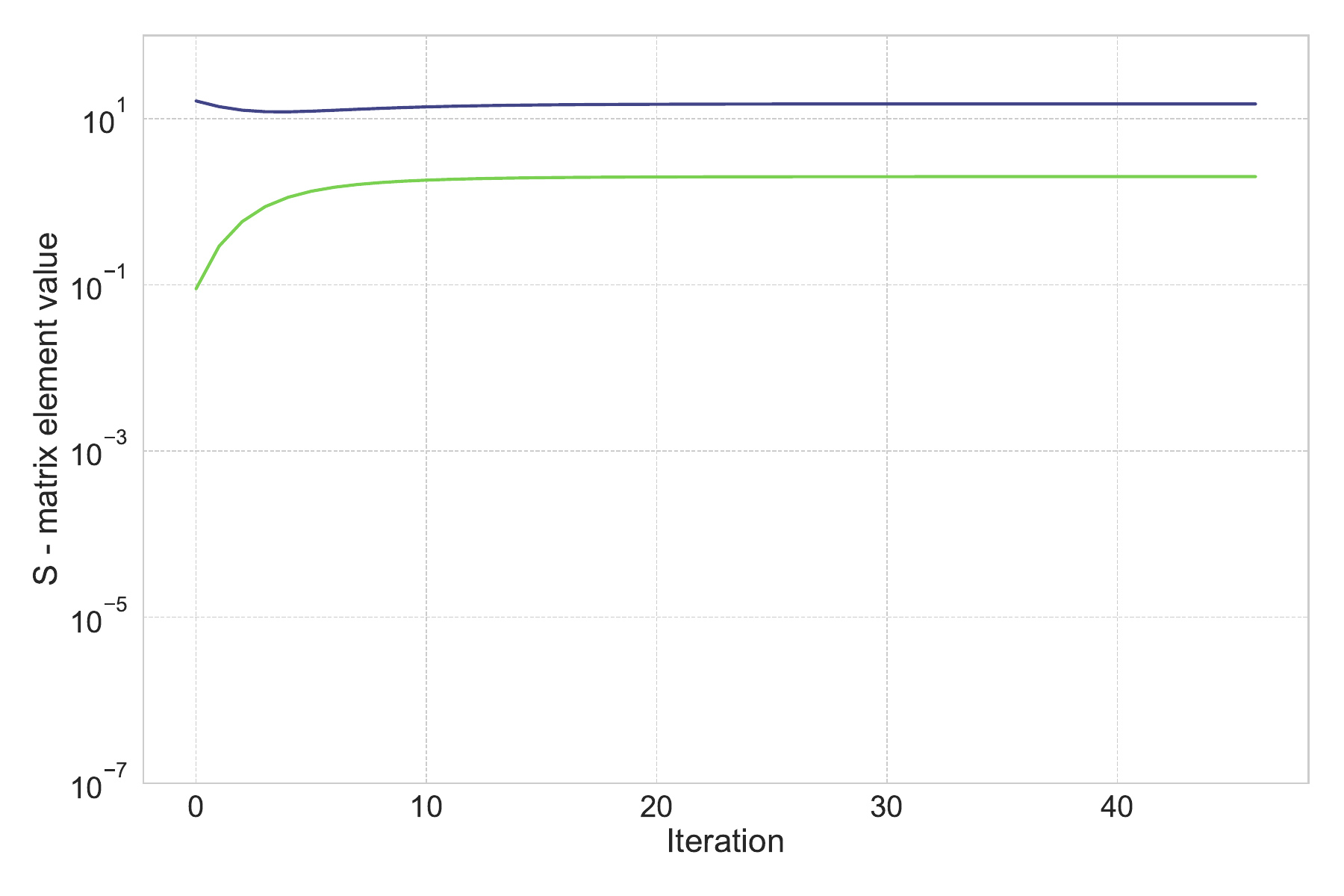}
        \caption{ \ALGNAME{}, $S$ matrix}
        \label{fig:sub3}
    \end{subfigure}
    \caption{Time-trace of the matrix elements of SVD-Lora (a) AdaLora (b) and the proposed method \ALGNAME{} (c) to solve \Cref{eq_matrix_regression}. SVD-Lora was trained with learning rate $\lambda=0.00178$, which is the largest learning rate for which the optimization remained stable, \ALGNAME{} allows larger learning rates, set to $\lambda=0.1$. \ALGNAME{} converges fast to single precision accuracy, whereas SVD-LORA still has a loss value of $1.7$ after $1000$ iterations, due to the heavy oscillations in it's $S$ matrix trajectory (a). Adalora reduces the oscillations, however incorrectly identifies the rank and fails to converge due to the influence of the additional singular values.}
    \label{fig_visualization}
\end{figure}
\section{Structure preservation}\label{app_full_network_lr}
The goal of this section is to clarify the formulation of \Cref{alg_efficient_TDLRT} in relation with the previous related literature. In particular, we want to show that the proposed algorithm can be seen as an efficient structure preservation formulation of a projected gradient flow \citep{KochLubich07} for training neural networks. In this section, to achieve full generality, we will denote with $Y_i$ either the pretrained matrices or the low-rank adapters. \\
As already mentioned in the previous section, gradient descent can be seen an forward Euler discretization of the gradient system
\[
\dot Y_i = -\nabla_{Y_i} \mathcal L(Y_1,\dots,Y_L), \,\,\,i = 1,\dots,L
\]
The neural network $f_{Y_1,\dots,Y_L}$ naturally induces a weighted graph, where nodes are neurons and weights are connections among them and for which the adjacency matrix can be written as:
\[
\mathcal Y:=\left[
\begin{array}{c|c|c|c|c}
0 & Y_1 & 0   & \cdots & 0 \\
\hline
0 & 0   & Y_2 & \cdots & 0 \\
\hline
\vdots & \vdots & \vdots & \ddots & \vdots \\
\hline
0 & 0   & 0   & \cdots & Y_{L} \\
\hline
0 & 0   & 0   & \cdots & 0
\end{array}
\right] \in \mathbb{R}^{|\mathcal{N}|\times |\mathcal{N}|}
\]
where $|\mathcal N | = \sum_{i=1}^L d_i$ is the total number of neurons of the neural network.\\
The matrix $\mathcal Y$ now represents the adjacency matrix of the computational graph, and the block structure is given by the layers. A model with a general full adjacency matrix $\mathcal Y$, would in general have non-zero connections between two generic layers $i,j$, descriebed by the block $Y_{ij}$. Let's consider the model
\[
f_{\mathcal Y}(x) = z^{L}(x),\,\,\ z^0(x) = x,\,\,\, z^{\ell+1} =  \sigma_i\Bigl(\sum_{i}Y_{i,\ell+1}z^i\Bigr)
\]
Notice that for $\mathcal Y$ upper diagonal, the previous model would be a feedforward network.
Given this observation, under the assumption that $f_{\mathcal Y}(x)$ is well defined as an eventual fixed point, we can now see the loss function as a function of the full adjacency matrix, with an abuse of notation we will call it again $\mathcal L(\mathcal Y)$. Usual training would superimpose the sparse graph with the same structure of $\mathcal Y$, but let's consider for a moment the gradient flow
\[
\dot{\mathcal Y} = -\nabla \mathcal{L}(\mathcal Y)
\]
Clearly, the flow does not preserve the sparsity of the adjacency matrix $\mathcal Y$, even for sparse initial conditions. Using the theory developed in \citep{KochLubich07} directly on this gradient flow would lead to neural networks with a non-feedforward topology. Moreover, given the size of $|\mathcal N|$ for modern neural networks, it can be expensive to compute QR or SVD decomposition of the basis matrices. 
Luckily, the sparsity structure is a simple linear constraint represented by the mask matrix
\[
\mathcal M:=\left[
\begin{array}{c|c|c|c|c}
0 & 11^\top & 0   & \cdots & 0 \\
\hline
0 & 0   & 11^\top & \cdots & 0 \\
\hline
\vdots & \vdots & \vdots & \ddots & \vdots \\
\hline
0 & 0   & 0   & \cdots & 11^\top \\
\hline
0 & 0   & 0   & \cdots & 0
\end{array}
\right] \in \mathbb{R}^{|\mathcal{N}|\times |\mathcal{N}|}
\]
and the linear operator $\Pi(A) = \mathcal{M} \odot A$. \\ 

A system preserving the sparsity pattern is given naturally by the ODE
\[
\dot{\mathcal Y} = -\Pi \nabla \mathcal L(\mathcal Y)
\]
However, it is not obvious that by projecting this last system on the manifold of rank-r matrices $\mathcal M_r$ the block structure is preserved. 
Fortunately, it is indeed the case, described by the following lemma:
\begin{proposition}(Block structure preservation of the flow) \\
Consider the gradient flow with sparse initial condition
\[
\dot {\mathcal Y} = -P(\mathcal Y)\Pi\nabla \mathcal L(\mathcal Y), \,\,\, \mathcal Y(0) = \mathcal Y_0 \in range(\Pi)
\]
Then $\mathcal Y(t) \in range(\Pi)$ for all $t\geq 0$.
\end{proposition}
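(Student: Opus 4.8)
The plan is to prove that the linear subspace $\mathrm{range}(\Pi)$ --- the matrices supported on the block upper-bidiagonal pattern encoded by the mask $\mathcal M$ --- is invariant under the flow $\dot{\mathcal Y}=-P(\mathcal Y)\Pi\nabla\mathcal L(\mathcal Y)$. The standard route has two steps: first, show the vector field $G(\mathcal Y):=-P(\mathcal Y)\Pi\nabla\mathcal L(\mathcal Y)$ is tangent to $\mathrm{range}(\Pi)$ at every point of $\mathcal M_r\cap\mathrm{range}(\Pi)$, i.e.\ $G(\mathcal Y)\in\mathrm{range}(\Pi)$ there; second, invoke the invariance principle for ODEs with locally Lipschitz right-hand side to conclude a trajectory starting in $\mathrm{range}(\Pi)$ stays there. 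I would begin with the trivial remarks that $\Pi(A)=\mathcal M\odot A$ is the \emph{orthogonal} Frobenius projection onto $\mathrm{range}(\Pi)$ (since $\mathcal M$ is a $0$--$1$ matrix), so $Q:=I-\Pi$ is an orthogonal projection too, and that $\Pi\nabla\mathcal L(\mathcal Y)\in\mathrm{range}(\Pi)$ by construction. Hence the first step reduces entirely to showing $P(\mathcal Y)$ maps $\mathrm{range}(\Pi)$ into itself whenever $\mathcal Y\in\mathcal M_r\cap\mathrm{range}(\Pi)$.

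The heart of the proof is this reduction, and the key observation is that the orthonormal bases entering the tangent-space projector inherit the block structure of $\mathcal Y$. Concretely, if $\mathcal Y$ has nonzero blocks only in positions $(i,i+1)$, with $\mathcal Y_{i,i+1}=Y_i$, then partitioning vectors by neuron groups gives $(\mathcal Y x)_i=Y_i x_{i+1}$ and $(\mathcal Y^\top x)_{i+1}=Y_i^\top x_i$, so that $\mathrm{range}(\mathcal Y)=\bigoplus_{i}\mathrm{range}(Y_i)$ and $\mathrm{range}(\mathcal Y^\top)=\bigoplus_{i}\mathrm{range}(Y_i^\top)$, with the $i$-th summand living entirely in neuron group $i$ (respectively $i+1$). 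Consequently the factors $U,V$ of an SVD $\mathcal Y=USV^\top$ may be chosen block-diagonal with respect to the neuron-group partition, so $P_U=UU^\top$ and $P_V=VV^\top$ are block-diagonal. Using the tangent-space formula $P(\mathcal Y)Z=P_U Z+ZP_V-P_U Z P_V$ of \Cref{app_recap}, for any $Z\in\mathrm{range}(\Pi)$ each of the three terms is a product of a block-diagonal matrix, a block upper-bidiagonal matrix, and a block-diagonal matrix, which is again block upper-bidiagonal; hence $P(\mathcal Y)Z\in\mathrm{range}(\Pi)$, and in particular $G(\mathcal Y)\in\mathrm{range}(\Pi)$. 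Note this computation uses only the sparsity pattern of $\mathcal Y$, not its rank, which will be convenient below.

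For the second step I would set $e(t):=Q\mathcal Y(t)$, note $e(0)=0$, and differentiate to get $\dot e=QG(\mathcal Y)$. Since $QG$ vanishes identically on $\mathrm{range}(\Pi)$ by the first step, and $QG$ is locally Lipschitz on $\mathcal M_r$ under the smoothness hypotheses on $\mathcal L$ (Assumption~\ref{ass_2}) --- extending $G$ to an ambient neighborhood via a local smooth SVD so that the Lipschitz estimate is literal, which is legitimate precisely because the block computation above needs no rank assumption --- one obtains $\norm{\dot e(t)}\le C\,\norm{e(t)}$ near $t=0$, whence $e\equiv 0$ by Gronwall's inequality; a continuation argument propagates this along the whole interval of existence, giving $\mathcal Y(t)\in\mathrm{range}(\Pi)$ for all $t\ge 0$.

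The main obstacle is the bookkeeping in the first step: recognizing that the low-rank bases of a block-structured matrix split along the neuron groups, so that the tangent projector --- which a priori couples rows and columns globally --- in fact acts blockwise. Once that is in place the rest is elementary, since block-diagonal matrices preserve the block-bidiagonal pattern, and the invariance conclusion is routine ODE theory. The only remaining technical care is making the Gronwall step rigorous at points where $\Pi\mathcal Y$ could drop rank, which is why I would phrase it through an ambient smooth extension of $G$ rather than working intrinsically on $\mathcal M_r$; this also dovetails with the companion claim of the proposition that the per-layer application of \Cref{alg_efficient_TDLRT} coincides with applying it to the augmented system and yields the global truncation rule.
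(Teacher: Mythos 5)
Your proposal is correct and takes essentially the same route as the paper's proof: reduce to showing $P(\mathcal Y)$ maps $\mathrm{range}(\Pi)$ into itself when $\mathcal Y\in\mathcal M_r\cap\mathrm{range}(\Pi)$, then use that the SVD factors of a block super-diagonal $\mathcal Y$ split along neuron groups so that $UU^\top$ and $VV^\top$ are block-diagonal, whence $P(\mathcal Y)Z=UU^\top Z + ZVV^\top - UU^\top Z VV^\top$ preserves the block sparsity pattern. The one place you go beyond what the paper writes down is the last step: the paper stops at ``the right-hand side lies in $\mathrm{range}(\Pi)$, hence the trajectory stays there,'' whereas you make the invariance conclusion rigorous by setting $e=Q\mathcal Y$, observing $\dot e = QG(\mathcal Y)$ vanishes on $\mathrm{range}(\Pi)$, and closing with a Lipschitz/Gronwall argument together with the remark that the block computation is rank-independent so the Lipschitz extension is legitimate near rank-deficient points. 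That added care is a genuine improvement in rigor but not a different argument.
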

\begin{proof}
    It is necessary and sufficient to prove that $P(\mathcal Y(t))\Pi\nabla \mathcal L(\mathcal Y(t)) \in range(\Pi)$ for all $t \geq 0$, i.e. that $\Pi P(\mathcal Y(t))\Pi\nabla \mathcal L(\mathcal Y(t)) = P(\mathcal Y(t))\Pi\nabla \mathcal L(\mathcal Y(t))$. The key to prove this is to observe that for $ Z \in range(\Pi)$, we have $P(\mathcal Y) Z \in range(\Pi)$. In fact, given $Z \in range(\Pi)$, we can write a SVD of $Z$ as
    \[
    Z = \left[
\begin{array}{c|c|c|c|c}
 0 & U_1 & 0   & \cdots & 0 \\
\hline
0 &  0   & U_2 & \cdots & 0 \\
\hline
\vdots & \vdots & \vdots & \ddots & \vdots \\
\hline
0 & 0   & 0   &  0 & U_L \\
\hline
I & 0   & 0   & \cdots &  0
\end{array}
\right]
\left[
\begin{array}{c|c|c|c|c}
 0 & 0 & 0   & \cdots & 0 \\
\hline
0 &  S_1   & 0 & \cdots & 0 \\
\hline
\vdots & \vdots & \vdots & \ddots & \vdots \\
\hline
0 & 0   & 0   &  S_{L-1} & 0  \\
\hline
0 & 0   & 0   & \cdots &  S_L
\end{array}
\right]
\left[
\begin{array}{c|c|c|c|c}
 I & 0 & 0   & \cdots & 0 \\
\hline
0 &  V_1^\top   & 0 & \cdots & 0 \\
\hline
\vdots & \vdots & \vdots & \ddots & \vdots \\
\hline
0 & 0   & 0   &  V_{L-1}^\top & 0  \\
\hline
0 & 0   & 0   & \cdots &  V_L^\top
\end{array}
\right]
    \]
\end{proof}
and we have $UU^\top,VV^\top \in range(\Pi)$. Thus, by direct calculation we can show that $UU^\top Z,ZVV^\top, UU^\top Z VV^\top \in range(\Pi)$ and thus $P(\mathcal Y)Z = UU^\top Z + Z VV^\top -UU^\top Z VV^\top \in range(\Pi)$. Since $\Pi \nabla \mathcal L(\mathcal Y(t)) \in range(\Pi)$ by construction for all $t \geq 0$, we get the desider result.
Thanks to this last proposition, following again the line of work in \citep{KochLubich07}, it is possible to restrict the parameterization in the tangent space to a block-structured one as in \Cref{prop:structure_preservation}. In this way, we get the following coherence theorem:
\begin{proposition}
    Consider the gradient flow with sparse initial condition
\[
U = \dot {\mathcal Y} = -P(\mathcal Y)\Pi\nabla \mathcal L(\mathcal Y), \,\,\, \mathcal Y(0) = \mathcal Y_0 \in range(\Pi)
\]
Consider now the parametrization $\mathcal Y = U S V^\top$ with
\[
U = \left[
\begin{array}{c|c|c|c|c}
 0 & U_1 & 0   & \cdots & 0 \\
\hline
0 &  0   & U_2 & \cdots & 0 \\
\hline
\vdots & \vdots & \vdots & \ddots & \vdots \\
\hline
0 & 0   & 0   &  0 & U_L \\
\hline
I & 0   & 0   & \cdots &  0
\end{array}
\right], S= \left[
\begin{array}{c|c|c|c|c}
 0 & 0 & 0   & \cdots & 0 \\
\hline
0 &  S_1   & 0 & \cdots & 0 \\
\hline
\vdots & \vdots & \vdots & \ddots & \vdots \\
\hline
0 & 0   & 0   &  S_{L-1} & 0  \\
\hline
0 & 0   & 0   & \cdots &  S_L
\end{array}
\right], V = \left[
\begin{array}{c|c|c|c|c}
 I & 0 & 0   & \cdots & 0 \\
\hline
0 &  V_1   & 0 & \cdots & 0 \\
\hline
\vdots & \vdots & \vdots & \ddots & \vdots \\
\hline
0 & 0   & 0   &  V_{L-1} & 0  \\
\hline
0 & 0   & 0   & \cdots &  V_L
\end{array}
\right]
\]
where $U_i^\top U_i = I,V_i^\top V_i = I$. Then, by imposing the Gauge conditions $\dot{U_i}^\top U_i = 0, \dot{V_i}^\top V_i = 0$, the projected flow $\dot{\mathcal Y} = -P(\mathcal Y)\Pi \nabla \mathcal L (\mathcal Y)$ can be rewritten in block fashion as follows:
\[
\begin{aligned}
\dot S_i(t) &= -U_i^\top(t)\nabla_{Y_i}\mathcal{L}(U(t) S(t)V(t)^\top)V_i(t), \\
\dot U_i(t) &= -\left(I - P_{U_i(t)}\right)\nabla_{Y_i}\mathcal{L}(U(t)S(t)V(t)^\top) V_i(t)S_i(t)^{-1}, \\
\dot V_i(t) &= -\left(I - P_{V_i(t)}\right)\nabla_{Y_i}\mathcal{L}(U(t)S(t)V(t)^\top) U_i(t)S_i(t)^{-\top},\,\,\,\, i = 1,\dots,L
\end{aligned}
\]
\end{proposition}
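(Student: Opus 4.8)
The statement is the block-structured specialization of the classical Koch--Lubich parametrization of a curve on the low-rank manifold, so the plan is to (i) reduce to that setting using the preceding proposition, then (ii) carry out the standard tangent-space computation while tracking the bordered-block pattern.

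\textbf{Step 1: reduction via structure preservation.} By the preceding proposition, the flow $\dot{\mathcal Y}=-P(\mathcal Y)\Pi\nabla\mathcal L(\mathcal Y)$ with $\mathcal Y(0)\in\mathrm{range}(\Pi)$ satisfies $\mathcal Y(t)\in\mathrm{range}(\Pi)$ for all $t$, and, being the projected flow onto $\mathcal{M}_r$, it also stays on $\mathcal{M}_r$; hence $\mathcal Y(t)$ admits a factorization of the stated bordered-block form $\mathcal Y=USV^\top$ with $U_i^\top U_i=I$, $V_i^\top V_i=I$. I would first record the elementary block-algebra facts: by the prescribed shapes $U^\top U$ and $V^\top V$ collapse to identities, $USV^\top$ reproduces the superdiagonal adjacency with blocks $Y_i=U_iS_iV_i^\top$, and $G:=\Pi\nabla\mathcal L(\mathcal Y)=\mathcal M\odot\nabla\mathcal L(\mathcal Y)$ inherits the same sparsity pattern, its $(i,i{+}1)$-block equal to $\nabla_{Y_i}\mathcal L$ under the identification $\mathcal L(\mathcal Y)=\mathcal L(Y_1,\dots,Y_L)$.

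\textbf{Step 2: tangent-space matching.} Differentiating $\mathcal Y=USV^\top$ gives $\dot{\mathcal Y}=\dot{U}SV^\top+U\dot{S}V^\top+US\dot{V}^\top$. Imposing the Gauge conditions $\dot{U_i}^\top U_i=0$, $\dot{V_i}^\top V_i=0$ — equivalently $U^\top\dot{U}=0$ and $\dot{V}^\top V=0$, since the identity borders are constant in time — and using $P(\mathcal Y)Z=UU^\top Z+ZVV^\top-UU^\top ZVV^\top$, I would isolate the three factors by suitably projecting the identity $\dot{\mathcal Y}=-P(\mathcal Y)G$:
\[
\dot{S}=-U^\top G V,\qquad \dot{U}S=-(I-UU^\top)GV,\qquad S\dot{V}^\top=-U^\top G(I-VV^\top),
\]
obtained respectively by multiplying on the left by $U^\top$ and on the right by $V$; on the right by $V$ and substituting the first identity; and on the left by $U^\top$ and substituting the first identity. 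Dividing the last two relations by $S$, which is meaningful only blockwise, then yields the $\dot{U}$ and $\dot{V}$ equations.

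\textbf{Step 3: blockwise decoupling.} Since $U$, $S$, $V$, $G$ all share the bordered-block pattern, the products $U^\top GV$, $(I-UU^\top)GV$, $U^\top G(I-VV^\top)$ split into independent per-layer pieces: the $i$-th block of $U^\top GV$ is $U_i^\top\nabla_{Y_i}\mathcal L\,V_i$, with all cross-block contributions vanishing by the mutual orthogonality of the supports of distinct blocks, and the inverse of $S$ enters only through the blocks $S_i^{-1}$, which exist because each $S_i$ is full rank. The identity-border row and column contribute only constant, zero-derivative factors, consistent with the Gauge choice. Collecting terms produces exactly the claimed block system for $\dot{S_i},\dot{U_i},\dot{V_i}$. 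The genuine obstacle here is bookkeeping rather than analysis: one must check that the bordered-block structure is compatible with the tangent-space projector, so that $P(\mathcal Y)$ restricted to $\mathrm{range}(\Pi)$ acts block-diagonally and the decoupling is exact, and that the Gauge conditions can be imposed simultaneously with the block constraint. This last point is exactly what the preceding proposition secures — the reduced dynamics never leaves the block-structured low-rank manifold — so the standard dynamical low-rank parametrization argument of \citep{KochLubich07} applies verbatim to each block.
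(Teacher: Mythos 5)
Your proof is correct and takes essentially the same route as the paper's: reduce to the block-parametrized manifold via the structure-preservation lemma, impose the Gauge conditions, and carry out the Koch--Lubich derivation blockwise. The only cosmetic difference is that the paper isolates the three factor equations from the Galerkin (variational) condition with block-structured test functions, whereas you do it by applying the explicit projector formula and eliminating via left/right multiplication by $U^\top$ and $V$ — two equivalent presentations of the same computation.
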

\begin{proof}
    Thanks to the previous proposition, we know that the variation $ P(\mathcal Y)\Pi \nabla \mathcal L(\mathcal Y) \in range(\Pi)$ for all $t\geq 0$. Then, we have $\mathcal Y(t) \in range(\Pi)$ for all times, and thus we can decompose it using a block SVD as described in the statement of the proposition. Moreover, by the self-adjointness of $\Pi$, Galerkin condition can be written as:
    \[
    \langle \dot{\mathcal Y} + \nabla \mathcal L(\mathcal Y),q \rangle=\langle \dot U SV^\top + U\dot S V^\top +US\dot V^\top + \nabla \mathcal L(\mathcal Y),q \rangle = 0,\,\,\, \forall q \in T_{\mathcal Y} \mathcal M_r \cap range(\Pi)
    \]
    Since $q \in T_{\mathcal Y} \mathcal M_r \cap range(\Pi)$, we can represent it as $q = \delta U S V^\top + U\delta S V^\top +US\delta V^\top$, with $\delta U,\delta V,\delta S$ with the same block structure of $U,S$ and $V$. By writing the last conditions on a basis of $T_{\mathcal Y} \mathcal M_r \cap range(\Pi)$, we get
    \[
    \begin{aligned}
        & \langle \dot U SV^\top + U\dot S V^\top +US\dot V^\top + \nabla \mathcal L(\mathcal Y),\delta U SV^\top \rangle = 0 \\
        & \langle \dot U SV^\top + U\dot S V^\top +US\dot V^\top + \nabla \mathcal L(\mathcal Y),U \delta SV^\top \rangle = 0 \\
        & \langle \dot U SV^\top + U\dot S V^\top +US\dot V^\top + \nabla \mathcal L(\mathcal Y),U S\delta V^\top \rangle = 0 \\
    \end{aligned}
    \]
    Thanks to the Gauge conditions $\dot U^\top U = 0, \dot V^\top V = 0$ and to the properties of the Frobenius inner product, the last system becomes
    \[
    \begin{aligned}
        & \langle \dot U SS^\top + \nabla \mathcal L(\mathcal Y) V S^\top ,\delta U \rangle = 0 \\
        & \langle  U^\top U\dot S V^\top V + U^\top \nabla \mathcal L(\mathcal Y)V,\delta S \rangle = \langle  \dot S  + U^\top \nabla \mathcal L(\mathcal Y)V,\delta S \rangle = 0 \\
        & \langle S^\top S\dot V^\top + S^\top U^\top\nabla \mathcal L(\mathcal Y), \delta V^\top \rangle = 0 \\
    \end{aligned}
    \]
    and from this equations we get the known
    \[
    \begin{aligned}
        & \dot U  = - (I-UU^\top)\nabla \mathcal L(\mathcal Y) V S^{-1} \\
        & \dot S   = - U^\top \nabla \mathcal L(\mathcal Y)V \\
        & \dot V = -(I- VV^\top)\nabla \mathcal L(\mathcal Y)^\top U S^{-\top}  \\
    \end{aligned}
    \]
    By writing this equations block-by-block, we get the desidered result.
\end{proof}
This last proposition clarifies how to connect the single matrix setting with the multi-matrix setting, showing that the presentation of \Cref{alg_efficient_TDLRT} is in fact coherent with the single matrix setting. Moreover, investigation of this setting leads naturally to the global truncation strategy.

\subsection{Truncation strategy}
The global truncation strategy proposed in the main manuscript is in fact coherent with the single matrix formulation presented in the previous section. In fact, one can assemble the rank augmented $S$ matrix as:
\[
\widehat S(t = 1) =  \left[
\begin{array}{c|c|c|c|c}
\widehat S_1 & 0 & 0   & \cdots & 0 \\
\hline
0 &\widehat  S_2   & 0 & \cdots & 0 \\
\hline
\vdots & \vdots & \vdots & \ddots & \vdots \\
\hline
0 & 0   & 0   & \widehat S_{L-1} & 0 \\
\hline
0 & 0   & 0   & \cdots & \widehat S_L
\end{array}
\right]
\]
and then truncate the smallest singular values up to required precision. This can be efficiently done by computing an SVD on each diagonal block, giving effectively an SVD of the global matrix. In particular, if $\widehat S_i = P_i \Sigma_i Q_i^\top$ we get that
\[
\widehat S  = blockdiag(P_1,\dots,P_L)blockdiag(\Sigma_1,\dots,\Sigma_L)blockdiag(Q_1,\dots,Q_L)^\top
\]
Since the matrix $blockdiag(\Sigma_1,\dots,\Sigma_L)$ is effectively diagonal, by assuming the diagonal is increasingly ordered, it is natural to globally truncate the ranks according to the minimal $k$ such that
\[
\frac{\sum_{i=k+1}^{2rL} \sigma_{i}^2}{ \sum_{i=1}^{2rL} \sigma_{i}^2}  < \frac{\tau}{1-\tau}
\]
Which corresponds in throwing away the smallest singular values of $\widehat S$ until we reach the desired relative error. By rewriting this criterion on the singular values of each matrix $\widehat S_i$, we get exactly the global criterion proposed in \Cref{sec:method}.

\section{Optimality on the low-rank manifold}\label{app:local_optimality}
We remark below that if $W_\star$ is a local minimum, then $P(W_\star)\nabla \mathcal L(W_\star) = 0$. In particular, since $P(W)\nabla \mathcal L(W)$ is the Riemannian gradient with respect to the ambient metric, then the following holds by definition of the gradient:
\[
\partial_{\delta W} \mathcal L(W) = \langle P(W)\nabla \mathcal L(W),\delta W \rangle 
\]
where $\partial_{\delta W} \mathcal L(W)$ is the directional derivative of $\mathcal L$ along the direction $\delta W$. Thus, $P(W_\star)\nabla \mathcal L(W_\star) = 0$ if and only if $\partial_{\delta W}\mathcal L(W) = 0$ for all $\delta W \in T_{W}\mathcal M$ and this happens if and only if $\nabla \mathcal L(W) \in (T_{W}\mathcal M)^{\perp}$. So geometrically, if $W_\star$ is a local minimum, then $P(W_\star)\nabla \mathcal L(W_\star) = 0$ means that among all available directions, there are none that decrease the loss.\\
For simultaneous descent, the same condition doesn't hold, in fact, the algorithm's stationary points satisfy $\widehat P(W_\star)\nabla \mathcal L(W_\star) = 0$, which given the non-orthogonality does not, in general, imply $P(W_\star)\nabla \mathcal L(W_\star) = 0$, so there could be descent directions unexploited by the method.

\end{document}